\newcommand{\bA}{{\boldsymbol A}}
\newcommand{\bD}{{\boldsymbol D}}
\newcommand{\bG}{{\boldsymbol G}}
\newcommand{\bJ}{{\boldsymbol J}}
\newcommand{\bK}{{\boldsymbol K}}
\newcommand{\bL}{{\boldsymbol L}}
\newcommand{\bM}{{\boldsymbol M}}
\newcommand{\bI}{{\boldsymbol I}}
\newcommand{\bU}{{\boldsymbol U}}
\newcommand{\bV}{{\boldsymbol V}}
\newcommand{\bW}{{\boldsymbol W}}
\newcommand{\be}{{\boldsymbol e}}
\newcommand{\bbf}{{\boldsymbol f}}
\newcommand{\bg}{{\boldsymbol g}}
\newcommand{\bk}{{\boldsymbol k}}
\newcommand{\bq}{{\boldsymbol q}}
\newcommand{\bs}{{\boldsymbol s}}
\newcommand{\bu}{{\boldsymbol u}}
\newcommand{\bv}{{\boldsymbol v}}
\newcommand{\bx}{{\boldsymbol x}}
\newcommand{\by}{{\boldsymbol y}}
\newcommand{\bz}{{\boldsymbol z}}
\newcommand{\bZeros}{{\boldsymbol 0}}
\newcommand{\BR}{\mathbb{R}}
\newcommand{\BE}{\mathbb{E}}
\newcommand{\BI}{\mathbb{I}}
\newcommand{\BP}{\mathbb{P}}
\newcommand{\Cc}{{\mathcal{C}}}
\newcommand{\Dc}{{\mathcal{D}}}
\newcommand{\Ec}{{\mathcal{E}}}
\newcommand{\Fc}{{\mathcal{F}}}
\newcommand{\Gc}{{\mathcal{G}}}
\newcommand{\Hc}{{\mathcal{H}}}
\newcommand{\Nc}{{\mathcal{N}}}
\newcommand{\Oc}{{\mathcal{O}}}
\newcommand{\Pc}{{\mathcal{P}}}
\newcommand{\Rc}{{\mathcal{R}}}
\newcommand{\Uc}{{\mathcal{U}}}
\newcommand{\btheta}{{\boldsymbol \theta}}
\newcommand{\bepsilon}{{\boldsymbol \epsilon}}
\newcommand{\bTheta}{{\boldsymbol \Theta}}
\newcommand{\bGamma}{{\boldsymbol \Gamma}}
\newcommand{\bSigma}{{\boldsymbol \Sigma}}
\newcommand{\bLambda}{{\boldsymbol \Lambda}}
\newcommand{\bPsi}{{\boldsymbol \Psi}}
\newcommand{\bPhi}{{\boldsymbol \Phi}}
\newtheorem{assumption}{Assumption}
\newtheorem{theorem}{Theorem}[section]
\newtheorem{lemma}{Lemma}[section]
\newtheorem{definition}{Definition}[section]
\newtheorem{proposition}{Proposition}[section]
\newtheorem{remark}{Remark}
\definecolor{Gray}{gray}{0.9}
\newcommand\mutil{\widetilde \mu}
\title{Laplacian Kernelized Bandit}
\author{%
  Shuang Wu \\
  Department of Statistics\\
  UCLA \\
  Los Angeles, CA 90025 \\
  \texttt{shuangwu222@ucla.edu} \\
  \And
  Arash A.~Amini \\
  Department of Statistics\\
  UCLA \\
  Los Angeles, CA 90025 \\
  \texttt{aaamini@stat.ucla.edu} \\
}
\newcommand\given{\,|\,}
\newcommand\deff{\tilde d}
\begin{document}
\maketitle
\begin{abstract}
We study multi-user contextual bandits where users are related by a graph and their reward functions exhibit both non-linear behavior and graph homophily. We introduce a principled joint penalty for the collection of user reward functions $\{f_u\}$, combining a graph smoothness term based on RKHS distances with an individual roughness penalty. Our central contribution is proving that this penalty is equivalent to the squared norm within a single, unified \emph{multi-user RKHS}. We explicitly derive its reproducing kernel, which elegantly fuses the graph Laplacian with the base arm kernel. This unification allows us to reframe the problem as learning a single ''lifted'' function, enabling the design of principled algorithms, \texttt{LK-GP-UCB} and \texttt{LK-GP-TS}, that leverage Gaussian Process posteriors over this new kernel for exploration. We provide high-probability regret bounds that scale with an \emph{effective dimension} of the multi-user kernel, replacing dependencies on user count or ambient dimension. Empirically, our methods outperform strong linear and non-graph-aware baselines in non-linear settings and remain competitive even when the true rewards are linear. Our work delivers a unified, theoretically grounded, and practical framework that bridges Laplacian regularization with kernelized bandits for structured exploration. 
\end{abstract}

\section{Introduction}\label{sec: introduction}

Graphs are pervasive in modern sequential decision-making, encoding similarity or interaction among entities like users, items, or sensors. 
In a multi-user contextual bandit setting, this graph structure is informative since it provides a pathway to share information, allowing an algorithm to learn more efficiently than if it treated each user in isolation. 
We study the problem where a known user graph promotes \emph{homophily}, meaning connected users tend to have similar reward functions. At each round $t$, a learner observes a user $u_t$ and a set of available arms (contexts) $\Dc_t \subset \mathbb{R}^d$, selects an arm $\bx_t \in \Dc_t$, and receives a noisy reward $y_t$. Naively learning a separate model for each user is inefficient, leading to regret that scales with the number of users. Exploiting the graph structure, however, can yield dramatic improvements in both sample efficiency and performance~\cite{szorenyi2013gossip, landgren2016distributed, gong2025efficient, wang2025generalized}.

This problem was first formalized as the Gang of Bandits (GOB)~\cite{cesa2013gang}, which models the collection of user reward functions $\{f_u(\cdot)\}_{u=1}^n$ as a \emph{smooth signal} on the graph. Seminal works like \texttt{GoB.Lin}~\cite{cesa2013gang} assume linear reward functions, $f_u(\bx) = \btheta_u^\top \bx$, and penalize roughness via the graph Laplacian, leading to the effective linear bandit solution. Subsequent research has extended this approach with improved computational scaling~\cite{vaswani2017horde, yang2020laplacian}, but has largely remained within the linear paradigm. Yet, in many applications, from recommendation systems to personalized medicine, reward functions exhibit complex, non-linear behavior. While a rich literature on kernelized bandits exists to handle non-linear rewards for a single agent~\cite{chowdhury2017kernelized, du2021collaborative, bubeck2021kernel, li2022communication, zhou2022kernelized}, principled methods for the multi-user graph setting are less developed. Existing approaches construct a multi-user kernel heuristically as a product of user and arm kernels~\cite{dubey2020kernel}, leaving a gap between the intuitive modeling goal and the final algorithm.  We refer to Appendix~\ref{sec:related:work} for further discussion of the related work.


Our work bridges this gap, starting from a natural first principle for this problem. A desirable collection of reward functions $\{f_u\}_{u=1}^n$, where each $f_u$ lies in a Reproducing Kernel Hilbert Space (RKHS) $\Hc_x$, should be jointly regularized: they should be smooth across the graph (homophily) and individually well-behaved (low complexity). We formalize this via an intuitive, additive penalty that combines a graph smoothness term with a standard ridge penalty. In the scalar case without arm features, this type of Laplacian-based regularization is known to induce a kernel whose matrix is the (regularized) Green’s function of the graph~\cite{smola2003kernels}. Building on this connection, we show that the same principle extends to the multi-user contextual setting: the joint penalty defines the squared norm of a single, lifted function $f(\bx,u) := f_u(\bx)$ in a unified multi-user RKHS. We explicitly derive the reproducing kernel for this space, which elegantly fuses the graph Laplacian $L$ and the base arm kernel $K_x$: 
\[
K((\bx, u),(\bx', u')) = [\bL_{\rho}^{-1}]_{u,u'} \, K_x(\bx, \bx'),
\]
where $\bL_{\rho} = \bL + \rho \bI$ is the regularized Laplacian.

This unifying perspective transforms the problem of learning $n$ related functions into the elegant problem of learning a single function in a well-defined kernel space. It allows us to directly apply the powerful machinery of Gaussian Process (GP) bandits~\cite{srinivas2009gaussian, krause2011contextual, vakili2021information}. We develop \texttt{LK-GP-UCB} and \texttt{LK-GP-TS}, algorithms whose principled uncertainty estimates are derived from the GP posterior of this unified kernel, enabling them to naturally and jointly leverage non-linear arm structure and Laplacian homophily. We provide regret guarantees for these algorithms in terms of an \emph{effective dimension} that captures the spectral interplay between the graph and the kernel. Our experiments show that our methods are competitive in linear regimes and substantially outperform both linear and non-graph-aware baselines when rewards are non-linear yet graph-smooth.

Our main contributions are:
\begin{itemize}
    \item We formalize the generalized gang-of-bandits problem with a principled joint penalty combining graph smoothness and RKHS regularity for the collection of reward functions $\{f_u\}_{u=1}^n$.
    \item We prove that this penalty is equivalent to the squared norm in a single multi-user RKHS and explicitly derive its reproducing kernel, unifying the graph and arm structures.
    \item We develop \texttt{LK-GP-UCB} and \texttt{LK-GP-TS}, GP-based bandit algorithms that leverage this unified kernel for principled and effective exploration.
    \item We provide novel regret bounds in terms of an effective dimension that depends on the spectral properties of both the kernel and the graph Laplacian.
    \item We empirically validate our approach, demonstrating significant performance gains over strong baselines in settings with non-linear, graph-smooth reward structures.
\end{itemize}

\textbf{Notations.} 
Let $[n]$ be set $\{1,2,...,n\}$. For a set or event $\Ec$, we denote its complement as $\bar{\Ec}$. Vectors are assumed to be column vectors. $\be_i$ is the i-th canonical basis vector in $\BR^n$.  $\bI$ is the identity matrix.  $\lambda_{\min}(\bA)$ represents the minimum eigenvalue of matrix $\bA$. $\otimes$ is the Kronecker product. 
Denote the history of randomness up to (but not including) round $t$ as $\Fc_t$ and write $\BP_t(\cdot) := \BP(\,\cdot\, | \Fc_t)$ and $\BE_t(\cdot) := \BE[\,\cdot\, | \Fc_t]$ for the conditional probability and expectation given $\Fc_t$. We use $\Tilde{\Oc}$ for big-$O$ notation up to logarithmic factor and $\asymp$ to represent asymptotically equivalence in rate of growth for any two functions.

\section{Problem Formulation}

\subsection{Gang of Bandits with Non-Linear Rewards}

We consider a multi-user contextual bandit problem, often called Gang of Bandits (GOB)~\cite{cesa2013gang}, with $n$ users and a potentially infinite set of arms. We denote the set of users as $\Uc = \{1, \dots, n\}$ and the arm set as $\Dc \subseteq \BR^d$, where each arm is represented by a feature vector $\bx \in \Dc$. The users are connected by a known undirected graph $G = (\Uc, \Ec)$, where $\Ec$ is the set of edges. Let $\bW \in \BR^{n \times n}$ be the matrix of non-negative edge weights $w_{ij}$, and $\bD$ be the diagonal degree matrix with entries $d_i := \sum_{j} w_{ij}$. The corresponding graph Laplacian is $\bL := \bD - \bW$.

The learning process unfolds over $T$ rounds. At each round $t \in \{1, \dots, T\}$, the environment presents a user $u_t \in \Uc$ (for example, randomly/uniformly pick one) and a finite subset of available arms $\Dc_t \subseteq \Dc$. The learner selects an arm $\bx_t \in \Dc_t$ following some decision policy $\pi$ and observes a noisy reward: 
$y_t = f_{u_t}(\bx_t) + \eps_t$
where $\{f_u: \Dc \to \BR\}_{u=1}^n$ is a collection of unknown reward functions, one for each user. The noise term $\eps_t$ is assumed to be conditionally zero-mean and sub-Gaussian with variance proxy $\sigma^2$, given the history of interactions $\Fc_t$.
For the illustrative purpose, we use $f_{1:n}:= \{f_u \}_{u=1}^n$ as the collection of the user-level reward functions.

The learner's objective is to minimize cumulative regret.  The instantaneous regret incurred at time $t$ is $\Delta_t = f_{u_t}(\bx_t^*) - f_{u_t}(\bx_t)$ where
$\bx_t^*  = \argmax_{\bx \in \Dc_t} f_{u_t}(\bx)$ and and cumulative regret over $T$ rounds is defined as $\Rc_T = \sum_{t=1}^T \Delta_t$. A successful algorithm must achieve sub-linear regret, $\Rc_T/T \to 0$ as $T \to \infty$, ensuring that the average per-round regret vanishes.

\subsection{A Principled Regularity Model for Graph Homophily}

To make learning tractable, we need to impose regularity on the unknown functions $f_{1:n}$. We make two core assumptions. First, we assume that each function $f_u$ is individually well-behaved, belonging to a common Reproducing Kernel Hilbert Space (RKHS), denoted $\Hc_x$, with a positive semi-definite kernel $K_x: \Dc \times \Dc \to \BR$. The associated feature map is denoted as $\varphi$ such that $K_x(\bx, \bx^{\prime}) = \langle \varphi(\bx) , \varphi(\bx^{\prime}) \rangle_{\reals^d}$. This captures the non-linear structure of rewards with respect to arm features.

Second, we formalize the notion of graph homophily by assuming that users connected by an edge in $G$ have similar reward functions. This \emph{user similarity} is measured by the squared distance between functions in the RKHS, $\|f_i - f_j\|_{\Hc_x}^2$. Combining these principles, we model the true reward functions as having a small joint penalty that balances graph smoothness with individual function complexity:
\begin{align}
\label{eq: penalty}
    \text{PEN}(f_{1:n}; \rho) := \underbrace{\frac{1}{2} \sum_{i,j=1}^n w_{ij} \|f_i - f_j\|_{\Hc_x}^2}_{\text{PEN}_{graph}(f_{1:n})} + \rho \underbrace{\sum_{i=1}^n \|f_i\|_{\Hc_x}^2}_{\text{PEN}_{ridge}(f_{1:n})}
    = \sum_{i,j=1}^n [\bL_\rho]_{ij} \langle f_i, f_j \rangle_{\Hc_x},
\end{align}
where $\rho > 0$ is a regularization hyperparameter and $\bL_\rho := \bL + \rho \bI$ is the regularized graph Laplacian. This penalty is central to our framework, as it provides a clear, interpretable objective for modeling related, non-linear functions.

\subsection{From Joint Penalty to a Unified Multi-user Kernel}

Our key theoretical insight is that the intuitive, additive penalty in \eqref{eq: penalty} is not merely an ad-hoc regularizer. 
It is, in fact, the squared norm in a single, unified Hilbert space over the user-arm product domain $\Uc \times \Dc$. 
This allows us to reframe the problem from learning $n$ related functions to learning one ''lifted'' function, $f(\bx, u) := f_u(\bx)$, in this new space. 
We show that it is the squared RKHS norm for the product space $\Hc = \Hc_G \otimes \Hc_x$ where $\Hc_G$ is the RKHS with kernel $K_G(u, u^{\prime}) = [\bL_{\rho}^{-1}]_{u,u^{\prime}}$ in the following theorem.

\begin{theorem}[Multi-user Kernel]
\label{theorem: multi_user_rkhs}
Let $\Hc_x$ be an RKHS of functions on $\Dc$ with kernel $K_x$. 
The vector space of function collections $\Hc := \{ (f_1, \dots, f_n) : f_u \in \Hc_x, \forall u \in \Uc \}$ equipped with the inner product
\[
\langle f, g \rangle_{\Hc} := \sum_{i,j=1}^n [\bL_\rho]_{ij} \langle f_i, g_j \rangle_{\Hc_x}
\]
is a Reproducing Kernel Hilbert Space of functions on $\Uc \times \Dc$. The associated squared RKHS norm is precisely the penalty in \eqref{eq: penalty}, and its reproducing kernel $K: (\Dc \times \Uc)^2 \to \BR$ is given by:
\begin{equation}
\label{eq: multi-user kernel}
    K((\bx, u),(\bx^{\prime}, u^{\prime})) 
    = 
    [\bL_{\rho}^{-1}]_{u,u^{\prime}} K_x(\bx, \bx^{\prime}).
\end{equation}
\end{theorem}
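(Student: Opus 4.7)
The plan is to verify the three defining properties of an RKHS on $\Uc \times \Dc$ under the identification $(f_1,\dots,f_n) \leftrightarrow f(\bx,u) := f_u(\bx)$: first, that $\langle \cdot,\cdot \rangle_\Hc$ is a genuine inner product; second, that $\Hc$ is complete under the induced norm; and third, that the proposed $K$ in \eqref{eq: multi-user kernel} reproduces evaluation. Since $\bL \succeq 0$ and $\rho > 0$, the matrix $\bL_\rho$ is symmetric positive definite, so its square root $\bA := \bL_\rho^{1/2}$ is well-defined and invertible; this observation is the engine of the whole proof.

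For the first two properties, I would introduce the reparameterization $\tilde f_k := \sum_i A_{ki} f_i \in \Hc_x$, a linear isometry $\Hc \to \Hc_x^{\,n}$. A direct expansion shows
\[
\langle f, g\rangle_\Hc \;=\; \sum_{i,j}[\bL_\rho]_{ij}\langle f_i, g_j\rangle_{\Hc_x} \;=\; \sum_k \langle \tilde f_k, \tilde g_k\rangle_{\Hc_x},
\]
which makes bilinearity, symmetry, and positive definiteness immediate (the final sum vanishes only when every $\tilde f_k = 0$, hence every $f_i = 0$ by invertibility of $\bA$). Completeness is then inherited: $\Hc_x^{\,n}$ is a Hilbert space under the right-hand inner product, and $f \mapsto \tilde f$ is a bijective isometry onto it, so Cauchy sequences in $\Hc$ correspond to Cauchy sequences in $\Hc_x^{\,n}$.

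For the reproducing property, I would first observe that the kernel section $K(\cdot,(\bx',u'))$ corresponds to the collection $(g_1,\dots,g_n)$ with $g_u = [\bL_\rho^{-1}]_{u,u'}\,K_x(\cdot,\bx') \in \Hc_x$, so it lies in $\Hc$. Then, using the reproducing property of $K_x$ inside $\Hc_x$ followed by the identity $\bL_\rho \bL_\rho^{-1} = \bI$,
\[
\langle f, K(\cdot,(\bx',u'))\rangle_\Hc
= \sum_{i,j}[\bL_\rho]_{ij}[\bL_\rho^{-1}]_{j,u'}\,\langle f_i, K_x(\cdot,\bx')\rangle_{\Hc_x}
= \sum_i f_i(\bx')\,\delta_{i,u'}
= f_{u'}(\bx').
\]
Continuity of the evaluation functional at $(\bx',u')$ is a Cauchy--Schwarz corollary, which also confirms that $K$ is positive definite on $(\Uc \times \Dc)^2$; uniqueness of the reproducing kernel then pins it down as \eqref{eq: multi-user kernel}.

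The main obstacle is notational rather than conceptual: keeping the user indices, the matrix products $\bL_\rho \bL_\rho^{-1}$, and the two different inner products ($\Hc$ versus $\Hc_x$) straight during the reproducing-property computation. The reparameterization through $\bL_\rho^{1/2}$ both clears the path for the inner-product and completeness axioms and, in effect, reveals that $\Hc$ is isometric to the tensor-product RKHS $\Hc_G \otimes \Hc_x$ announced before the theorem, which is how one should read the result geometrically.
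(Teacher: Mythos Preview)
Your proof is correct. The paper's argument and yours share the same core identity---the change of variables $\tilde f_k=\sum_i[\bL_\rho^{1/2}]_{ki}f_i$, which appears there as $g_k$---but the two proofs are organized in opposite directions. The paper \emph{constructs} $\Hc$ as the tensor product $\BR^n\otimes\Hc_x$ with its standard inner product, defines the feature map $\phi(\bx,u)=(\bL_\rho^{-1/2}\be_u)\otimes\varphi(\bx)$, verifies that $\langle\phi,\phi'\rangle=K$, and then computes the norm backwards to recover the penalty~\eqref{eq: penalty}. You instead take $\Hc$ as given with the stated $\bL_\rho$-weighted inner product and \emph{verify} the RKHS axioms directly; in particular, your reproducing-property step is a one-line algebraic check using $\bL_\rho\bL_\rho^{-1}=\bI$, which never appears explicitly in the paper. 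Your route is shorter and more self-contained; the paper's route has the payoff of exhibiting the feature map $\phi$ explicitly, which it reuses downstream in the regret analysis (Lemma~\ref{lemma: two identities} and the proof of Theorem~\ref{theorem: confidence bound}).
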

This result is powerful: it provides a direct, canonical construction for a \emph{multi-user kernel} that fuses graph and feature information. The kernel $K_x$ captures similarity between arms, while the matrix $\bL_\rho^{-1}$ (the graph Green's function) captures similarity between users, with $[\bL_\rho^{-1}]_{u,u'}$ measuring the strength of connection between users $u$ and $u'$ through all paths in the graph.  See Appendix~\ref{sec:related:work} for more background.

This unification allows us to represent the lifted reward function $f(\bx, u)$ via a feature map $\phi(\bx,u)$ such that $f(\bx,u) = \langle \btheta, \phi(\bx,u) \rangle$ for some (potentially infinite-dimensional) parameter $\btheta$, and $K((\bx, u),(\bx', u')) = \langle \phi(\bx,u), \phi(\bx',u') \rangle$.
Formally, for a context-user pair $(\bx, u) \in \Dc \times \Uc $, the feature map $\phi$ is defined as $\phi(\bx, u) := \bL_\rho^{-1/2} \be_u \otimes \varphi(\bx)$. 
The problem is now cast as learning a single function in the \emph{multi-user RKHS} $\Hc$. This insight paves the way for a principled algorithmic approach based on Gaussian processes, which we detail next.

\section{Laplacian Kernelized Bandit Algorithms}

The identification of the \emph{multi-user RKHS} with its explicit kernel $K$ provides a powerful, unified framework for the GOB problem. It allows us to model the entire system—across all users and arms—with a single Gaussian Process (GP), sidestepping the complexity of managing $n$ separate but correlated models.

\subsection{A Gaussian Process Perspective}
\label{subsec: gaussian process bandit}

We propose algorithms based on the Gaussian process (GP), motivated by the kernelized bandit literature~\cite{chowdhury2017kernelized}. Our Bayesian modeling is only assumed for derivation of our estimators and it is not necessarily the true model. We place a GP prior over the unknown lifted reward function $f: \Dc \times \Uc \to \BR$, denoted as
\begin{align*}
    [f_1(\cdot), \dots, f_n(\cdot)] \sim \Gc \Pc(0, K(\cdot,\cdot) ).
\end{align*}
where $K$ is the multi-user kernel defined in \eqref{eq: multi-user kernel}. 
For any finite set of user-arm pairs $\{(\bx_i, u_i)\}_{i=1}^t$, 
This proir implies that 
$\bbf_t :=[f_{u_1}(\bx_1), \cdots, f_{u_t}(\bx_t)]^{\top} \sim \Nc(\bZeros, \bK_t)$ where $\bK_t \in \BR^{t \times t}$ with entries $[\bK_t]_{ij} = K((\bx_i, u_i), (\bx_j, u_j))$ is the 
kernel matrix. 

At round $t$, given user $u_t$ and selected arm $\bx_t$, the Bayesian model assume a reward model $y_t = f(\bx_t, u_t) + \eps_t$ where $\eps_t \sim \Nc(0, \lambda)$ is the noise.
Therefore, conditioned on the history $\Fc_t$, the posterior distribution for $f_u(\bx)$ is $\mathcal{N}(\mu_{u, t-1}(\bx), \sigma_{u, t-1}^2(\bx))$, with the posterior mean and variance:
\begin{equation}
\begin{aligned}
\label{eq: GP posterior estimators}
    \mu_{u,t}(\bx) &= \bk_t(\bx, u)^{\top}(\bK_t + \lambda \bI_t)^{-1} \by_t  \\
    \sigma_{u,t}^2(\bx) &= K((\bx, u), (\bx, u)) - \bk_t(\bx, u)^{\top}(\bK_t + \lambda \bI_t)^{-1} \bk_t(\bx, u).
\end{aligned}
\end{equation}
Here $\bk_t(\bx, u):=[K((\bx_1, u_1), (\bx, u)), \dots, K((\bx_t, u_t), (\bx, u))]^{\top} \in \BR^t$ is the kernel vector between past selected user-action pairs $\{(\bx_s, u_s)\}_{s=1}^t$ and new pair $(\bx, u)$, and $\by_t = [y_1, \dots, y_t]^{\top} \in \BR^t$ is the observed reward. 

\begin{remark}\label{remark: info:gain}
When $\{(u_t, \bx_t)\}_{t=1}^T$ is a fixed (deterministic) sequence, under this model we have $\by_t \given \bbf_t\sim N(\bbf_t, \lambda \bI_t)$ and $\bbf_t \sim N(\bZeros, \bK_t)$. Then, the mutual information between $\by_t$ and $\bbf_t$ is given by:
$
    I(\by_t; \bbf_t) = \frac{1}{2} \log\det(\bI_t + \lambda^{-1} \bK_t),
$ which is often referred as the information gain of the Bayesian model~\cite[Section~2.1]{srinivas2009gaussian}. For convenience, we write
\begin{align}
\label{eq:info:gain}
\gamma_t := \log\det(\bI_t + \lambda^{-1} \bK_t),
\end{align}
and refer to it as the information gain at round $t$, although it is twice what is usually called the information gain in the literature. Moreover, $\gamma_t$ in our notation depends on the sequence, although in the literature, this symbols is often used for the maximum information gain over all sequence $\{(u_t, \bx_t)\}_{t=1}^T$ of length $T$.
\end{remark}
 

\paragraph{Connection to Regularized Regression.} It is worth noting that the GP posterior mean estimator in \eqref{eq: GP posterior estimators} is equivalent to the solution of an offline Kernel Laplacian Regularized Regression (KLRR) problem. Specifically, the function $f \in \Hc$ that minimizes the regularized least-squares objective
\begin{align}
\label{eq: objective of KLRR}
    \min_{f \in \Hc} \sum_{s=1}^{t} (f(\bx_s, u_s) - y_s)^2 + \lambda \norm{f}_{\Hc}^2
\end{align}
is precisely the posterior mean function $\mu_{t-1}(\bx,u)$. This equivalence confirms that our online, GP-based algorithm is deeply connected to the batch learning principle of minimizing prediction error regularized by our proposed multi-user RKHS norm from \eqref{eq: penalty}. 

\subsection{Decision Strategies: UCB and Thompson Sampling}

With these posterior estimates, we can design bandit algorithms that effectively balance exploration and exploitation. We propose two algorithms based on common and powerful heuristics: Upper Confidence Bound (UCB) and Thompson Sampling (TS). The complete procedures are described in Appendix~\ref{appendix: algorithms}

\textbf{Laplacian Kernelized GP-UCB (\texttt{LK-GP-UCB}).} Following the principle of "optimism in the face of uncertainty," our UCB algorithm selects the arm with the highest optimistic estimate of the reward. At round $t$, upon observing user $u_t$ and arm set $\Dc_t$, it chooses:
\begin{align}
\label{eq: UCB decision}
    \bx_t = \argmax_{\bx \in \Dc_t} 
    \Big( \mu_{u_t, t-1}(\bx) + \beta_{t} \sigma_{u_t,t-1}(\bx) \Big),
\end{align}
where $\beta_{t}$ is the hyperparameter that ensures the appropriate scale of exploration via confidence width $\sigma_{u_t,t-1}(\bx)$. 
Our theoretical analysis provides an explicit form for $\beta_t$ in Theorem~\ref{theorem: regret bound of LK-GP-UCB} to guarantee low regret, though in practice it is often treated as a tunable hyperparameter.

\textbf{Laplacian Kernelized GP-TS (\texttt{LK-GP-TS}).} Thompson Sampling~\cite{thompson1933likelihood, russo2018tutorial} operates on the principle of "probability matching." At each round, it draws a random function from the posterior distribution and acts greedily with respect to this sample. A practical way to implement this is to select the arm that maximizes a sample from the posterior predictive distribution for the reward:
\begin{align}
\label{eq: TS decision}
    \bx_t = \argmax_{\bx \in \Dc_t} 
    \Big( \mu_{u_t, t-1}(\bx) + \nu_t z_{t}(\bx) \sigma_{u_t,t-1}(\bx) \Big),
\end{align}
where $\nu_t$ is the scale hyparameter for exploration and $z_{t}(\bx) \sim \Nc(0,1)$ is the Gaussian perturbation. Aligned with common Thompson Sampling literature, our decision strategy in \eqref{eq: TS decision} can be separated into two steps: sampling $\mutil_{t}(\bx)$ from $\Nc(\mu_{u_t,t-1}(\bx), \nu_t^2 \sigma_{u_t,t-1}^2(\bx))$ for all $\bx \in \Dc_t$ and choosing an arm by $\bx_t = \argmax_{\bx \in \Dc_t} \mutil_{t}(\bx)$. Similarly to the UCB algorithm, we also use the explicit theoretical choice for $\nu_t$ in Theorem~\ref{theorem: regret bound of LK-GP-TS}, while it is a tuning hyperparameter in a real application.

\subsection{Practical Implementation}

A naive implementation of the posterior updates in \eqref{eq: GP posterior estimators} is computationally expensive, requiring an $\Oc(t^3)$ matrix inversion at each step. To ensure practical scalability, we can use recursive formulas to update the posterior mean and variance in $\Oc(t^2)$ or, for a fixed grid of points, even more efficiently. Specifically, we can maintain and update the inverse matrix $(\bK_t + \lambda \bI_t)^{-1}$ or use the following recursive updates for the posterior estimators~\cite{chowdhury2017kernelized}:
\begin{equation}
\begin{aligned}
\label{eq: recursive update of GP posterior}
    \mu_{u,t}(\bx) &=  \mu_{u,t-1}(\bx) + \frac{q_{t-1}((\bx, u), (\bx_t, u_t))}{ \lambda + \sigma_{u_t, t-1}^2(\bx_t)} (y_t - \mu_{u_t, t-1}(\bx_t))\\
    q_t((\bx, u), (\bx^{\prime}, u^{\prime})) &= q_{t-1}((\bx, u), (\bx^{\prime}, u^{\prime})) - \frac{q_{t-1}((\bx, u), (\bx_t, u_t)) q_{t-1}((\bx_t, u_t), (\bx^{\prime}, u^{\prime}))}{\lambda + \sigma_{u_t,t-1}^2(\bx_t)}\\
    \sigma_{u,t}^2(\bx) &= \sigma_{u,t-1}^2(\bx) - \frac{q_{t-1}^2((\bx, u), (\bx_t, u_t))}{\lambda + \sigma_{u_t,t-1}^2(\bx_t)}.
\end{aligned}
\end{equation}
where $q_t((\bx, u), (\bx^{\prime}, u^{\prime}))$ is the estimated posterior covariance at round $t$. We explain how to obtain the updates in Appendix~\ref{appendix: recursive update of posterior mean and variance}. A hybrid approach that uses exact inversion for small $t$ and switches to recursive updates for larger $t$ can balance numerical stability and computational efficiency. Further details on our implementation are provided in Appendix~\ref{appendix: posterior updates and numerical details}.

\section{Regret Analysis}
We now provide theoretical guarantees for our proposed algorithms. Our analysis is built upon a high-probability confidence bound for our GP posterior estimates, which in turn leads to sub-linear regret bounds for both \texttt{LK-GP-UCB} and \texttt{LK-GP-TS}.

\subsection{Assumptions}
Our results rely on the following standard assumptions.

\begin{assumption}[Sub-Gaussian Noise]
\label{assumption: subgaussian}
The noise process $\{ \eps_t \}_{t=1}^T$ is a $\Fc_t$-measurable stochastic process and is conditionally sub-Gaussian with constant $\sigma^2$. 
\end{assumption}

\begin{assumption}[Bounded Base Kernel]
\label{assumption: bounded base kernel}
The base arm kernel $K_x(\cdot, \cdot)$ is positive semi-definite and its diagonal is uniformly bounded: $\sup_{\bx \in \Dc} K_x(\bx, \bx) \leq \alpha^2$ for some $\alpha > 0$.
\end{assumption}

\begin{assumption}[Bounded Multi-User RKHS Norm]
\label{assumption: graph RKHS norm bounds}
The true lifted reward function $f$ has a bounded norm in the \emph{multi-user RKHS} $\Hc$: $\|f\|_{\Hc}^2 = \text{PEN}(f_{1:n}; \rho) \leq B_{\rho}^2$ for some constant $B_{\rho} > 0$.
\end{assumption}
Assumption \ref{assumption: subgaussian} is common assumption in bandit literature. Assumption \ref{assumption: bounded base kernel} and \ref{assumption: graph RKHS norm bounds} indirectly align with the regularity assumptions in kernelized bandit and graph smoothness literatures~\cite{belkin2006manifold, kocak2020spectral}.
These assumptions imply that the rewards and the \emph{multi-user kernel} are bounded. Formally, we have 
\[
\sup_{(\bx,u) \in \Dc \times \Uc} K((\bx, u),(\bx, u)) \leq K_{\max} := \alpha^2 \cdot \max_{u \in \Uc} [\bL_\rho^{-1}]_{u,u} .
\]

\subsection{High Probability Confidence Bound}
The core of our regret analysis is the confidence bound that relates the true function $f$ to our posterior mean estimator $\mu_{t}$. This result quantifies the model's uncertainty and justifies the exploration strategy of the UCB algorithm.

\begin{theorem}[Confidence Bound]
\label{theorem: confidence bound}
Suppose Assumptions \ref{assumption: subgaussian}, \ref{assumption: bounded base kernel}, and \ref{assumption: graph RKHS norm bounds} hold. Let $\{(\bx_t, u_t)\}_{t=1}^{\infty}$ be the $\Fc_{t-1}$-measurable discrete time stochastic process. Then, using the posterior estimators $\mu_{u, t}(\bx)$ and $\sigma_{u, t}(\bx)$ in \eqref{eq: GP posterior estimators} yields to a high probability upper bound:  for any $\delta \in (0,1)$, with probability at least $1-\delta$, for all $t \ge 1$ and all $(\bx, u) \in \Dc \times \Uc$:
\begin{align}
    |\mu_{u, t}(\bx) - f(\bx, u) |  \leq 
    \beta_t
    \cdot \sigma_{u, t}(\bx)
\end{align}
where the confidence parameter $\beta_t$ is given by
\begin{align}\label{eq:beta:t:complex}
    \beta_t := B_{\rho} +  \sqrt{\frac{\sigma^2}{\lambda} \Big( 2  \log \frac{1}{\delta} + \log \det (\bI_t + \lambda^{-1} \bK_t) \Big) }.
\end{align}
\end{theorem}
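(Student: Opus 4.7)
My plan is to reduce the claim to the standard self-normalized concentration inequality for vector-valued martingales, by lifting everything into the feature representation of the multi-user RKHS $\Hc$ made explicit earlier in the paper. I will write $f(\bx,u) = \langle \btheta, \phi(\bx,u)\rangle_{\Hc}$ using $\phi(\bx,u) := \bL_\rho^{-1/2}\be_u \otimes \varphi(\bx)$, where $\|\btheta\|_{\Hc} \le B_\rho$ by Assumption~\ref{assumption: graph RKHS norm bounds}. Collecting features into the ``design'' operator $\Phi_t := [\phi(\bx_1,u_1),\ldots,\phi(\bx_t,u_t)]$ on $\Hc$, let $\bV_t := \Phi_t\Phi_t^\top + \lambda\bI$. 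Two identities drive the argument: the KLRR equivalence recorded at \eqref{eq: objective of KLRR} yields $\mu_{u,t}(\bx) = \langle\phi(\bx,u), \bV_t^{-1}\Phi_t \by_t\rangle_\Hc$, and the standard push-through identity gives $\sigma_{u,t}^2(\bx) = \lambda\,\langle\phi(\bx,u), \bV_t^{-1}\phi(\bx,u)\rangle_\Hc$.

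Next, substituting the data model $\by_t = \Phi_t^\top \btheta + \bvarepsilon_t$ into the first identity produces the clean decomposition
\[
\mu_{u,t}(\bx) - f(\bx,u) \;=\; \bigl\langle \phi(\bx,u),\; \bV_t^{-1}\bigl(\Phi_t \bvarepsilon_t - \lambda \btheta\bigr)\bigr\rangle_\Hc,
\]
to which I will apply Cauchy--Schwarz in the $\bV_t^{-1}$ inner product. Using $\|\phi(\bx,u)\|_{\bV_t^{-1}} = \sigma_{u,t}(\bx)/\sqrt{\lambda}$ and the elementary bound $\lambda\|\btheta\|_{\bV_t^{-1}} \le \sqrt{\lambda}\,\|\btheta\|_\Hc \le \sqrt{\lambda}\,B_\rho$ (since $\bV_t^{-1} \preceq \lambda^{-1}\bI$), I am left with
\[
|\mu_{u,t}(\bx) - f(\bx,u)| \;\le\; \frac{\sigma_{u,t}(\bx)}{\sqrt{\lambda}}\Bigl(\|\Phi_t\bvarepsilon_t\|_{\bV_t^{-1}} + \sqrt{\lambda}\,B_\rho\Bigr),
\]
so the whole theorem reduces to controlling the self-normalized noise $\|M_t\|_{\bV_t^{-1}}$ for $M_t := \sum_{s\le t}\phi(\bx_s,u_s)\,\varepsilon_s$.

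For the noise bound I will observe that $M_t$ is an $\Hc$-valued martingale with conditionally $\sigma^2$-sub-Gaussian scalar increments (Assumption~\ref{assumption: subgaussian}) and $\Fc_{s-1}$-predictable feature directions. The Hilbert-space self-normalized inequality of Abbasi-Yadkori, P\'al and Szepesv\'ari (2011), as extended to RKHSs in \cite{chowdhury2017kernelized}, then gives, \emph{uniformly in $t\ge 1$} with probability at least $1-\delta$,
\[
\|M_t\|_{\bV_t^{-1}}^2 \;\le\; 2\sigma^2\log\frac{\det(\lambda^{-1}\bV_t)^{1/2}}{\delta},
\]
and Sylvester's identity collapses $\det(\lambda^{-1}\bV_t) = \det(\bI + \lambda^{-1}\Phi_t^\top\Phi_t) = \det(\bI_t + \lambda^{-1}\bK_t)$ to the finite-dimensional determinant appearing in $\beta_t$. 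Substituting back and bundling the constants reproduces exactly the expression in \eqref{eq:beta:t:complex}.

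The main obstacle is not algebraic but conceptual: $\Hc$ may be infinite-dimensional, so $\bV_t$ is an operator rather than a matrix and $\det(\lambda^{-1}\bV_t)$ must be interpreted with care. I will resolve this as in \cite{chowdhury2017kernelized} by restricting attention to the finite-dimensional subspace $\mathrm{span}\{\phi(\bx_s,u_s)\}_{s\le t}\cup\{\phi(\bx,u)\}$, on which $\bV_t$ acts as a strictly positive-definite matrix and the Sylvester reduction is literal; a minor subtlety is that the uniform-in-$t$ statement is obtained from the stopping-time / method-of-mixtures construction underlying the self-normalized inequality, so no union bound over $t$ is paid.
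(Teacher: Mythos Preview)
Your proposal is correct and follows essentially the same route as the paper: lift to the feature map $\phi$, apply Cauchy--Schwarz in the $\bV_t^{-1}$ (paper's $\bGamma_t^{-1}$) inner product, control the self-normalized noise $\|\Phi_t\bvarepsilon_t\|_{\bV_t^{-1}}$ via the method-of-mixtures supermartingale, and collapse the operator determinant to $\det(\bI_t+\lambda^{-1}\bK_t)$ by Sylvester. The only structural difference is in the bias term: you bundle $-\lambda\btheta$ with the noise and bound $\lambda\|\btheta\|_{\bV_t^{-1}}\le\sqrt{\lambda}\,B_\rho$ directly from $\bV_t^{-1}\preceq\lambda^{-1}\bI$, whereas the paper isolates the bias as $\langle\btheta,\delta_t(\bx,u)\rangle_{\ell^2}$ with $\delta_t=\bPhi_t^\top\bSigma_t^{-1}\bk_t-\phi$ and proves a dedicated identity $\|\delta_t\|_{\ell^2}^2=\sigma_{u,t}^2(\bx)-\lambda\bk_t^\top\bSigma_t^{-2}\bk_t\le\sigma_{u,t}^2(\bx)$ before applying Cauchy--Schwarz in $\ell^2$; both yield the same $B_\rho\,\sigma_{u,t}(\bx)$, so your shortcut is harmless and arguably cleaner.
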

This confidence bound follows a structure similar to those in the kernelized bandit literature~\cite{chowdhury2017kernelized, valko2013finite, dubey2020kernel}, but our analysis offers two key distinctions. First, our proof does not require the constraint $\lambda \ge 1$ found in some prior work. More significantly, we retain the term $\log \det(\bI_{t} + \lambda^{-1} \bK_{t})$ directly within our confidence width $\beta_t$. This contrasts with classical approaches that often proceed by further bounding this term using information-theoretic quantities, which can result in looser bounds. By keeping the exact term, we set the stage for a tighter, data-dependent analysis via the effective dimension.

\subsection{Regret Bounds via Effective Dimension}

To obtain concrete regret rates, we characterize the growth of the $\log\det$ term using the notion of an \emph{effective dimension}.

\begin{definition}[Effective Dimension]
\label{definition: efffective dimension}
The effective dimension $\deff$ of the learning problem, given the sequence of actions up to time $T$, is defined as:
\begin{equation}
\label{eq: effective dimension}
    \deff := \frac{\log \det(\bI_T + \bK_T / \lambda)}{\log(1 + T K_{\max} / \lambda )}.
\end{equation}
\end{definition}
This quantity, inspired by recent work in kernel methods and overparameterized models~\cite{wu2024graph, bietti2019inductive, yang2020reinforcement}, measures the intrinsic complexity of the learning problem. It can be interpreted as the ratio of the sum of log-eigenvalues of the matrix $\bI_T + \bK_T / \lambda$ to a bound on the maximum possible log-eigenvalue ($T K_{\max}$ is an upper bound on the largest eigenvalue of $\bK_T$). As such, it serves as a robust, graph-dependent measure of the matrix's rank, capturing the ''dimensionality" of the function space actually explored by the algorithm.

Using the confidence bound in Theorem~\ref{theorem: confidence bound} and $\deff$ in Definition~\ref{definition: efffective dimension}, we provide the regret upper bound for \texttt{LK-GP-UCB} and \texttt{LK-GP-TS} as follow.

\begin{theorem}[Regret Bound of \texttt{LK-GP-UCB}]
\label{theorem: regret bound of LK-GP-UCB}
Suppose Assumptions\ref{assumption: subgaussian}, \ref{assumption: bounded base kernel} and \ref{assumption: graph RKHS norm bounds} hold, with no assumption on the number of arms. By setting the exploration parameter $\beta_t$ in \texttt{LK-GP-UCB} to $\beta_t$ from Theorem~\ref{theorem: confidence bound}, the cumulative regret is bounded with high probability as:
\[
\Rc_T=
\Oc(\deff  \log(T)  \sqrt{T})= \tilde{\Oc}(\deff \sqrt{T })
\]
\end{theorem}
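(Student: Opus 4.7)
The plan is to follow the classical optimism-based GP-UCB regret chain (as in~\cite{srinivas2009gaussian, chowdhury2017kernelized}), but to exploit the fact that Theorem~\ref{theorem: confidence bound} keeps $\log\det(\bI_t + \lambda^{-1}\bK_t)$ intact inside $\beta_t$. This is precisely what will let the final bound be linear in $\deff$ (with a single extra $\log T$) rather than $\deff^{1/2}$ times additional log factors.

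First, I would condition on the high-probability event from Theorem~\ref{theorem: confidence bound}. Noting that $\beta_t$ is nondecreasing in $t$ (adding a rank-one PSD block only increases $\log\det(\bI_t + \lambda^{-1}\bK_t)$), the standard three-term optimism argument applied to the rule in~\eqref{eq: UCB decision} gives the per-round bound
\begin{align*}
\Delta_t \;=\; f_{u_t}(\bx_t^\ast) - f_{u_t}(\bx_t) \;\le\; 2\,\beta_t\,\sigma_{u_t, t-1}(\bx_t).
\end{align*}
Crucially, because the confidence bound in Theorem~\ref{theorem: confidence bound} holds uniformly over all $(\bx, u) \in \Dc \times \Uc$, this step needs no union bound over $\Dc_t$, yielding the ``no assumption on the number of arms'' feature.

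Next, summing and applying Cauchy--Schwarz gives
\begin{align*}
\Rc_T \;\le\; 2\beta_T \sum_{t=1}^T \sigma_{u_t, t-1}(\bx_t) \;\le\; 2\beta_T \sqrt{T \sum_{t=1}^T \sigma_{u_t, t-1}^2(\bx_t)}.
\end{align*}
I would then convert the sum of variances into an information gain via the standard telescoping lemma. Assumption~\ref{assumption: bounded base kernel} together with the bounded diagonal of $\bL_\rho^{-1}$ yields $\sigma_{u_t, t-1}^2(\bx_t) \le K_{\max}$; combining the elementary inequality $z \le \tfrac{K_{\max}/\lambda}{\log(1 + K_{\max}/\lambda)}\log(1+z)$ on $[0, K_{\max}/\lambda]$ with the identity $\sum_{t=1}^T \log\!\bigl(1 + \lambda^{-1}\sigma_{u_t, t-1}^2(\bx_t)\bigr) = \log\det(\bI_T + \lambda^{-1}\bK_T)$ gives
\begin{align*}
\sum_{t=1}^T \sigma_{u_t, t-1}^2(\bx_t) \;\le\; \frac{\lambda \, \log\det(\bI_T + \lambda^{-1}\bK_T)}{\log(1 + K_{\max}/\lambda)}.
\end{align*}

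Finally, plugging in Definition~\ref{definition: efffective dimension} gives $\log\det(\bI_T + \lambda^{-1}\bK_T) = \deff \log(1 + T K_{\max}/\lambda) = O(\deff \log T)$. Substituted into~\eqref{eq:beta:t:complex} this yields $\beta_T = O(\sqrt{\deff \log T})$ (the constant $B_\rho$ is absorbed), and substituted into the variance sum yields $\sum_t \sigma_{u_t, t-1}^2(\bx_t) = O(\deff \log T)$. Combining,
\begin{align*}
\Rc_T \;=\; O\!\bigl(\sqrt{\deff \log T} \cdot \sqrt{T \, \deff \log T}\bigr) \;=\; O(\deff \log(T) \sqrt{T}) \;=\; \Tilde{\Oc}(\deff \sqrt{T}),
\end{align*}
as claimed. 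The main obstacle is not conceptual but a bookkeeping subtlety: preserving the linear (rather than square-root) dependence on $\deff$ requires that both $\beta_T$ and $\sqrt{\sum_t \sigma_{u_t, t-1}^2(\bx_t)}$ each contribute a factor of $\sqrt{\deff \log T}$, so $\log\det$ must be kept intact inside $\beta_t$ (as Theorem~\ref{theorem: confidence bound} does) rather than loosened to a cruder surrogate before being fed into the regret chain.
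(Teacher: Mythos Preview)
Your proposal is correct and follows essentially the same approach as the paper: condition on the confidence event of Theorem~\ref{theorem: confidence bound}, apply the three-line optimism argument to get $\Delta_t \le 2\beta_t\sigma_{u_t,t-1}(\bx_t)$, use Cauchy--Schwarz plus the telescoping $\log\det$ identity to control $\sum_t \sigma_{u_t,t-1}^2(\bx_t)$, and then substitute Definition~\ref{definition: efffective dimension} into both $\beta_T$ and the variance sum. The only slip is a constant in your variance bound (the numerator should be $K_{\max}$ rather than $\lambda$ after applying $z \le \tfrac{K_{\max}/\lambda}{\log(1+K_{\max}/\lambda)}\log(1+z)$), but this is harmless for the stated $O(\deff\log(T)\sqrt{T})$ rate.
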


\begin{theorem}[Regret Bound of \texttt{LK-GP-TS}]
\label{theorem: regret bound of LK-GP-TS}
Suppose Assumptions\ref{assumption: subgaussian}, \ref{assumption: bounded base kernel} and \ref{assumption: graph RKHS norm bounds} hold, and the decision sets $\Dc_t$ are uniformly finite.
By setting the exploration parameter $\nu_t$ in \texttt{LK-GP-TS} to $\beta_t$ from Theorem~\ref{theorem: confidence bound}, the cumulative regret is bounded with high probability as:
\[
\Rc_T = 
\Oc(\deff \log(T)^{3/2} \sqrt{ T }) = \tilde{\Oc}(\deff \sqrt{T })
\] 
\end{theorem}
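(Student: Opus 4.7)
The plan is to adapt the classical Thompson-sampling analysis for kernelized bandits to our multi-user RKHS setting, leveraging Theorem~\ref{theorem: confidence bound} for the pointwise confidence width and Definition~\ref{definition: efffective dimension} to convert the information gain $\gamma_T = \log\det(\bI_T + \lambda^{-1}\bK_T)$ into an $\Oc(\deff \log T)$ bound. I would work on two high-probability events: $E_t^{\mu}$, the event that $|\mu_{u,t-1}(\bx) - f(\bx,u)| \le \beta_t \sigma_{u,t-1}(\bx)$ holds for all $(\bx,u)$ (granted by Theorem~\ref{theorem: confidence bound}), and $E_t^{\mathrm{TS}}$, the event that the Gaussian sample $\mutil_t(\bx) \sim \Nc(\mu_{u_t,t-1}(\bx), \nu_t^2 \sigma_{u_t,t-1}^2(\bx))$ concentrates uniformly on $\Dc_t$: $|\mutil_t(\bx) - \mu_{u_t,t-1}(\bx)| \le c_t\, \sigma_{u_t,t-1}(\bx)$ for all $\bx \in \Dc_t$, with $c_t = \nu_t\sqrt{2\log(|\Dc_t| t^2/\delta)}$. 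A Gaussian tail bound plus a union bound over the finite set $\Dc_t$ and over $t$ guarantees $\BP(\bigcap_t E_t^{\mathrm{TS}}) \ge 1-\delta$; this is where the finite-arm assumption is used.

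Next, I would call an arm $\bx \in \Dc_t$ \emph{saturated} if $f(\bx_t^*,u_t) - f(\bx, u_t) > g_t\, \sigma_{u_t,t-1}(\bx)$ with $g_t := \beta_t + c_t$, and \emph{unsaturated} otherwise. Under $E_t^{\mu}$, Gaussian anti-concentration gives an absolute constant $p>0$ with $\BP_t(\mutil_t(\bx_t^*) > f(\bx_t^*,u_t)) \ge p$; combined with the definition of saturation, this shows that Thompson Sampling selects an unsaturated arm with conditional probability at least $p$. Standard arguments then produce the key lower bound
\[
    \BE_t\!\left[\sigma_{u_t,t-1}(\bx_t)\,\ind_{E_t^{\mu}\cap E_t^{\mathrm{TS}}}\right] \ge p\,\sigma_{u_t,t-1}(\bar{\bx}_t) - \text{(low-prob. correction)},
\]
where $\bar{\bx}_t$ is the unsaturated arm of smallest posterior variance, which is $\Fc_{t-1}$-measurable. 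Decomposing $\Delta_t = (f(\bx_t^*,u_t) - f(\bar{\bx}_t,u_t)) + (f(\bar{\bx}_t,u_t) - f(\bx_t,u_t))$ and bounding the first summand by $g_t \sigma_{u_t,t-1}(\bar{\bx}_t)$ (since $\bar{\bx}_t$ is unsaturated) and the second by $2g_t \sigma_{u_t,t-1}(\bx_t)$ (via $E_t^{\mu}\cap E_t^{\mathrm{TS}}$), the lower bound above lets me replace $\sigma_{u_t,t-1}(\bar{\bx}_t)$ by $p^{-1}\BE_t[\sigma_{u_t,t-1}(\bx_t)]$; writing the gap between the conditional expectation and its realization as a bounded martingale difference $M_t$ yields
\[
    \Rc_T \;\le\; \bigl(2 + \tfrac{1}{p}\bigr)\, g_T \sum_{t=1}^T \sigma_{u_t,t-1}(\bx_t) + \sum_{t=1}^T M_t.
\]

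Summing up, Cauchy--Schwarz gives $\sum_t \sigma_{u_t,t-1}(\bx_t) \le \sqrt{T \sum_t \sigma_{u_t,t-1}^2(\bx_t)}$, and the elliptical-potential telescoping inequality (from $\log(1+x) \ge x/(1+x)$ applied to eigenvalues of $\bI_t + \lambda^{-1}\bK_t$) gives $\sum_t \sigma_{u_t,t-1}^2(\bx_t) \le C_{\lambda}\, \gamma_T$ for a constant $C_{\lambda}$ depending on $\lambda, K_{\max}$. Definition~\ref{definition: efffective dimension} bounds $\gamma_T \le \deff \log(1+TK_{\max}/\lambda) = \Oc(\deff \log T)$, so $\sum_t \sigma_{u_t,t-1}(\bx_t) = \Oc(\sqrt{T \deff \log T})$; Azuma--Hoeffding controls $\sum_t M_t$ by $\Oc(\sqrt{T\log(1/\delta)})$. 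Using $\nu_T = \beta_T = \Oc(\sqrt{\deff \log T})$ from Theorem~\ref{theorem: confidence bound} and $c_T = \Oc(\nu_T \sqrt{\log(|\Dc_T| T)})$, so $g_T = \Oc(\sqrt{\deff}\,\log T)$, collecting everything gives $\Rc_T = \Oc(\deff\, \log(T)^{3/2}\sqrt{T})$.

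The main obstacle is the saturated/unsaturated decoupling argument with its filtration bookkeeping: $\bar{\bx}_t$ must be $\Fc_{t-1}$-measurable so that the Tower property can be applied, while $\bx_t$ is a TS draw depending on the injected Gaussian noise; getting a constant anti-concentration constant $p$ uniformly over $t$ requires the exploration variance to be exactly $\nu_t^2 \sigma^2$. The finite-arm hypothesis is what allows the union bound that defines $E_t^{\mathrm{TS}}$ and produces the extra $\sqrt{\log T}$ factor (hence the $\log(T)^{3/2}$ versus $\log T$ gap compared to Theorem~\ref{theorem: regret bound of LK-GP-UCB}); a continuous-arm extension would need a covering argument on $\Dc$ with a Lipschitz guarantee for the kernel, which we do not pursue.
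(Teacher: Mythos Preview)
Your proposal is correct and follows essentially the same saturated/unsaturated-arm decoupling as the paper: the paper defines the same unsaturated set $\Uc_t$, the same least-uncertain unsaturated arm $\bar{\bx}_t$, proves the same anti-concentration lower bound $\BP_t(\mutil_t(\bx_t^*)>f(\bx_t^*,u_t)\mid \Cc_t)\ge(4e\sqrt{\pi})^{-1}$, and applies the same Cauchy--Schwarz plus Schur-complement telescoping to bound $\sum_t\sigma_{u_t,t-1}(\bx_t)$. The only noteworthy variation is that the paper takes full expectations throughout (so its argument is really an \emph{expected}-regret bound, absorbing the TS-concentration failure via the summable $t^{-2}$ tail), whereas you close the gap between $\BE_t[\sigma_{u_t,t-1}(\bx_t)]$ and its realization via an Azuma--Hoeffding martingale term to obtain a genuine high-probability statement; your version actually matches the theorem's ``with high probability'' phrasing more faithfully. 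One minor bookkeeping slip: the second summand $f(\bar{\bx}_t,u_t)-f(\bx_t,u_t)$ is bounded by $g_t\bigl(\sigma_{u_t,t-1}(\bar{\bx}_t)+\sigma_{u_t,t-1}(\bx_t)\bigr)$, not $2g_t\sigma_{u_t,t-1}(\bx_t)$ alone, so the total becomes $2g_t\sigma_{u_t,t-1}(\bar{\bx}_t)+g_t\sigma_{u_t,t-1}(\bx_t)$ as in the paper; this does not affect the final rate.
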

These bounds demonstrate the efficiency of our approach. The regret scales not with the number of users $n$ or the ambient feature dimension, but with the effective dimension $\deff$. For problems where the graph and kernel structure lead to a rapid spectral decay, $\deff$ can be significantly smaller, resulting in substantial gains in sample efficiency.

In the notation of Remark~\ref{remark: info:gain}, the effective dimension $\deff$ scales as:
$\deff = \gamma_T / \log(1 + T K_{\max} / \lambda ) \asymp \frac{\gamma_T}{\log T}$
where the approximation assumes $\lambda = \Theta(1)$. The interpretation of $\deff$ as a dimension is evident in the linear setting ($n=1$ with linear kernel on $\reals^d$), where $\gamma_T = \Oc(d \log T)$~\cite[Theorem 5]{srinivas2009gaussian}, yielding $\deff = \Oc(d)$. This example demonstrates that our bound $\tilde{\Oc}(\deff \sqrt{T})$ is tight up to logarithmic factors for infinite action spaces, matching the minimax optimal rate $\tilde{\Oc}(d \sqrt{T})$ for linear bandits~\cite{dani2008stochastic}.

For uniformly finite action spaces ($|\mathcal{D}_t| \le M$ for all $t$), it is possible to achieve a tighter regret bound of $\tilde{\Oc}(\sqrt{\deff T})$ using algorithms such as SupKernelUCB~\cite{valko2013finite}. This improvement relies on scaling the exploration parameter as $\beta_t \propto 1/\sqrt{\lambda}$ rather than using~\eqref{eq:beta:t:complex}, effectively removing a factor of $\sqrt{\gamma_T}$. Since our primary contribution is the construction of the unified multi-user kernel, such algorithmic refinements from the kernel bandit literature are directly applicable to our framework.

\subsection{Spectral Analysis of the Multi-User Kernel}
\label{subsec: spectral_analysis}

To interpret the effective dimension $\deff$, we analyze the spectrum of the multi-user kernel $K$. By Theorem~\ref{theorem: multi_user_rkhs}, $K = K_G \otimes K_x$, the tensor product of the user kernel $K_G$ associated with matrix $\bK_G = \bL_{\rho}^{-1}$ and the arm kernel $K_x$. Consequently, the eigenvalues of the integral operator associated with $K$ are the pairwise products of the marginal eigenvalues. Let $\{\lambda_i^G\}_{i=1}^n$ be the eigenvalues of $\bL_\rho^{-1}$ and $\{\nu_j^x\}_{j=1}^\infty$ be the eigenvalues of $K_x$. The operator eigenvalues for $K$ are then $\{\mu_{ij} = \lambda_i^G \nu_j^x\}_{i,j}$. The eigenvalues of the normalized matrix $\bK_T / T$ approximate these operator eigenvalues\footnote{This holds asymptotically as $T \to \infty$ under i.i.d. sampling~\cite{koltchinskii2000random}; results from \cite[Theorem 5]{srinivas2009gaussian} suggest a similar approximation holds for worst-case sequences.}.

In particular, we obtain the following approximate upper bound on the  information gain $\gamma_T$:
\begin{align}\label{eq:info:gain:operator:bound}
    \gamma_T = \log \det(\bI + \lambda^{-1} \bK_T) &\lessapprox \sum_{i=1}^n \sum_{j=1}^\infty \log \left( 1 + \frac{T}{\lambda} \lambda_i^G \nu_j^x \right) 
    = \sum_{i=1}^n \Psi\left( \frac{T \lambda_i^G}{\lambda} \right),
\end{align}
where $\Psi(s) := \sum_{j=1}^\infty \log(1 + s \nu_j^x)$ represents the information gain of a single-user problem with effective signal strength $s$. We know that $\Psi(s)$ is concave and sublinear; e.g., for the squared exponential kernel on $\reals^d$, $\Psi(s) \lesssim (\log s)^{d+1}$~\cite[Theorem 5]{srinivas2009gaussian}), hence as a function of $T$, $\gamma_T$ grows slowly in $T$. What is interesing then is the dependence on $n$.

While informative, the bound in~\eqref{eq:info:gain:operator:bound} can be conservative for finite $T$ (see Figure~\ref{fig:spectral:validation}). A sharper bound in a similar vein can be obtained by considering a \emph{regular design}: assume we observe each user exactly $m := T/n$ times, choosing the same set of actions $\{\bx_1, \cdots, \bx_m\}$ for all users. By permuting round indices such that all observations for user 1 appear first, followed by user 2, etc., the eigenvalues of $\bK_T$ remain invariant. Under this setup, $\bK_T = \bK_G \otimes \bK_x^{\text{base}}$, where $\otimes$ is the matrix Kronecker product and $\bK_x^{\text{base}}$ is the $m \times m$ kernel matrix evaluated on the common action set. Let $\{\hat \nu_j^x\}_{j=1}^m$ be the eigenvalues of $\bK_x^{\text{base}} / m$. The normalization by $m$ ensures that $\hat \nu_j^x$ stabilize around the population eigenvalues $\nu_j^x$ for large $m$.

\begin{figure}[t]
    \centering
    \begin{subfigure}{0.32\textwidth}
        \centering
        \includegraphics[width=\textwidth]{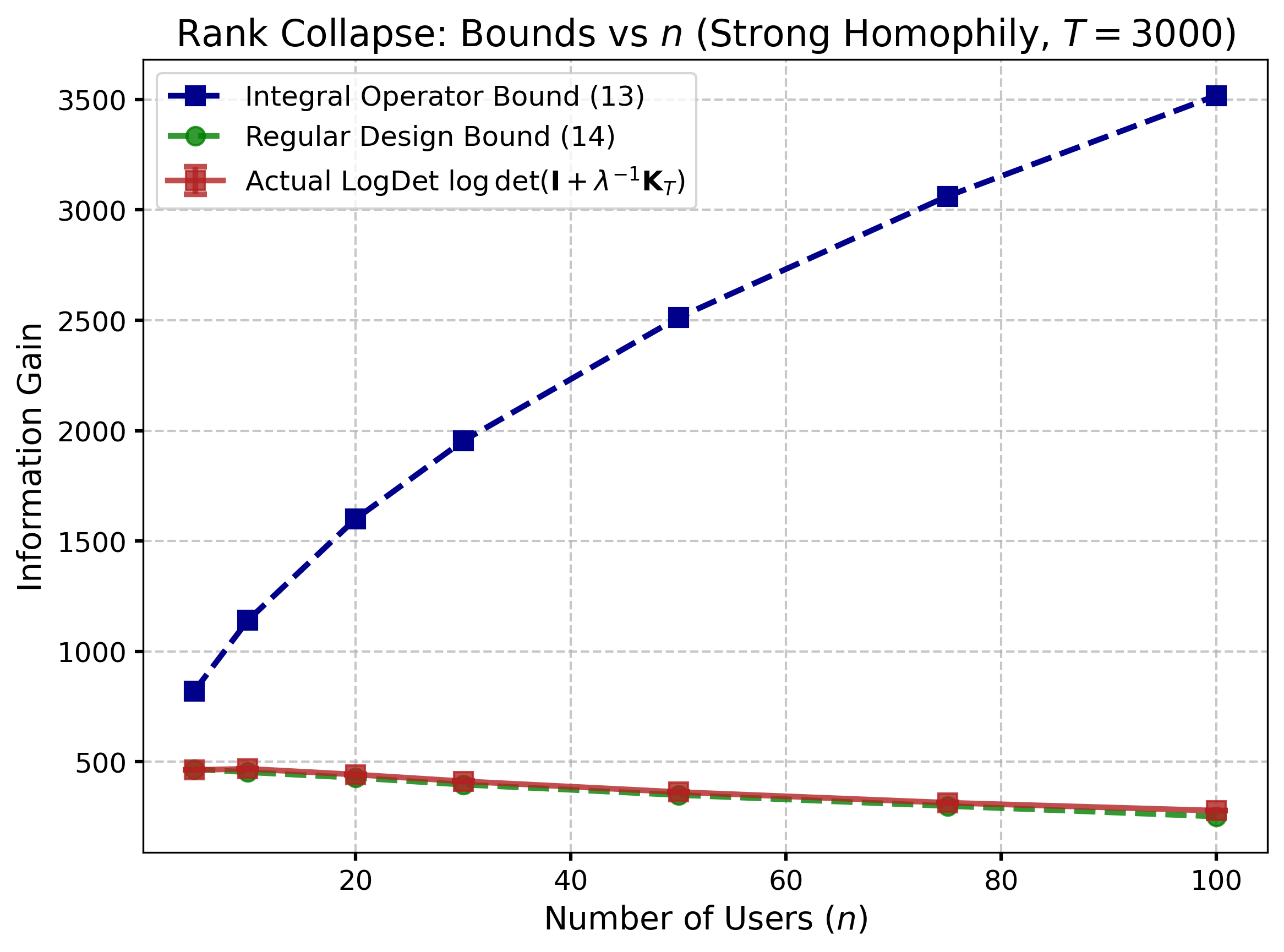}
        \caption{Two bounds vs. Actual}
        \label{fig:rank_collapse}
    \end{subfigure}
    \begin{subfigure}{0.32\textwidth}
        \centering
        \includegraphics[width=\textwidth]{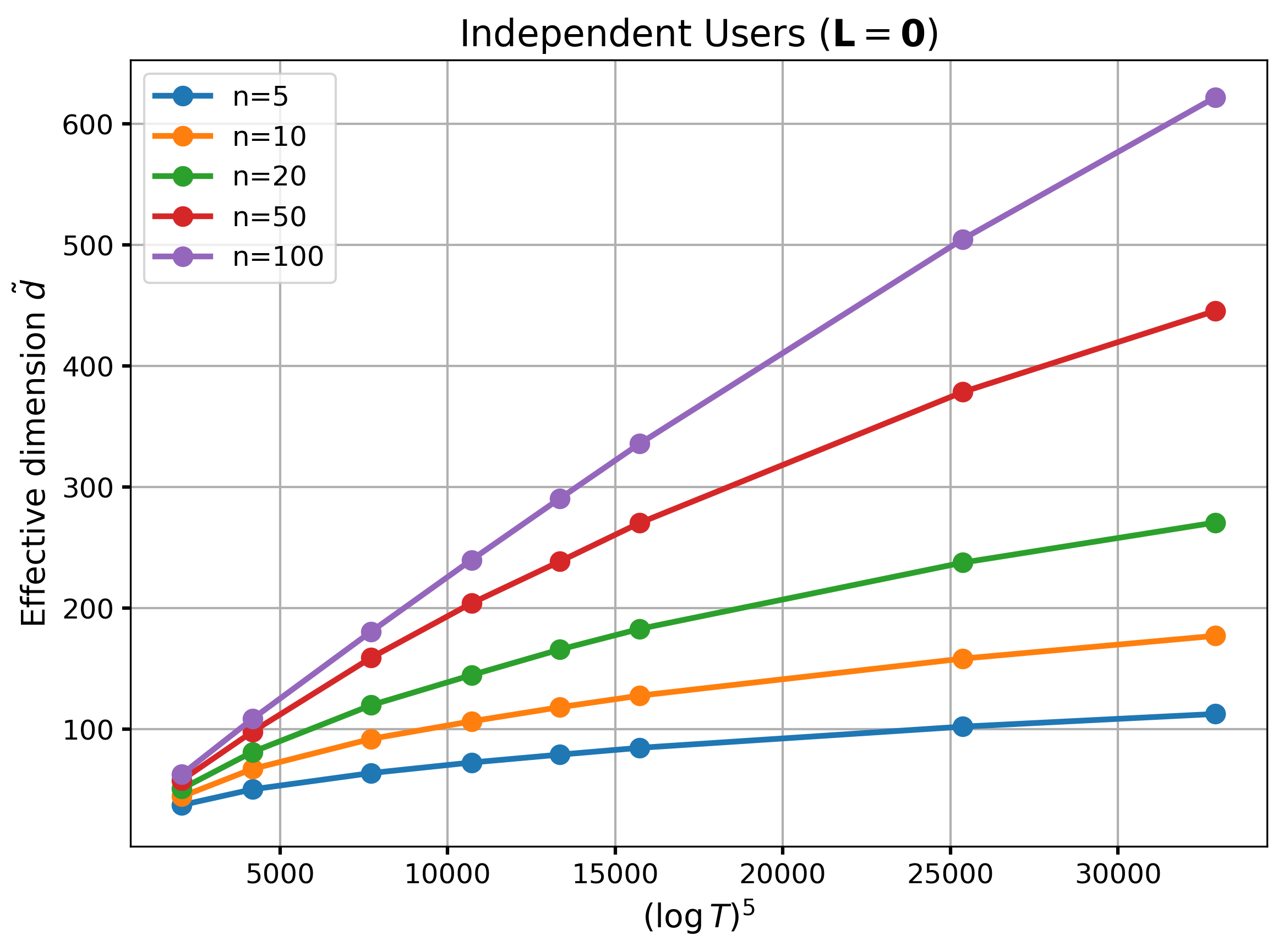}
        \caption{$\tilde{d}$ vs. $(\log T)^d$ (empty graph)}
        \label{fig:emp-spectrum-ind}
    \end{subfigure}
    \begin{subfigure}{0.32\textwidth}
        \centering
        \includegraphics[width=\textwidth]{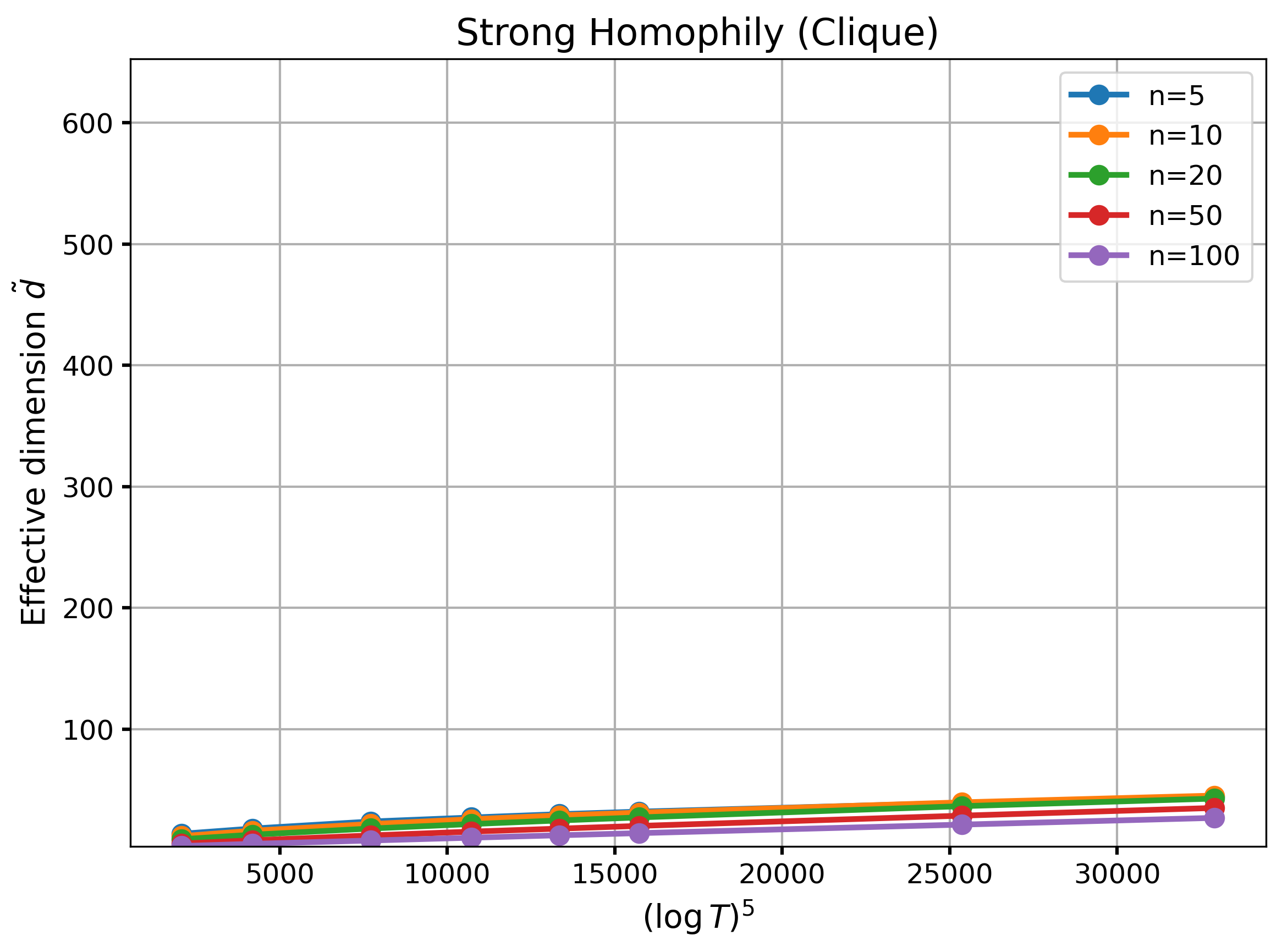}
        \caption{$\tilde{d}$ vs. $(\log T)^d$ (complete graph)}
        \label{fig:emp-spectrum-clique}
    \end{subfigure}
    \caption{Rank Collapse: (a) Comparing the growth of the actual information gain $\gamma_T$ vs $n$ in i.i.d. design (red) versus the two bounds~\eqref{eq:info:gain:operator:bound} (blue; crude) and~\eqref{eq:regular:design:info:gain} (green; nearly exact) in a complete graph. The kernel is $\exp(-\|x-y\|^2/2)$, $u_i \sim \text{Unif}([n])$ and $x_i \sim \text{Unif}[0,1]^d$ where $d = 5$. Panels (b) and (c) show the growth of $\tilde{d}$ vs. $(\log T)^d$ under empty and complete graphs, respectively. Note that under the complete graph, $\tilde{d}$ slightly decreases as $n$ increases.}
    \label{fig:spectral:validation}
\end{figure}

Consequently, the eigenvalues of $\bK_T / T$ are given by $\lambda_i^G \hat \nu_j^x / n$, yielding the exact expression:
\begin{align}\label{eq:regular:design:info:gain}
    \gamma_T = \sum_{i=1}^n \sum_{j=1}^m \log \left( 1 + \frac{T}{n \lambda} \lambda_i^G \hat \nu_j^x \right) = \sum_{i=1}^n \hat \Psi\left( \frac{T}{n \lambda} \lambda_i^G \right),
\end{align}
where $\hat \Psi(s) := \sum_{j=1}^m \log(1 + s \hat \nu_j^x)$ represents the ``empirical'' information gain of a single-user problem with common actions. For large enough $m = T/n$, we have $\hat \nu_j^x \approx \nu_j^x$ and $\hat \Psi(s) \approx \Psi(s)$. Expression~\eqref{eq:regular:design:info:gain} is exact for regular designs and, as shown in Figure~\ref{fig:spectral:validation}, provides a sharp approximation for the i.i.d. sampling case. We use~\eqref{eq:regular:design:info:gain} to analyze $\deff$ across graph structures.

\textbf{Case 1: Independent Users (Worst Case).}
If $\bL = \bZeros$, then $\mathbf{K}_G = \rho^{-1} \bI$, and $\lambda_i^G = \rho^{-1}$ for all $i \in [n]$. The gain sums linearly:
$
    \gamma_T^{\text{indep}} = \sum_{i=1}^n \hat \Psi\bigl(\frac{T}{n \rho \lambda}\bigr) = n \cdot \hat \Psi\bigl(\frac{T}{n \rho \lambda}\bigr).
$
Thus, the effective dimension scales as $n$ times the single-user effective dimension. For example, with an SE kernel, $\deff = \mathcal O( n \, (\log (T/n))^d)$, which remains sublinear in $T$.

\textbf{Case 2: Strong Homophily (Complete Graph).} 
To isolate the effect of an extremely dense user graph under a homophilous prior, consider a complete graph with edge weights $w_{ij} = 1$. The Laplacian eigenvalues are $0$ (multiplicity $1$) and $n$ (multiplicity $n-1$). The kernel eigenvalues invert this structure, with $\lambda^G_1 = 1/\rho$ and $\lambda_{i}^G = 1/(n + \rho)$ for $i \ge 2$.. For large $n$, this yields a nearly rank-1 matrix. Substituting into \eqref{eq:regular:design:info:gain} provides a “Head + Tail” decomposition:

\begin{align}
    \gamma_T^{\text{clique}} = \hat \Psi\left(\frac{T}{n \rho \lambda}\right) + (n-1) \hat \Psi\left( \frac{T}{n(n+\rho)\lambda} \right).
\end{align}
This leads to the following consequence:
\begin{proposition}
    Consider the regime where $T \le C n$ for some constant $C$. Then, under a regular design:
    $ \gamma_T^{\text{clique}} \lesssim \frac{C}{\lambda} \bigl( \frac1\rho + 1\bigr) = \Oc(1)$.
\end{proposition}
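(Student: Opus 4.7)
The plan is to start from the exact ``head + tail'' decomposition
\[
    \gamma_T^{\text{clique}} = \hat \Psi\left(\frac{T}{n \rho \lambda}\right) + (n-1)\, \hat \Psi\left( \frac{T}{n(n+\rho)\lambda} \right),
\]
already derived in the paper from~\eqref{eq:regular:design:info:gain} with the complete-graph Laplacian spectrum $\lambda_1^G = 1/\rho$ and $\lambda_i^G = 1/(n+\rho)$ for $i\ge 2$. The only remaining work is to show that, in the regime $T \le Cn$, both terms are $\mathcal O(1)$.

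My first step is to linearize $\hat \Psi$ via the elementary inequality $\log(1+x)\le x$, which gives
\[
    \hat \Psi(s) \;=\; \sum_{j=1}^m \log(1+s\hat \nu_j^x) \;\le\; s\sum_{j=1}^m \hat \nu_j^x \;=\; s\cdot \mathrm{tr}(\bK_x^{\text{base}}/m).
\]
The next step is to bound this trace using Assumption~\ref{assumption: bounded base kernel}: since $K_x(\bx,\bx)\le \alpha^2$ for every $\bx$, we have $\mathrm{tr}(\bK_x^{\text{base}}) \le m\alpha^2$, so $\sum_j \hat \nu_j^x \le \alpha^2$. Hence $\hat \Psi(s)\le \alpha^2 s$, and the $\Psi$-function disappears from the analysis.

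I then substitute into the head and tail separately, using the assumption $T\le Cn$. The head contributes
\[
    \hat \Psi\!\left(\tfrac{T}{n\rho\lambda}\right) \le \tfrac{\alpha^2 T}{n\rho\lambda} \le \tfrac{\alpha^2 C}{\rho\lambda},
\]
while the tail contributes
\[
    (n-1)\,\hat \Psi\!\left(\tfrac{T}{n(n+\rho)\lambda}\right) \le \tfrac{(n-1)\alpha^2 T}{n(n+\rho)\lambda} \le \tfrac{\alpha^2 C}{\lambda}\cdot\tfrac{n-1}{n+\rho} \le \tfrac{\alpha^2 C}{\lambda},
\]
where in the tail the crucial gain is that the $n-1$ prefactor is cancelled by the $1/(n+\rho)$ eigenvalue, which is exactly the ``rank collapse'' induced by strong homophily. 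Adding the two pieces yields
\[
    \gamma_T^{\text{clique}} \;\le\; \tfrac{\alpha^2 C}{\lambda}\left(\tfrac{1}{\rho} + 1\right) \;=\; \mathcal O(1),
\]
matching the claim after absorbing $\alpha^2$ into the implicit constant.

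There is no real obstacle here; the argument is a one-shot linearization plus the $T\le Cn$ budget. The only point worth checking carefully is that the trace bound $\sum_j \hat \nu_j^x \le \alpha^2$ is uniform in $m$, which is why normalizing $\bK_x^{\text{base}}$ by $m$ in the definition of $\hat \nu_j^x$ is essential; otherwise the tail would pick up an extra factor of $m = T/n$ and the constant bound would fail. Using the sharper inequality $\log(1+x)\le x$ (rather than, say, passing through the operator-level bound~\eqref{eq:info:gain:operator:bound}) is also important: the crude bound is loose by logarithmic factors in this regime, whereas the exact regular-design expression~\eqref{eq:regular:design:info:gain} combined with linearization is tight enough to give $\mathcal O(1)$.
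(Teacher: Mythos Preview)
Your proof is correct and follows essentially the same approach as the paper: linearize $\hat\Psi$ via $\log(1+x)\le x$, bound the trace $\sum_j \hat\nu_j^x \le \alpha^2$ using Assumption~\ref{assumption: bounded base kernel}, and then use $T\le Cn$ on the head and tail separately. Your write-up is in fact slightly cleaner than the paper's, since you keep the explicit constant $\alpha^2$ throughout and avoid the unnecessary intermediate step $(n-1)\hat\Psi(\cdot)\le n\hat\Psi(\cdot)$.
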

\begin{proof}
Using $\log(1+x) \le x$ for $x \ge 0$, we have $\hat \Psi(s) \le s (\sum_{j=1}^m \hat \nu_j^x)$. Then, for $T \le C n$,
\[
    (n-1) \hat \Psi\left( \frac{T}{n(n+\rho)\lambda} \right) \le n \hat\Psi\left( \frac{C}{(n + \rho)\lambda} \right)
    \le n \cdot \frac{C}{(n + \rho)\lambda}  \sum_{j=1}^m \hat \nu_j^x \lesssim \frac{C}{\lambda},
\]
since $\sum_{j=1}^m \hat \nu_j^x = \Oc(\sum_{j=1}^\infty \nu_j^x) = \Oc(1)$\footnote{This bound holds for any kernel whose integral operator is trace class. For a unifromly bounded kernel as in Assumption \ref{assumption: bounded base kernel}, we have the more straightforward bound $\sum_{j=1}^m \hat \nu_j^x = \tr(\bK_x^{\text{base}}) / m \le \alpha^2$.}. Similarly, for the first term, $\hat \Psi\bigl(\frac{T}{n \rho \lambda}\bigr) \lesssim \frac{C}{\rho \lambda}$.
\end{proof}

This result is significant: in the regime $T \le C n$, the information gain grows with neither $n$ nor $T$, implying $\deff = \Oc(1/\log T)$ (slowly decreasing). 
This behavior has no counterpart in the single-user setting and confirms that under strong homophily, regret is independent of $n$.
These theoretical findings are corroborated by our empirical plots in Figure~\ref{fig:spectral:validation}.

\textbf{Generalization to Clusters.}
If the graph contains $k$ disjoint clusters with high internal connectivity, $\mathbf{K}_G$ will have $k$ eigenvalues of magnitude $\Oc(1)$ and $n-k$ of magnitude $\Oc(1/n)$. A similar argument implies that $\deff = \Oc(k/\log T)$ when $T \le C n$. Thus, $\deff$ essentially counts the number of significant eigenvalues of the normalized kernel $\mathbf{K}_G$, serving as a soft proxy for the number of distinct user clusters.

\paragraph{Comparison with Independent Bandits.}
It is instructive to compare this with independent learners that share no information. Since each user generates $T/n$ observations on average, the regret for learning each function is at best $\sqrt{T/n}$, yielding an overall regret of $\sum_{u=1}^n \sqrt{T/n} = \sqrt{n T}$. 
In the worst case (Case 1), our bound $\deff\sqrt{T}$ scales as $n\sqrt{T}$ (ignoring log factors), which is a factor of $\sqrt{n}$ looser than the independent baseline. However, had we assumed a uniformly finite action space, we could achieve a regret bound of $\sqrt{\deff T} \asymp \sqrt{nT}$, matching the optimal independent rate.

The advantage of our approach becomes evident under strong homophily. For independent learners in the regime $T \asymp n$, the regret scales as $\sqrt{n T} \asymp T$, meaning no learning occurs. In contrast, we showed that our Laplacian Kernelized Bandit achieves regret of $\Oc(\sqrt{T})$ in this regime (up to log factors). A similar improvement holds when there are $k = \Oc(1)$ strong clusters.

\begin{figure}[t]
    \centering
    \includegraphics[width = 14cm]
    {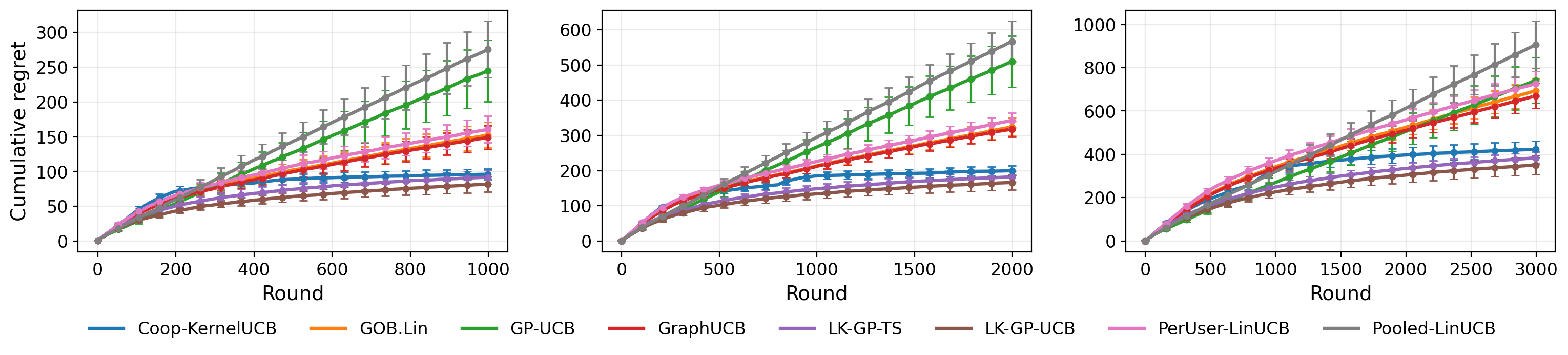}
    \caption{Cumulative Regret under \emph{Linear-GOB} regime. From left to right are tasks of easy level, medium level, to hard level.} 
    \label{figure: cmobined linear}
\end{figure}

\begin{figure}[t]
    \centering
    \includegraphics[width = 14cm]
    {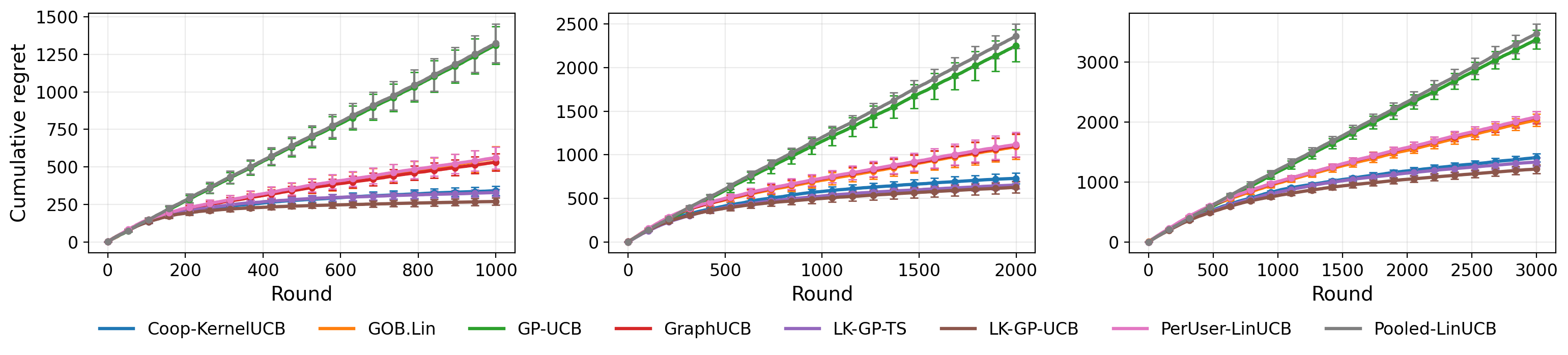}
    \caption{Cumulative Regret under \emph{Laplacian–Kernel} regime using GP draw. From left to right are tasks of easy level, medium level, to hard level.} 
    \label{figure: cmobined kernelA}
\end{figure}

\begin{figure}[t]
    \centering
    \includegraphics[width = 14cm]
    {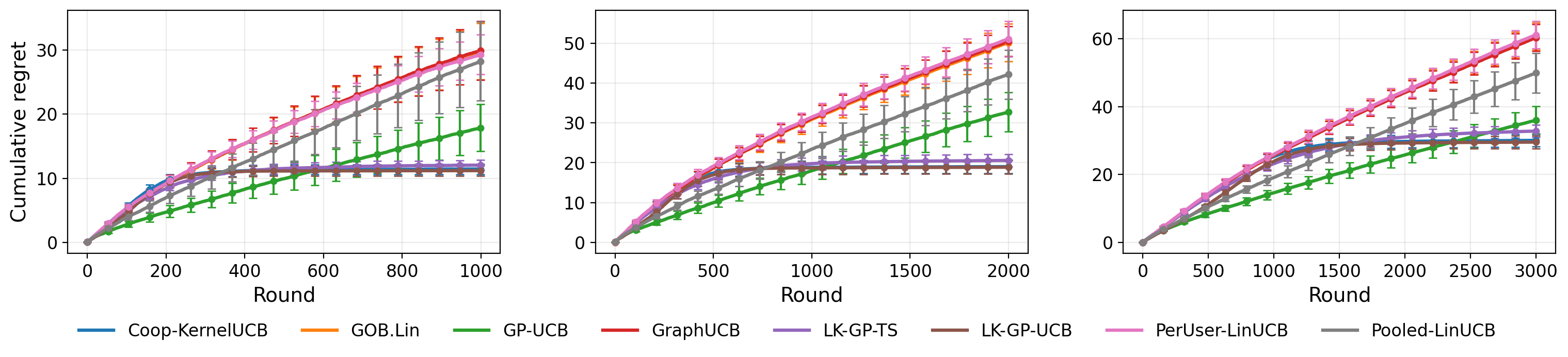}
    \caption{Cumulative Regret under \emph{Laplacian–Kernel} regime using representer draw. From left to right are tasks of easy level, medium level, to hard level.} 
    \label{figure: cmobined kernelB}
    \vspace{-2mm}
\end{figure}

\section{Experiments}
\label{sec:exp}

We evaluate Laplacian Kernelized bandit algorithms, \texttt{LK-GP-UCB} and \texttt{LK-GP-TS} on several synthetic data environments that capture user–user homophily on a known graph while varying reward structure (linear vs.\ nonlinear) and problem difficulty.
Baseline algorithms include \texttt{GraphUCB}\cite{yang2020laplacian}, \texttt{GoB.Lin}\cite{cesa2013gang}, \texttt{COOP-KernelUCB}\cite{dubey2020kernel}, \texttt{GP-UCB}\cite{chowdhury2017kernelized}, \texttt{Pooled LinUCB} and \texttt{Per-User LinUCB}. Full implementation details are Provided in Appendix~\ref{appendix: supplement to experiments}.

\textbf{Environments.}
We draw a context pool $\Dc$ by sampling from $\Nc(\bZeros,\bI_d)$ first and then normalize the context vectors. At round $t$ we present $\Dc_t$ by sampling $M_t$ distinct items from $\Dc$ without replacement. 
We generate the user graphs by Erd\H{o}s--R\'enyi (ER) random graph model or Radial basis function(RBF) random graph model. 
After giving the generated graph, we consider one linear regime and two kernelized(nonlinear) regimes for synthetic data simulation. First synthetic data environment is called \emph{Linear–GOB}. We consider simulating the true graph graph-smooth user parameters $\bTheta=(\bI+\eta\bL)^{-1}\bTheta_0$, which enforce graph homophily on the random initial parameters $\bTheta_0 \in \BR^{n \times d}$~\cite{yang2020laplacian}. The homophily strength is controlled by $\eta$ in \emph{Linear–GOB} regime. We also generate the true reward functions by simulating \emph{multi-user kernel}, which is called the \emph{Laplacian–Kernel} regime. We first use \emph{Squared Exponential} as our base kernel $K_x$ over arms $\Uc$ and construct the \emph{multi-user kernel} using \eqref{eq: multi-user kernel}. Next, we design two choices to generate $f$, including a GP draw and a representer draw. We leave all the details for 
data simulation in Appendix~\ref{appendix: synthetic environments}. 

\textbf{Task  Design.}
Our experiment has following design of the bandit tasks for a general comparison. In these tasks, the noise of reward is set as $\sigma = 0.1$ and the number of users is $n=20$. The simplest level task is a $10$-arm bandit problem ($m=10$) with $50\%$ viewability ($M_t=5$) at each round for all users, under $T=1000$ interaction rounds. Medium level task is a $20$-arm bandit problem ($m=20$) with $25\%$ viewability ($M_t=5$) at each round for all users, under $T=3000$ interaction rounds. The hard task is a $50$-arm bandit problem ($m=50$) with $10\%$ viewability ($M_t=5$) at each round for all users, under $T=5000$ interaction rounds. In our figures (\ref{figure: cmobined linear}, \ref{figure: cmobined kernelA} and \ref{figure: cmobined kernelB}), from left to right are tasks of easy level, medium level, to hard level.

\textbf{Algorithms Configurations.} 
Our proposals \texttt{LK-GP-UCB} and \texttt{LK-GP-TS} are given in Algorithm~\ref{alg: LK-GP-UCB} and Algorithm~\ref{alg: LK-GP-TS} in Appendix~\ref{appendix: algorithms}. We implement the hybrid updates using practical recursive update in \eqref{eq: recursive update of GP posterior} and exact update in \eqref{eq: GP posterior estimators} with Cholesky decomposition. Details are in Appendix~\ref{appendix: posterior updates and numerical details}. Hyperparameters $\nu$ and $\beta$ are tuned. 
For \texttt{Coop-KernelUCB}, we initially set five choices of similarity kernel $K_z$ and conduct an experiment (Figure in Appendix) to verify that the inverse Laplacian $\bL_\rho^{-1}$ is the optimal choice while the empirical maximum mean discrepancy method is close to the best choice. In the experiment, $K_z$ is set as the empirical MMD method to learn the similarity kernel $K_z$ unless otherwise stated.
The classical baselines for GOB problem, \texttt{GoB.Lin}, \texttt{GraphUCB}, and all the remaining baselines, \texttt{Pooled LinUCB}, \texttt{Per-User LinUCB} and \texttt{GP-UCB}, are all \texttt{UCB}-based algorithms. We also tune their hyperparameter for the confidence bound. 
The regularization parameter $\lambda$ is is designed as a scheduling $\lambda_t \;=\; \lambda_{\text{base}}\cdot S_{\text{spec}}\cdot \frac{T}{T+t}$ where $S_{\text{spec}}$ is the ratio of the smallest non-zero eigenvalue to the max eigenvalue and $\lambda_{\text{base}}$ is tuned.  
Appendix~\ref{appendix: hyperparameter tuning} 
discusses hyperparameter tuning. All methods run in a centralized, no-delay setting.


\textbf{Main Findings.}
Our proposals \texttt{LK-GP-UCB} and \texttt{LK-GP-TS} have robust performance in all the $9$ data environments. In the \emph{Linear-GOB} regime, which is the preferred setting for linear bandit algorithms, our proposals can beat the most baselines with clear gaps. In the \emph{Laplacian-Kernel} regime, our proposals are consistently the best choices. For the GP draw setting, our proposals are always the top algorithms in our experiment. For setting using representer draw, \texttt{LK-GP-UCB} and \texttt{LK-GP-TS} are sublinear while most baselines are hard to achieve sublinear regret. We believe our proposed algorithms can clearly outperform others in a long-term manner due to the achievement of the clear sublinear regret. Lastly, even though we conduct an empirical study on the choice for \texttt{Coop-KernelUCB} and pick a best one in the comparison, leading to the top performances(close to our proposal) of \texttt{Coop-KernelUCB} , our \texttt{LK-GP-UCB} are consistently better than \texttt{Coop-KernelUCB}.

\clearpage
\nocite{*}
\bibliography{reference}

@article{koltchinskii2000random,
  title={Random matrix approximation of spectra of integral operators},
  author={Koltchinskii, Vladimir and Gin{\'e}, Evarist},
  journal={Bernoulli},
  volume={6},
  number={1},
  pages={113--167},
  year={2000},
  publisher={Bernoulli Society for Mathematical Statistics and Probability}
}

@article{kocak2020spectral,
  title={Spectral bandits},
  author={Koc{\'a}k, Tom{\'a}{\v{s}} and Munos, R{\'e}mi and Kveton, Branislav and Agrawal, Shipra and Valko, Michal},
  journal={Journal of Machine Learning Research},
  volume={21},
  number={218},
  pages={1--44},
  year={2020}
}

@article{belkin2006manifold,
  title={Manifold regularization: A geometric framework for learning from labeled and unlabeled examples.},
  author={Belkin, Mikhail and Niyogi, Partha and Sindhwani, Vikas},
  journal={Journal of machine learning research},
  volume={7},
  number={11},
  year={2006}
}

@inproceedings{smola2003kernels,
  title={Kernels and Regularization on Graphs},
  author={Smola, Alexander J. and Kondor, Risi},
  booktitle={Computational Learning Theory and Kernel Machines},
  series={Lecture Notes in Computer Science},
  volume={2777},
  pages={144--158},
  year={2003},
  publisher={Springer},
  doi={10.1007/978-3-540-45167-9_12}
}

@article{abbasi2011improved,
  title={Improved algorithms for linear stochastic bandits},
  author={Abbasi-Yadkori, Yasin and P{\'a}l, D{\'a}vid and Szepesv{\'a}ri, Csaba},
  journal={Advances in neural information processing systems},
  volume={24},
  year={2011}
}

@inproceedings{dani2008stochastic,
  title={Stochastic Linear Optimization under Bandit Feedback},
  author={Dani, Varsha and Hayes, Thomas P. and Kakade, Sham M.},
  booktitle={Proceedings of the 21st Annual Conference on Learning Theory (COLT)},
  pages={355--366},
  year={2008},
  publisher={Omnipress}
}

@article{alvarez2012kernels,
  title={Kernels for vector-valued functions: A review},
  author={Alvarez, Mauricio A and Rosasco, Lorenzo and Lawrence, Neil D and others},
  journal={Foundations and Trends{\textregistered} in Machine Learning},
  volume={4},
  number={3},
  pages={195--266},
  year={2012},
  publisher={Now Publishers, Inc.}
}

@inproceedings{yang2020laplacian,
  title={Laplacian-regularized graph bandits: Algorithms and theoretical analysis},
  author={Yang, Kaige and Toni, Laura and Dong, Xiaowen},
  booktitle={International Conference on Artificial Intelligence and Statistics},
  pages={3133--3143},
  year={2020},
  organization={PMLR}
}

@article{cesa2013gang,
  title={A gang of bandits},
  author={Cesa-Bianchi, Nicolo and Gentile, Claudio and Zappella, Giovanni},
  journal={Advances in neural information processing systems},
  volume={26},
  year={2013}
}

@inproceedings{chowdhury2017kernelized,
  title={On kernelized multi-armed bandits},
  author={Chowdhury, Sayak Ray and Gopalan, Aditya},
  booktitle={International Conference on Machine Learning},
  pages={844--853},
  year={2017},
  organization={PMLR}
}

@article{valko2013finite,
  title={Finite-time analysis of kernelised contextual bandits},
  author={Valko, Michal and Korda, Nathaniel and Munos, R{\'e}mi and Flaounas, Ilias and Cristianini, Nelo},
  journal={arXiv preprint arXiv:1309.6869},
  year={2013}
}

@inproceedings{dubey2020kernel,
  title={Kernel methods for cooperative multi-agent contextual bandits},
  author={Dubey, Abhimanyu and others},
  booktitle={International Conference on Machine Learning},
  pages={2740--2750},
  year={2020},
  organization={PMLR}
}

@article{wu2024graph,
  title={Graph Neural Thompson Sampling},
  author={Wu, Shuang and Amini, Arash A},
  journal={arXiv preprint arXiv:2406.10686},
  year={2024}
}

@article{bietti2019inductive,
  title={On the inductive bias of neural tangent kernels},
  author={Bietti, Alberto and Mairal, Julien},
  journal={Advances in Neural Information Processing Systems},
  volume={32},
  year={2019}
}

@inproceedings{yang2020reinforcement,
  title={Reinforcement learning in feature space: Matrix bandit, kernels, and regret bound},
  author={Yang, Lin and Wang, Mengdi},
  booktitle={International Conference on Machine Learning},
  pages={10746--10756},
  year={2020},
  organization={PMLR}
}

@article{yankelevsky2016dual,
  title={Dual graph regularized dictionary learning},
  author={Yankelevsky, Yael and Elad, Michael},
  journal={IEEE Transactions on Signal and Information Processing over Networks},
  volume={2},
  number={4},
  pages={611--624},
  year={2016},
  publisher={IEEE}
}

@inproceedings{landgren2016distributed,
  title={On distributed cooperative decision-making in multiarmed bandits},
  author={Landgren, Peter and Srivastava, Vaibhav and Leonard, Naomi Ehrich},
  booktitle={2016 European Control Conference (ECC)},
  pages={243--248},
  year={2016},
  organization={IEEE}
}

@inproceedings{szorenyi2013gossip,
  title={Gossip-based distributed stochastic bandit algorithms},
  author={Szorenyi, Balazs and Busa-Fekete, R{\'o}bert and Hegedus, Istv{\'a}n and Orm{\'a}ndi, R{\'o}bert and Jelasity, M{\'a}rk and K{\'e}gl, Bal{\'a}zs},
  booktitle={International conference on machine learning},
  pages={19--27},
  year={2013},
  organization={PMLR}
}

@article{zhou2022kernelized,
  title={On kernelized multi-armed bandits with constraints},
  author={Zhou, Xingyu and Ji, Bo},
  journal={Advances in neural information processing systems},
  volume={35},
  pages={14--26},
  year={2022}
}

@article{bubeck2021kernel,
  title={Kernel-based methods for bandit convex optimization},
  author={Bubeck, S{\'e}bastien and Eldan, Ronen and Lee, Yin Tat},
  journal={Journal of the ACM (JACM)},
  volume={68},
  number={4},
  pages={1--35},
  year={2021},
  publisher={ACM New York, NY}
}

@article{du2021collaborative,
  title={Collaborative pure exploration in kernel bandit},
  author={Du, Yihan and Chen, Wei and Kuroki, Yuko and Huang, Longbo},
  journal={arXiv preprint arXiv:2110.15771},
  year={2021}
}

@article{li2022communication,
  title={Communication efficient distributed learning for kernelized contextual bandits},
  author={Li, Chuanhao and Wang, Huazheng and Wang, Mengdi and Wang, Hongning},
  journal={Advances in Neural Information Processing Systems},
  volume={35},
  pages={19773--19785},
  year={2022}
}

@inproceedings{vaswani2017horde,
  title={Horde of bandits using gaussian markov random fields},
  author={Vaswani, Sharan and Schmidt, Mark and Lakshmanan, Laks},
  booktitle={Artificial Intelligence and Statistics},
  pages={690--699},
  year={2017},
  organization={PMLR}
}

@inproceedings{wu2022residual,
  title={Residual bootstrap exploration for stochastic linear bandit},
  author={Wu, Shuang and Wang, Chi-Hua and Li, Yuantong and Cheng, Guang},
  booktitle={Uncertainty in Artificial Intelligence},
  pages={2117--2127},
  year={2022},
  organization={PMLR}
}

@article{kveton2019perturbed,
  title={Perturbed-history exploration in stochastic linear bandits},
  author={Kveton, Branislav and Szepesvari, Csaba and Ghavamzadeh, Mohammad and Boutilier, Craig},
  journal={arXiv preprint arXiv:1903.09132},
  year={2019}
}

@article{thompson1933likelihood,
  title={On the likelihood that one unknown probability exceeds another in view of the evidence of two samples},
  author={Thompson, William R},
  journal={Biometrika},
  volume={25},
  number={3/4},
  pages={285--294},
  year={1933},
  publisher={JSTOR}
}

@article{russo2018tutorial,
  title={A tutorial on thompson sampling},
  author={Russo, Daniel J and Van Roy, Benjamin and Kazerouni, Abbas and Osband, Ian and Wen, Zheng and others},
  journal={Foundations and Trends{\textregistered} in Machine Learning},
  volume={11},
  number={1},
  pages={1--96},
  year={2018},
  publisher={Now Publishers, Inc.}
}

@article{srinivas2009gaussian,
  title={Gaussian process optimization in the bandit setting: No regret and experimental design},
  author={Srinivas, Niranjan and Krause, Andreas and Kakade, Sham M and Seeger, Matthias},
  journal={arXiv preprint arXiv:0912.3995},
  year={2009}
}

@article{wang2025generalized,
  title={Generalized Low-Rank Matrix Contextual Bandits with Graph Information},
  author={Wang, Yao and Li, Jiannan and Kang, Yue and Gao, Shanxing and Xiao, Zhenxin},
  journal={arXiv preprint arXiv:2507.17528},
  year={2025}
}

@inproceedings{gong2025efficient,
  title={Efficient Graph Bandit Learning with Side-Observations and Switching Constraints},
  author={Gong, Xueping and Zhang, Jiheng},
  booktitle={Proceedings of the AAAI Conference on Artificial Intelligence},
  volume={39},
  number={16},
  pages={16871--16879},
  year={2025}
}

@article{krause2011contextual,
  title={Contextual gaussian process bandit optimization},
  author={Krause, Andreas and Ong, Cheng},
  journal={Advances in neural information processing systems},
  volume={24},
  year={2011}
}

@inproceedings{vakili2021information,
  title={On information gain and regret bounds in gaussian process bandits},
  author={Vakili, Sattar and Khezeli, Kia and Picheny, Victor},
  booktitle={International Conference on Artificial Intelligence and Statistics},
  pages={82--90},
  year={2021},
  organization={PMLR}
}

@inproceedings{venkitaraman2020gaussian,
  title={Gaussian processes over graphs},
  author={Venkitaraman, Arun and Chatterjee, Saikat and Handel, Peter},
  booktitle={ICASSP 2020-2020 IEEE International Conference on Acoustics, Speech and Signal Processing (ICASSP)},
  pages={5640--5644},
  year={2020},
  organization={IEEE}
}

@article{zhi2023gaussian,
  title={Gaussian processes on graphs via spectral kernel learning},
  author={Zhi, Yin-Cong and Ng, Yin Cheng and Dong, Xiaowen},
  journal={IEEE Transactions on Signal and Information Processing over Networks},
  volume={9},
  pages={304--314},
  year={2023},
  publisher={IEEE}
}

@article{blanco2021evolving,
  title={Evolving-graph Gaussian processes},
  author={Blanco-Mulero, David and Heinonen, Markus and Kyrki, Ville},
  journal={arXiv preprint arXiv:2106.15127},
  year={2021}
}

@article{alon2017nonstochastic,
  title={Nonstochastic multi-armed bandits with graph-structured feedback},
  author={Alon, Noga and Cesa-Bianchi, Nicolo and Gentile, Claudio and Mannor, Shie and Mansour, Yishay and Shamir, Ohad},
  journal={SIAM Journal on Computing},
  volume={46},
  number={6},
  pages={1785--1826},
  year={2017},
  publisher={SIAM}
}

@inproceedings{wu2016contextual,
  title={Contextual bandits in a collaborative environment},
  author={Wu, Qingyun and Wang, Huazheng and Gu, Quanquan and Wang, Hongning},
  booktitle={Proceedings of the 39th International ACM SIGIR conference on Research and Development in Information Retrieval},
  pages={529--538},
  year={2016}
}

@article{kolla2018collaborative,
  title={Collaborative learning of stochastic bandits over a social network},
  author={Kolla, Ravi Kumar and Jagannathan, Krishna and Gopalan, Aditya},
  journal={IEEE/ACM Transactions on Networking},
  volume={26},
  number={4},
  pages={1782--1795},
  year={2018},
  publisher={IEEE}
}

@article{alon2013bandits,
  title={From bandits to experts: A tale of domination and independence},
  author={Alon, Noga and Cesa-Bianchi, Nicolo and Gentile, Claudio and Mansour, Yishay},
  journal={Advances in Neural Information Processing Systems},
  volume={26},
  year={2013}
}

@inproceedings{liu2018information,
  title={Information directed sampling for stochastic bandits with graph feedback},
  author={Liu, Fang and Buccapatnam, Swapna and Shroff, Ness},
  booktitle={Proceedings of the AAAI Conference on Artificial Intelligence},
  volume={32},
  number={1},
  year={2018}
}

@article{liu2018analysis,
  title={Analysis of thompson sampling for graphical bandits without the graphs},
  author={Liu, Fang and Zheng, Zizhan and Shroff, Ness},
  journal={arXiv preprint arXiv:1805.08930},
  year={2018}
}

@article{arora2019bandits,
  title={Bandits with feedback graphs and switching costs},
  author={Arora, Raman and Marinov, Teodor Vanislavov and Mohri, Mehryar},
  journal={Advances in Neural Information Processing Systems},
  volume={32},
  year={2019}
}

@inproceedings{lu2021stochastic,
  title={Stochastic bandits with graph feedback in non-stationary environments},
  author={Lu, Shiyin and Hu, Yao and Zhang, Lijun},
  booktitle={Proceedings of the AAAI Conference on Artificial Intelligence},
  volume={35},
  number={10},
  pages={8758--8766},
  year={2021}
}

@article{zhang2023practical,
  title={Practical contextual bandits with feedback graphs},
  author={Zhang, Mengxiao and Zhang, Yuheng and Vrousgou, Olga and Luo, Haipeng and Mineiro, Paul},
  journal={Advances in Neural Information Processing Systems},
  volume={36},
  pages={30592--30617},
  year={2023}
}

@inproceedings{wang2017factorization,
  title={Factorization bandits for interactive recommendation},
  author={Wang, Huazheng and Wu, Qingyun and Wang, Hongning},
  booktitle={Proceedings of the AAAI conference on artificial intelligence},
  volume={31},
  number={1},
  year={2017}
}

@inproceedings{li2016collaborative,
  title={Collaborative filtering bandits},
  author={Li, Shuai and Karatzoglou, Alexandros and Gentile, Claudio},
  booktitle={Proceedings of the 39th International ACM SIGIR conference on Research and Development in Information Retrieval},
  pages={539--548},
  year={2016}
}

@inproceedings{chawla2023collaborative,
  title={Collaborative multi-agent heterogeneous multi-armed bandits},
  author={Chawla, Ronshee and Vial, Daniel and Shakkottai, Sanjay and Srikant, R},
  booktitle={International Conference on Machine Learning},
  pages={4189--4217},
  year={2023},
  organization={PMLR}
}

@inproceedings{christakopoulou2018learning,
  title={Learning to interact with users: A collaborative-bandit approach},
  author={Christakopoulou, Konstantina and Banerjee, Arindam},
  booktitle={Proceedings of the 2018 SIAM International Conference on Data Mining},
  pages={612--620},
  year={2018},
  organization={SIAM}
}

@article{xu2024farm,
  title={Farm to Fridge: Digital Traceability and Quality Upgrading in the Kenyan Dairy Value Chain},
  author={Xu, Guanghong},
  journal={Available at SSRN 5722923},
  year={2024}
}

@article{wang2023deviance,
  title={Deviance matrix factorization},
  author={Wang, Liang and Carvalho, Luis},
  journal={Electronic Journal of Statistics},
  volume={17},
  number={2},
  pages={3762--3810},
  year={2023},
  publisher={The Institute of Mathematical Statistics and the Bernoulli Society}
}

@article{wu2022graph,
  title={Graph federated learning with hidden representation sharing},
  author={Wu, Shuang and Zhang, Mingxuan and Li, Yuantong and Yang, Carl and Li, Pan},
  journal={arXiv preprint arXiv:2212.12158},
  year={2022}
}

@article{wang2024computational,
  title={Computational Approaches for Exponential-Family Factor Analysis},
  author={Wang, Liang and Carvalho, Luis},
  journal={arXiv preprint arXiv:2403.14925},
  year={2024}
}

@inproceedings{glenn2024blendsql,
  title={Blendsql: A scalable dialect for unifying hybrid question answering in relational algebra},
  author={Glenn, Parker and Dakle, Parag and Wang, Liang and Raghavan, Preethi},
  booktitle={Findings of the Association for Computational Linguistics: ACL 2024},
  pages={453--466},
  year={2024}
}

@inproceedings{wang2023accurate,
  title={Accurate training of web-based question answering systems with feedback from ranked users},
  author={Wang, Liang and Lauriola, Ivano and Moschitti, Alessandro},
  booktitle={Proceedings of the 61st Annual Meeting of the Association for Computational Linguistics (Volume 5: Industry Track)},
  pages={660--667},
  year={2023}
}
\bibliographystyle{unsrt}

\clearpage

\appendix

\section{Related Work}
\label{sec:related:work}


\paragraph{Gaussian Processes on Graphs} 
Our kernel construction builds upon foundational work in graph regularization. \cite{smola2003kernels} originally established that penalizing the discrete graph norm $\|f\|_L^2 = f^\top \mathbf{L} f$ induces a Reproducing Kernel Hilbert Space (RKHS) where the kernel is the pseudoinverse of the Laplacian. Our Theorem~\ref{theorem: multi_user_rkhs} formalizes this duality for the vector-valued case via a tensor product RKHS. We note that this structural result can essentially be inferred from the comprehensive review of vector-valued functions by Alvarez et al.~\cite{alvarez2012kernels}.

Following \cite{smola2003kernels}, any positive semi-definite kernel on the vertices that is a function of the Laplacian can be written in the eigenbasis of $\bL$ as
$\bK_G \;=\; \sum_{i=1}^n r(\lambda_i)\, \bq_i \bq_i^\top$
where $\{(\lambda_i,\bq_i)\}_{i=1}^n$ are the eigenpairs of $\bL$ and $r(\cdot)\ge 0$ is a spectral transfer function. Our choice $\bK_G = (\bL+\rho \bI)^{-1}$ corresponds to $r(\lambda) = 1/(\lambda+\rho)$, which is monotone decreasing and therefore shrinks high-frequency components more strongly, enforcing a smooth/homophilous prior. In principle, non-monotone or band-pass transfer functions $r$ can encode more complex, possibly non-smooth or heterophilous relations between users; analyzing such priors in the bandit setting is an interesting direction for future work.

More recent works in graph signal processing adopt related Laplacian-based constructions but do not use the induced RKHS norm as the main vehicle for analysis. \cite{venkitaraman2020gaussian} obtain Gaussian Processes over graphs from a Laplacian prior, and \cite{zhi2023gaussian} further generalize this by learning a spectral filter $g(\bL)$ applied directly to the Laplacian. In both cases, the focus is on batch regression and signal reconstruction; the underlying regularizer can be characterized spectrally in terms of the transfer function associated with $g$, in the sense of ~\cite{smola2003kernels}, but it is not the primary object of study. By contrast, in our work we commit to the specific Green’s-function kernel $\bK_G = (\bL+\rho \bI)^{-1}$, which corresponds to the classical Dirichlet energy regularizer and enforces a homophilous prior. This choice yields a simple, explicit RKHS norm that we can track throughout the analysis and directly tie to the effective dimension and regret in the multi-user bandit setting.

\paragraph{Graph-structure Bandits} Graph-based bandit models are also relevant but conceptually distinct. In nonstochastic bandits with graph-structured feedback, a learner chooses an arm (node) and observes the losses of that arm and its neighbors in a feedback graph, interpolating between full-information and standard bandits~\cite{alon2017nonstochastic}. Regret bounds in this line of work typically scale with graph-theoretic quantities such as the independence number $\alpha(G)$ or related observability parameters~\cite{alon2013bandits}. Follow-up studies on bandits with feedback graphs and graphical bandits refine these guarantees and extend them to stochastic settings, switching costs, adversarial corruptions, non-stationary environments, and contextual bandits, with regret controlled by parameters such as domination and weak-domination numbers, clique-cover and independence numbers, or maximum acyclic subgraph–type quantities~\cite{liu2018information,liu2018analysis,arora2019bandits,lu2021stochastic,zhang2023practical}. In our setting, the user graph instead encodes prior correlation across user value functions through a Laplacian kernel; feedback remains strictly bandit (we only observe the reward of the chosen user–arm pair). Consequently, the graph enters our analysis only via the spectrum of the user kernel and the resulting effective dimension, rather than via such side-information parameters used in graphical bandit regret bounds.

\paragraph{Collaborative Bandits} Our approach is 
related to collaborative contextual bandits on graph, which exploit relations among users to accelerate learning. The collaborative contextual bandit~\cite{wu2016contextual} uses a user adjacency graph to share context and reward information online, effectively adding a Laplacian-type regularizer to a linear contextual bandit model. Other works consider low-rank or factorization-based collaborative bandits, such as matrix-factorization bandits for interactive recommendation~\cite{wang2017factorization} and collaborative filtering bandits that co-cluster users and items in a bandit framework~\cite{li2016collaborative}. A complementary line of work studies multi-agent bandits over social networks, where multiple players observe or share each other’s actions and rewards to reduce regret\cite{kolla2018collaborative,chawla2023collaborative,christakopoulou2018learning}. These methods typically either (i) impose linear models with manually chosen regularizers, or (ii) model collaboration via latent factors, clustering, or message passing, without an explicit multi-output RKHS / GP interpretation. By contrast, our Laplacian-kernelized construction provides a principled kernel view of collaboration: the known user graph defines a positive-definite user kernel that is combined with a flexible context kernel, leading to algorithms whose uncertainty quantification and regret depend explicitly on the joint spectrum of the graph Laplacian and the base kernel, rather than on the number of users, clusters, or latent dimensions.



\paragraph{Cooperative Multi-Agent Kernelized Bandits}

\cite{dubey2020kernel} study a cooperative multi-agent kernelized contextual bandit with delayed communication over a fixed graph $G=(V,E)$. In their model, \emph{every} agent $v\in V$ acts at every round $t$, selecting an action $x_{v,t}$ and receiving a reward $y_{v,t}$, so that after $T$ rounds there are $|V|T$ observations; the graph $G$ is used solely to constrain message passing and appears in the regret via graph-theoretic quantities (e.g., clique numbers of graph powers), but it does \emph{not} enter the construction of the similarity kernel between agents or the modeling of the reward functions themselves. Instead, Dubey et al.\ posit a latent ``network context'' $z_v$ for each agent and assume a global function $F(x,z)$ in the RKHS of a product kernel $K((x,z),(x',z')) = K_x(x,x')K_z(z,z')$. When the network contexts (or the kernel $K_z$) are not available, they propose to \emph{estimate} them from the contexts $x_{v,t}$ by embedding each agent’s context distribution $P_v$ into the RKHS of $K_x$ and defining $K_z$ as an RBF kernel on these mean embeddings. Thus, the agent kernel is ultimately a learned similarity over (estimated) context distributions, and the underlying communication graph plays no direct role in defining task similarity or a smoothness penalty on $(f_v)_{v\in V}$.

By contrast, our setting follows the Gang-of-Bandits model: at each time step a single user is drawn at random, we choose one action for that user, and we observe only one reward, so that after $T$ rounds we have $T$ observations rather than $|V|T$. We also behave as a centralized learner rather than a decentralized network of bandits. Most importantly, we do not introduce or estimate any latent network contexts; instead, we assume a given user graph and \emph{fix} the agent kernel to the inverse regularized Laplacian,
\[
K_z(u,v) = [\bL_\rho^{-1}]_{u,v}.
\]
This kernel is tightly coupled to the global homophily penalty on the vector of reward functions and yields an explicit RKHS norm with a clear smoothness interpretation. This principled graph-based construction allows us to carry out a spectral analysis of the resulting multi-user kernel, relate the regret to the spectrum of $\bL_\rho$, and highlight how the effective dimension adapts to the cluster structure of the user graph, rather than reducing network information to ad hoc latent features inferred from context distributions.

\paragraph{Broader applications.}
While our focus is methodological, graph-structured sharing for sequential decision making is relevant to broader settings with relational data and noisy feedback, including supply-chain traceability \cite{xu2024farm}, exponential-family latent factor modeling for heterogeneous outcomes \cite{wang2023deviance,wang2024computational}, and federated learning with structured representation sharing \cite{wu2022graph}. Related but orthogonal directions also arise in NLP systems trained from ranked user feedback \cite{wang2023accurate} and hybrid question answering over relational data \cite{glenn2024blendsql}.

\section{Proof of Theorem~\ref{theorem: multi_user_rkhs}}
\label{appendix: proof of theorem: multi_user_rkhs}

\begin{proof} 
The proof proceeds in three main steps: (1) We construct the Hilbert space for our multi-user problem as the tensor product of the user space and the context space; (2) We define a feature map into this space and show that its inner product yields the kernel $K$. This establishes that our constructed space is indeed the RKHS $\Hc$; (3)  We characterize the elements of $\Hc$ and derive the expression for their norm.

\paragraph{Step 1: Constructing the Hilbert Space via Tensor Product.}
Let $\Hc_G = \BR^n$ be the finite-dimensional Hilbert space for the users, equipped with the standard Euclidean inner product $\langle \bu, \bv \rangle_{\Hc_G} = \bu^{\top} \bv$. $\{\be_i\}_{i=1}^n$ forms the standard orthonormal basis for $\Hc_G $. Our \emph{multi-user RKHS} $\Hc$ is the tensor product of $\Hc_G$ and $\Hc_x$:
\[
\Hc := \Hc_G \otimes \Hc_x = \BR^n \otimes \Hc_x.
\]
The elements of $\Hc$ are (limits of) finite linear combinations of elementary tensors of the form $\bu \otimes h$, where $\bu \in \Hc_U$ and $h \in \Hc_x$. The inner product in $\Hc$ is defined on these elementary tensors and extended by linearity:
\[
\langle \bu_1 \otimes h_1, \bu_2 \otimes h_2 \rangle_{\Hc} 
:= 
\langle \bu_1, \bu_2 \rangle_{\Hc_G}
\langle h_1, h_2 \rangle_{\Hc_x}.
\]

\paragraph{Step 2: Defining the Feature Map and Verifying the Kernel.}
Let $\bL_\rho^{1/2}$ be the unique symmetric positive definite square root of $\bL_\rho$. 
We define the feature map $\phi: (\Uc \times \Dc) \to \Hc$ as:
\[
\phi(\bx, u) := \left(L_\rho^{-1/2} \be_i\right) \otimes \varphi(\bx).
\]
This is a valid element of $\Hc$ since $\bL_\rho^{-1/2} \be_i \in \BR^n = \Hc_G$ and $\varphi(\bx) \in \Hc_x$. 
Let's compute the inner product of two such feature mappings in $\Hc$:
\begin{align*}
\langle \phi(\bx, i) , \phi(\by, j)\rangle_{\Hc}
&= 
\langle (\bL_\rho^{-1/2} \be_i) \otimes \varphi(\bx), (\bL_\rho^{-1/2} \be_j) \otimes \varphi(\by) \rangle_{\Hc}\\
&=
\langle \bL_\rho^{-1/2} \be_i, \bL_\rho^{-1/2} \be_j \rangle_{\Hc_G} \cdot
\langle \varphi(\bx),\varphi(\by) \rangle_{\Hc_x}\\
&= 
\be_i^{\top} \bL_\rho^{-1} \be_j \cdot K_x(\bx, \by)\\
&= 
[\bL_\rho^{-1}]_{ij} \cdot K_x(\bx, \by) = K((\bx,i), (\by,j)).
\end{align*}
By the fundamental property of RKHS, since the kernel $K$ is generated by the inner product of the feature map $\phi$ in the Hilbert space $\Hc$, $\Hc$ is the unique RKHS associated with $K$.

\paragraph{Step 3: Characterizing Functions in $\Hc$ and their Norms.}
An element of $\Hc$ is a function $f: (\Uc \times \Dc) \to \BR$. 
By the Riesz representation theorem, for each $f \in \Hc$, there exists a unique element $\btheta \in \Hc$ such that $f(\cdot, \cdot) = \langle \btheta, \phi(\cdot, \cdot) \rangle_{\Hc}$ and $\norm{f}_{\Hc} = \norm{\btheta}_{\Hc}$. 
For some component functions $\{g_k\}_{k=1}^n \subset \Hc_x$, we can uniquely express $\btheta$ as  
\[
\btheta = \sum_{k=1}^n \be_k \otimes g_k
\]
and the squared norm of $\btheta$ in $\Hc$ is then:
\[
\norm{\btheta}_{\Hc}^2 =
\Big\langle \sum_k \be_k \otimes g_k, \sum_l \be_l \otimes g_l \Big\rangle_{\Hc} 
= \sum_{k,l} 
\langle \be_k, \be_l \rangle_{\Hc_G}
\langle g_k, g_l \rangle_{\Hc_x}
= \sum_{k=1}^n \norm{g_k}_{\Hc_x}^2.
\]
Then we can relate our reward functions $f_{1:n}$ to the component functions $\{g_k\}_{k=1}^n$:
\begin{align*}
f_i(\bx) 
& = \langle \btheta, \phi((\bx, i)) \rangle_{\Hc} \\
& = \langle \sum_{k=1}^n \be_k \otimes g_k, (\bL_\rho^{-1/2} \be_i) \otimes \varphi(\bx) \rangle_{\Hc} \\
& = \sum_{k=1}^n \langle \be_k, \bL_\rho^{-1/2} \be_i \rangle_{\Hc_G} \cdot \langle g_k, \varphi(\bx) \rangle_{\Hc_x} \\
& = \sum_{k=1}^n [\bL_\rho^{-1/2}]_{ki} \cdot \langle g_k, \varphi(\bx) \rangle_{\Hc_x} \\
& = \sum_{k=1}^n [\bL_\rho^{-1/2}]_{ki} g_k(\bx)\quad (\text{since } \langle g, \varphi(\bx) \rangle_{\Hc_x} = g(\bx) )
\end{align*}
which leads to
\[
g_k(\bx) = \sum_{j=1}^n [\bL_\rho^{1/2}]_{kj} f_j(\bx).
\]
This equality holds for the functions as elements of $\Hc_\kappa$: $g_k = \sum_{j=1}^n [L_\rho^{1/2}]_{kj} f_j$.

Finally, we compute the norm of $f$ in $\Hc$:
\begin{align*}
\norm{f}_{\Hc}^2 
&= \norm{\btheta}_{\Hc}^2 = \sum_{k=1}^n \norm{g_k}_{\Hc_x}^2 \\
&= \sum_{k=1}^n \norm{\sum_{j=1}^n [\bL_\rho^{1/2}]_{kj} f_j}_{\Hc_x}^2 \\
&= \sum_{k=1}^n 
\langle \sum_{j=1}^n [\bL_\rho^{1/2}]_{kj} f_j, \sum_{l=1}^n [\bL_\rho^{1/2}]_{kl} f_l\rangle_{\Hc_x}\\
&= \sum_{k=1}^n \sum_{j,l=1}^n
[L_\rho^{1/2}]_{kj}  [\bL_\rho^{1/2}]_{kl} \langle f_j,  f_l\rangle_{\Hc_x}\\
&= \sum_{j,l=1}^n
\bigg( \sum_{k=1}^n [\bL_\rho^{1/2}]_{kj}  [L_\rho^{1/2}]_{kl} \bigg)
\langle f_j,  f_l\rangle_{\Hc_x}\\
&= \sum_{j,l=1}^n
[\bL_{\rho}]_{jl}
\langle f_j,  f_l\rangle_{\Hc_x}\\
\end{align*}
where the last step is because the term in parentheses is the $(j,l)$-th element of the matrix product $(\bL_\rho^{1/2})^{\top} \bL_\rho^{1/2} = \bL_\rho^{1/2} \bL_\rho^{1/2} = \bL_\rho$. By polarization identity, the associated inner product in $\Hc$ is:
\[
\langle f, g \rangle_{\Hc} := \sum_{i,j=1}^n [\bL_\rho]_{ij} \langle f_i, g_j \rangle_{\Hc_x}.
\]

To see that $\norm{f}_{\Hc}^2$ is the exactly the penalty in \eqref{eq: penalty} , we expand $\bL_\rho = \rho I_n + \bL$:
\begin{align*}
\norm{f}_{\Hc}^2 
&= \sum_{j,l=1}^n (\rho \BI\{j=l\} + [\bL]_{jl}) \langle f_j, f_l \rangle_{\Hc_x} \\
&= \rho \sum_{j=1}^n \norm{f_j}_{\Hc_x}^2 + \sum_{j,l=1}^n [\bL]_{jl} \langle f_j, f_l \rangle_{\Hc_x}.
\end{align*}
Using the standard identity for the Laplacian quadratic form, the second term in the above equation is exactly $\frac{1}{2}\sum_{i,j} w_{ij} \norm{f_i - f_j}_{\Hc_x}^2$, we get:
\[
\norm{f}_{\Hc}^2  = \rho \sum_{j=1}^n \norm{f_j}_{\Hc_x}^2 + \frac{1}{2}\sum_{i,j} w_{ij} \norm{f_i - f_j}_{\Hc_x}^2.
\]
This completes the proof.
\end{proof}

\section{Proofs in Analysis}

We first define following additional notations
\begin{align}
\label{eq: additional notations}
    \bPhi_t &:= [\phi(\bx_1, u_1), \cdots, \phi(\bx_t, u_t)]^{\top}  \\
    \bJ_t &:= \bPhi_t^{\top} \bPhi_t  \\
    \bGamma_t &:= \bJ_t + \lambda \bI_{\infty} \\
    \bSigma_t &:= \bK_t + \lambda \bI_t
\end{align}
Here we have $\bPhi_t \in \BR^{t \times \infty}$ and $\bJ_t$, $\bGamma_t$ are from $\BR^{ \infty \times \infty}$. 

Then we define some useful events for concentration:
\begin{align*}
    \Ec_t^{\text{ts}} &= \{ | z_{t}(\bx) | \leq \sqrt{2 \log(t^2 |\Dc_t| )}, \text{ for all } \bx \in \Dc_t \} \\
    \Ec_t^a &= \{ \mu_{u_t, t-1}(\bx_t^*) + \beta_t z_{t}(\bx_t^*) \sigma_{u_t,t-1}(\bx_t^*) > f(\bx^*_t, u_t)\}
\end{align*}
where $z_{t}(\bx) \sim \Nc(0,1)$ stands for the resampling randomness in Thompson Sampling. 
We also define the confidence set at round $t$:
\begin{align}
\label{eq: confidence set}
    \Cc_t := 
    \{ |\mu_{u_t, t-1}(\bx_t) - f(\bx_t, u_t) |  \leq \beta_t \cdot\sigma_{u_t, t-1}(\bx_t)\}
\end{align}
where 
\begin{align*}
    \beta_t :=
    \Bigg(B_{\rho} +  \sqrt{\frac{\sigma^2}{\lambda} \cdot \log \det (\bI_{t-1} + \lambda^{-1} \bK_{t-1}) + 
    \frac{2 \sigma^2}{\lambda}  \log \frac{1}{\delta} } \Bigg).
\end{align*}

In addition, recall the following effective dimension 
\begin{equation*}
    \deff := \frac{\log \det(\bI_T + \bK_T / \lambda)}{\log(1 + T K_{\max} / \lambda )}
\end{equation*}
and the upper bound of the optimality gap:
\[
|\Delta_t| \leq B_{\Delta} := 2  B_{\rho}  K_{\max}^{1/2}. 
\]
Lastly, we provide the following Lemmas, which are commonly required in regret analysis.

\begin{lemma} [Concentrations for TS]
\label{lemma: TS concentration}
For all $t \in [T]$, we have $\BP_t(\bar{\Ec}_t^{ts}) \leq t^{-2}$ and $\BP_t(\Ec_t^a | \Cc_t) \geq (4 e \sqrt{\pi})^{-1}$.
\end{lemma}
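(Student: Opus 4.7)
The two claims are essentially Gaussian tail and Gaussian anti-concentration statements, applied to the single Gaussian perturbation $z_t(\bx) \sim \Nc(0,1)$ that drives the TS decision in~\eqref{eq: TS decision}. The plan is to treat them separately and to exploit the fact that $\{z_t(\bx)\}_{\bx \in \Dc_t}$ is independent of $\Fc_t$ by construction of the TS sampler.

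\textbf{Step 1: the high-probability tail bound $\BP_t(\bar\Ec_t^{ts}) \le t^{-2}$.}
For a single $\bx \in \Dc_t$, since $z_t(\bx) \sim \Nc(0,1)$ conditionally on $\Fc_t$, the standard sub-Gaussian tail gives $\BP_t(|z_t(\bx)| > s) \le e^{-s^2/2}$ for $s \ge 0$. Setting $s = \sqrt{2\log(t^2 |\Dc_t|)}$ makes this probability at most $1/(t^2 |\Dc_t|)$. A union bound over the (finite) set $\Dc_t$ then yields $\BP_t(\bar\Ec_t^{ts}) \le |\Dc_t| \cdot (t^2 |\Dc_t|)^{-1} = t^{-2}$, as required.

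\textbf{Step 2: the anti-concentration bound $\BP_t(\Ec_t^a \mid \Cc_t) \ge (4e\sqrt{\pi})^{-1}$.}
I would read $\Cc_t$ in the usual TS-analysis sense, namely as the event that Theorem~\ref{theorem: confidence bound} holds, so that in particular $|\mu_{u_t,t-1}(\bx) - f(\bx,u_t)| \le \beta_t \sigma_{u_t,t-1}(\bx)$ simultaneously for the evaluation points of interest (including $\bx_t^*$). On $\Cc_t$ this gives $f(\bx_t^*, u_t) - \mu_{u_t,t-1}(\bx_t^*) \le \beta_t \sigma_{u_t,t-1}(\bx_t^*)$, so that $\Ec_t^a$ is implied by the event
\begin{equation*}
z_t(\bx_t^*) \;>\; \frac{f(\bx_t^*, u_t) - \mu_{u_t,t-1}(\bx_t^*)}{\beta_t \sigma_{u_t,t-1}(\bx_t^*)} \;=:\; r_t,
\end{equation*}
where the right-hand side $r_t$ is $\Fc_t$-measurable and, on $\Cc_t$, satisfies $r_t \le 1$. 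Since $z_t(\bx_t^*) \sim \Nc(0,1)$ is independent of $\Fc_t$, we obtain
\begin{equation*}
\BP_t(\Ec_t^a \mid \Cc_t) \;\ge\; \BP_t(z_t(\bx_t^*) > r_t \mid \Cc_t) \;\ge\; \BP(Z > 1), \qquad Z \sim \Nc(0,1).
\end{equation*}
It then remains to lower bound $\BP(Z>1)$ by the explicit constant $(4e\sqrt{\pi})^{-1}$ via a standard Mills-ratio / Gaussian anti-concentration inequality of the form $\BP(Z > z) \ge e^{-z^2}/(4\sqrt{\pi})$ for $z \in [0,1]$, evaluated at $z=1$; this is the inequality used in the Agrawal-Goyal style TS analyses.

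\textbf{Main obstacle.}
Step~1 is routine. The one subtle point in Step~2 is that, as literally written, $\Cc_t$ in~\eqref{eq: confidence set} only controls $\bx_t$, whereas the argument needs control at $\bx_t^*$. I would handle this by invoking the uniform-in-$(\bx,u)$ version of the confidence bound established in Theorem~\ref{theorem: confidence bound}, i.e., using the event that the confidence bound holds for all $(\bx,u)$ simultaneously, so that the ratio $r_t$ above is bounded by $1$ on that event. Once this identification is made, the rest is a one-line reduction to a standard Gaussian anti-concentration constant.
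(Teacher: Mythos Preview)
Your proposal is correct and follows essentially the same route as the paper: the union-bound Gaussian tail for $\bar\Ec_t^{ts}$ and the reduction of $\Ec_t^a$ on $\Cc_t$ to the event $\{z_t(\bx_t^*) > 1\}$, followed by the Agrawal--Goyal anti-concentration inequality $\BP(Z>1)\ge (4e\sqrt{\pi})^{-1}$. Your flagged subtlety about $\Cc_t$ needing to cover $\bx_t^*$ (resolved by appealing to the uniform-in-$(\bx,u)$ bound of Theorem~\ref{theorem: confidence bound}) is exactly the reading the paper implicitly uses as well.
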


\begin{lemma} [One Step Regret Bound for TS]
\label{lemma: TS one step regret bound}
Suppose $\BP_t(\Ec_t^a) - \BP_t(\bar{\Ec}_t^{ts}) > 0$. Then for any $t$, almost surely,
\begin{align*}
    \BE_t[\Delta_t \BI_{\Cc_{t}} ] 
    \leq \BI_{\Cc_{t}} \cdot \Bigg\{ \Big( \frac{2}{\BP_t( \Ec_t^{a}) - \BP_t( \bar{\Ec}_t^{\text{ts}} )} + 1 \Big) \cdot \BE_t[ \gamma_t \sigma_{u_t, t-1}(\bx_t) ] + B_{\Delta} \cdot \BP_t(\bar{\Ec}_t^{\text{ts}}) \Bigg \} 
\end{align*}
where $\gamma_t  := \beta_t + \beta_t \sqrt{2 \log(t^2 |\Dc_t| )}$ and $B_{\Delta} := 2  B_{\rho}  K_{\max}^{1/2}$
\end{lemma}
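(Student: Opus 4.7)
The plan is to adapt the Agrawal--Goyal probability-matching argument from linear TS to our GP-TS setting, proceeding in three stages. Throughout I would treat $\Cc_t$ as the $\Fc_{t-1}$-measurable uniform confidence event from Theorem~\ref{theorem: confidence bound} (so $|\mu_{u_t,t-1}(\bx) - f(\bx,u_t)| \le \beta_t \sigma_{u_t,t-1}(\bx)$ holds for \emph{all} $\bx$ on $\Cc_t$). First, I would split on the Gaussian anti-concentration event: $\BE_t[\Delta_t \BI_{\Cc_t}] = \BE_t[\Delta_t \BI_{\Cc_t \cap \Ec_t^{ts}}] + \BE_t[\Delta_t \BI_{\Cc_t \cap \bar{\Ec}_t^{ts}}]$, bounding the second summand by $\BI_{\Cc_t} B_{\Delta} \BP_t(\bar{\Ec}_t^{ts})$ via the uniform regret bound $|\Delta_t| \le B_{\Delta}$. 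This accounts for the additive $B_{\Delta} \BP_t(\bar{\Ec}_t^{ts})$ term in the claim.

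Next, on $\Cc_t \cap \Ec_t^{ts}$, I would combine the confidence bound with the Gaussian tail bound $|z_t(\bx)| \le \sqrt{2 \log(t^2 |\Dc_t|)}$ to obtain the sandwich $|\mutil_t(\bx) - f(\bx,u_t)| \le \gamma_t \sigma_{u_t,t-1}(\bx)$ for every $\bx \in \Dc_t$, where $\mutil_t(\bx) := \mu_{u_t,t-1}(\bx) + \beta_t z_t(\bx) \sigma_{u_t,t-1}(\bx)$. Introducing the unsaturated set $U_t := \{\bx \in \Dc_t : f(\bx_t^*, u_t) - f(\bx,u_t) \le \gamma_t \sigma_{u_t, t-1}(\bx)\}$ together with the $\Fc_t$-measurable surrogate $\bar{\bx}_t := \argmin_{\bx \in U_t} \sigma_{u_t, t-1}(\bx)$, I would decompose $\Delta_t = [f(\bx_t^*,u_t) - f(\bar{\bx}_t,u_t)] + [f(\bar{\bx}_t,u_t) - f(\bx_t,u_t)]$ and bound the first bracket by $\gamma_t \sigma_{u_t,t-1}(\bar{\bx}_t)$ via $\bar{\bx}_t \in U_t$, and the second by $\gamma_t (\sigma_{u_t,t-1}(\bar{\bx}_t) + \sigma_{u_t,t-1}(\bx_t))$ via the TS optimality $\mutil_t(\bx_t) \ge \mutil_t(\bar{\bx}_t)$ together with the sandwich. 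This yields the deterministic inequality $\Delta_t \le 2\gamma_t \sigma_{u_t,t-1}(\bar{\bx}_t) + \gamma_t \sigma_{u_t,t-1}(\bx_t)$ on $\Cc_t \cap \Ec_t^{ts}$.

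The crux, and what I expect to be the main obstacle, is controlling $\sigma_{u_t,t-1}(\bar{\bx}_t) \BI_{\Cc_t}$ by a scalar multiple of $\BE_t[\sigma_{u_t,t-1}(\bx_t)]$. For this I would establish the inclusion $\Cc_t \cap \Ec_t^{ts} \cap \Ec_t^a \subseteq \{\bx_t \in U_t\}$: any saturated $\bx \notin U_t$ has $\mutil_t(\bx) < f(\bx_t^*,u_t)$ by the sandwich, while on $\Ec_t^a$ the TS rule gives $\mutil_t(\bx_t) \ge \mutil_t(\bx_t^*) > f(\bx_t^*,u_t)$, so $\bx_t$ cannot be saturated. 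Combined with $\BP_t(\Ec_t^a \cap \Ec_t^{ts}) \ge \BP_t(\Ec_t^a) - \BP_t(\bar{\Ec}_t^{ts})$ by inclusion--exclusion, this gives $\BP_t(\bx_t \in U_t) \ge \BI_{\Cc_t}(\BP_t(\Ec_t^a) - \BP_t(\bar{\Ec}_t^{ts}))$; since $\bx_t \in U_t$ forces $\sigma_{u_t,t-1}(\bx_t) \ge \sigma_{u_t,t-1}(\bar{\bx}_t)$ by the argmin definition, one obtains $\sigma_{u_t,t-1}(\bar{\bx}_t) \BI_{\Cc_t} \le \BE_t[\sigma_{u_t,t-1}(\bx_t)] / (\BP_t(\Ec_t^a) - \BP_t(\bar{\Ec}_t^{ts}))$. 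Plugging this into the deterministic bound of the previous step and collecting terms would produce the coefficient $2/(\BP_t(\Ec_t^a) - \BP_t(\bar{\Ec}_t^{ts})) + 1$ on $\BE_t[\gamma_t \sigma_{u_t,t-1}(\bx_t)]$. The subtle design choice is to take $\bar{\bx}_t$ as the argmin over $U_t$ rather than $\bx_t^*$ itself: this is precisely what makes the comparison $\sigma_{u_t,t-1}(\bx_t) \ge \sigma_{u_t,t-1}(\bar{\bx}_t)$ on $\{\bx_t \in U_t\}$ nontrivial, transferring optimism from the unseen optimum into a usable width lower bound on the selected arm.
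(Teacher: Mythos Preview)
Your proposal is correct and follows essentially the same Agrawal--Goyal style argument as the paper: the same split on $\Ec_t^{\text{ts}}$, the same sandwich bound $|\mutil_t(\bx)-f(\bx,u_t)|\le\gamma_t\sigma_{u_t,t-1}(\bx)$ on $\Cc_t\cap\Ec_t^{\text{ts}}$, the same unsaturated-set/argmin surrogate $\bar\bx_t$, the same deterministic bound $\Delta_t\le 2\gamma_t\sigma_{u_t,t-1}(\bar\bx_t)+\gamma_t\sigma_{u_t,t-1}(\bx_t)$, and the same inclusion $\Cc_t\cap\Ec_t^{\text{ts}}\cap\Ec_t^a\subseteq\{\bx_t\in U_t\}$ to transfer $\sigma_{u_t,t-1}(\bar\bx_t)$ to $\BE_t[\sigma_{u_t,t-1}(\bx_t)]$. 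Your explicit insistence on treating $\Cc_t$ as the \emph{uniform} confidence event (over all $\bx$) is a useful clarification, since the paper's displayed definition of $\Cc_t$ is stated only at $\bx_t$ while its proof silently uses the uniform version from Theorem~\ref{theorem: confidence bound}.
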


\begin{lemma} [Cumulative Uncertainty Bound]
\label{lemma: bounding sum of sigma} 
We have the upper bound for the cumulative estimated uncertainty:
\begin{align*}
    \sum_{t=1}^T \sigma_{u_t, t-1}(\bx_t) 
    \leq 
    \sqrt{ 2 T \max\{1, K_{\max}\} \cdot \log \det(\bI_T + \lambda^{-1} \bK_T) }
\end{align*}
\end{lemma}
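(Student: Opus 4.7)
The plan is to use the two-step decomposition that is standard in GP-UCB style analyses: first convert the linear sum of posterior standard deviations into a quadratic sum via Cauchy--Schwarz, and then control the quadratic sum by the information gain $\log\det(\bI_T+\lambda^{-1}\bK_T)$ through an elliptical-potential--type argument. The Cauchy--Schwarz step gives
\[
\sum_{t=1}^T \sigma_{u_t,t-1}(\bx_t) \le \sqrt{T \sum_{t=1}^T \sigma_{u_t,t-1}^2(\bx_t)},
\]
so the work reduces to bounding $\sum_{t=1}^T \sigma_{u_t,t-1}^2(\bx_t)$ by roughly $2\max\{1,K_{\max}\}\log\det(\bI_T+\lambda^{-1}\bK_T)$.

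For the per-round variance bound I would combine two facts. First, $\sigma_{u_t,t-1}^2(\bx_t) \le K((\bx_t,u_t),(\bx_t,u_t)) \le K_{\max}$, which follows from Assumption~\ref{assumption: bounded base kernel} together with the explicit multi-user kernel from Theorem~\ref{theorem: multi_user_rkhs}. Second, the elementary inequality $x \le \frac{M}{\log(1+M)}\log(1+x)$, valid for $x \in [0,M]$, applied with $x = \sigma_{u_t,t-1}^2(\bx_t)/\lambda$ and $M = K_{\max}/\lambda$, yields
\[
\sigma_{u_t,t-1}^2(\bx_t) \le \frac{K_{\max}}{\log(1+K_{\max}/\lambda)}\,\log\bigl(1+\sigma_{u_t,t-1}^2(\bx_t)/\lambda\bigr).
\]
A short case split on whether $K_{\max}\le 1$ or $K_{\max}>1$ (and on the corresponding regime of $\lambda$) should then allow me to absorb the prefactor into the stated $2\max\{1,K_{\max}\}$.

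The main technical step is the determinantal identity
\[
\sum_{t=1}^T \log\bigl(1+\sigma_{u_t,t-1}^2(\bx_t)/\lambda\bigr) = \log\det(\bI_T + \lambda^{-1}\bK_T),
\]
which I would establish by induction on $t$. The inductive step uses a Schur-complement / matrix determinant lemma expansion: partitioning $\bK_t + \lambda \bI_t$ so that the pair $(\bx_t,u_t)$ forms the last row and column, one obtains
\[
\det(\bK_t + \lambda \bI_t) = \bigl(\lambda + \sigma_{u_t,t-1}^2(\bx_t)\bigr)\,\det(\bK_{t-1} + \lambda \bI_{t-1}),
\]
precisely because $\sigma_{u_t,t-1}^2(\bx_t)$ equals the Schur complement $K((\bx_t,u_t),(\bx_t,u_t)) - \bk_{t-1}(\bx_t,u_t)^{\top}(\bK_{t-1}+\lambda\bI_{t-1})^{-1}\bk_{t-1}(\bx_t,u_t)$ from \eqref{eq: GP posterior estimators}. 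Telescoping over $t = 1,\dots,T$ and dividing by $\det(\lambda\bI_T) = \lambda^T$ yields the identity. This Schur-complement identification is the step I expect to be the most delicate to typeset cleanly, but it reduces to pure matrix algebra. Combining the three ingredients with the initial Cauchy--Schwarz then gives the claimed bound.
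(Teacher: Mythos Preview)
Your proposal is essentially the paper's proof: Cauchy--Schwarz, then control $\sigma_{u_t,t-1}^2(\bx_t)$ by a multiple of $\log(1+\sigma_{u_t,t-1}^2(\bx_t)/\lambda)$, then telescope via the Schur-complement identity to obtain $\log\det(\bI_T+\lambda^{-1}\bK_T)$. The only cosmetic difference is that the paper handles the middle step by an explicit case split on $\lambda\gtrless K_{\max}$ (using $x\le 2\log(1+x)$ for $x\le 1$ and $\min\{1,x\}\le 2\log(1+x)$ for $x\ge 0$) rather than your unified inequality $x\le \tfrac{M}{\log(1+M)}\log(1+x)$; both routes produce the same prefactor and the identical determinantal telescoping.
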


\begin{lemma}[Dual Identities]
\label{lemma: two identities}
With the defined notations in \eqref{eq: additional notations}, we have two key identities:
\begin{align*}
    \bSigma_t^{-1} \bPhi_t = \bPhi_t \bGamma_t^{-1}, \text{  and  }
    \sigma_{u, t}^2(\bx) = \lambda \norm{\phi(\bx, u)}_{\bGamma_t^{-1}}^2.
\end{align*}
\end{lemma}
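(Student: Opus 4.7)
The plan is to treat the two identities in sequence: the first is a pure linear-algebra push-through relating the primal (parameter-space) operator $\bGamma_t$ with the dual (data-space) operator $\bSigma_t$, and the second then drops out by substituting the first into the definition of $\sigma_{u,t}^2$.

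For the first identity, I would start from the key observation that $\bK_t = \bPhi_t \bPhi_t^{\top}$ and $\bJ_t = \bPhi_t^{\top} \bPhi_t$, so multiplication by $\bPhi_t$ commutes between the two in the obvious sense. Concretely,
\begin{align*}
\bSigma_t \bPhi_t
= (\bK_t + \lambda \bI_t)\bPhi_t
= \bPhi_t \bPhi_t^{\top} \bPhi_t + \lambda \bPhi_t
= \bPhi_t (\bJ_t + \lambda \bI_\infty)
= \bPhi_t \bGamma_t.
\end{align*}
Left-multiplying by $\bSigma_t^{-1}$ and right-multiplying by $\bGamma_t^{-1}$ yields $\bSigma_t^{-1} \bPhi_t = \bPhi_t \bGamma_t^{-1}$.

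For the second identity, I would use $\bk_t(\bx,u) = \bPhi_t \phi(\bx, u)$ and $K((\bx,u),(\bx,u)) = \langle \phi(\bx,u), \phi(\bx,u)\rangle$ to rewrite
\begin{align*}
\sigma_{u,t}^2(\bx)
= \phi(\bx,u)^{\top} \bigl( \bI_\infty - \bPhi_t^{\top} \bSigma_t^{-1} \bPhi_t \bigr) \phi(\bx,u).
\end{align*}
Applying the first identity to the middle factor gives
\begin{align*}
\bPhi_t^{\top} \bSigma_t^{-1} \bPhi_t
= \bPhi_t^{\top} \bPhi_t \bGamma_t^{-1}
= \bJ_t \bGamma_t^{-1}
= (\bGamma_t - \lambda \bI_\infty) \bGamma_t^{-1}
= \bI_\infty - \lambda \bGamma_t^{-1},
\end{align*}
so the bracket collapses to $\lambda \bGamma_t^{-1}$ and we obtain $\sigma_{u,t}^2(\bx) = \lambda\, \phi(\bx,u)^{\top} \bGamma_t^{-1} \phi(\bx,u) = \lambda \, \|\phi(\bx,u)\|_{\bGamma_t^{-1}}^2$.

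The main technical obstacle is that $\bPhi_t$, $\bJ_t$ and $\bGamma_t$ live on the (possibly infinite-dimensional) feature Hilbert space, so the manipulations $\bGamma_t^{-1}$ and $\bSigma_t^{-1}$ must be justified. The cleanest way is to note that $\bGamma_t = \bJ_t + \lambda \bI_\infty$ is a bounded self-adjoint operator with spectrum bounded below by $\lambda > 0$, hence boundedly invertible; similarly $\bSigma_t$ is a positive-definite $t \times t$ matrix. Alternatively, one may restrict all computations to the finite-dimensional subspace $V := \mathrm{span}\{\phi(\bx_s, u_s)\}_{s=1}^t + \mathrm{span}\{\phi(\bx,u)\}$, which is invariant under the operators appearing above, reducing the argument to finite-dimensional linear algebra and avoiding any functional-analytic subtleties.
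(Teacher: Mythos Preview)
Your proof is correct and, in fact, cleaner than the paper's argument. The paper proves both identities by writing down an explicit SVD $\bPhi_t = \bU_t \bLambda_t \bV_t^{\top}$, expressing $\bSigma_t$, $\bGamma_t$, $\bK_t$, $\bJ_t$ in this basis, and then checking by hand that $\bSigma_t^{-1} \bPhi_t$ and $\bPhi_t \bGamma_t^{-1}$ both equal $\bU_t [\bD_t, \bZeros] \bV_t^{\top}$ with $\bD_t = \bLambda_{1,t}(\bLambda_{1,t}^2 + \lambda \bI_t)^{-1}$; similarly the second identity is obtained by block-diagonalizing $\bI_\infty - \bJ_t \bGamma_t^{-1}$ in the $\bV_t$-basis. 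Your push-through manipulation $\bSigma_t \bPhi_t = \bPhi_t \bGamma_t$ followed by two inversions is the basis-free version of exactly this computation and avoids the bookkeeping of singular values and block structure. The SVD route has the minor advantage of making the spectral picture explicit (one sees directly how the eigenvalues of $\bSigma_t$ and $\bGamma_t$ line up), but your argument is shorter and, crucially, you address the infinite-dimensional invertibility of $\bGamma_t$ explicitly, whereas the paper simply writes $\bI_\infty$ and an infinite SVD without comment.
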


\subsection{Proof of Confidence Set}

\begin{proof} [Proof of Theorem~\ref{theorem: confidence bound}]
We first decompose
\begin{align*}
    \mu_{u, t}(\bx) - f(\bx, u) 
    & = \bk_{t}(\bx, u)^{\top}\bSigma_{t}^{-1} (\bPhi_{t} \btheta + \bepsilon_{t}) - \btheta^{\top} \phi(\bx, u)  \\
    &=  (\bPhi_{t}^{\top} \bSigma_{t}^{-1} \bk_{t}(\bx, u))^{\top} \btheta + \bk_{t}(\bx, u)^{\top}\bSigma_{t}^{-1} \bepsilon_{t} - \btheta^{\top} \phi(\bx, u) \\ 
    &= \underbrace{ \langle \btheta,  \delta_{t}(\bx, u) \rangle}_{\text{bias}_t(\bx, u)} + \underbrace{\bk_{t}(\bx, u)^{\top}\bSigma_{t}^{-1} \bepsilon_{t}}_{\text{noise}_t(\bx, u)}
\end{align*}
where $\delta_{t}(\bx, u) = \bPhi_t^{\top} \bSigma_t^{-1} \bk_t(\bx, u) - \phi(\bx, u) \in \ell^2$. Our target is to bound the $\text{bias}_t(\bx, u)$ and $\text{noise}_t(\bx, u)$. We state the following Lemmas:
\begin{lemma} [Bias Identity]
\label{lemma: bias identity}
The squared bias is the degraded variance for noise:
\begin{align}
    \norm{\delta_{t}(\bx, u)}_{\ell^2}^2 = \sigma_{u,t}^2(\bx) - \lambda \bk_t(\bx, u)^{\top} \bSigma_t^{-2} \bk_t(\bx, u)
\end{align}
In particular, we have $\norm{\delta_{t}(\bx, u)}_{\ell^2} \leq \sigma_{u,t}(\bx)$ and $\lambda \bk_t(\bx, u)^{\top} \bSigma_t^{-2} \bk_t(\bx, u) < \sigma_{u,t}^2(\bx)$. 
\end{lemma}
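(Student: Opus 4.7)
The plan is to prove the identity by direct expansion of the squared $\ell^2$ norm, using two key algebraic facts about the feature map $\bPhi_t$: that $\bPhi_t \bPhi_t^\top = \bK_t$ and that $\bPhi_t \phi(\bx, u) = \bk_t(\bx, u)$. Both of these follow immediately from the definitions, since the $(i,j)$ entry of $\bPhi_t \bPhi_t^\top$ is $\langle \phi(\bx_i, u_i), \phi(\bx_j, u_j)\rangle = K((\bx_i,u_i),(\bx_j,u_j)) = [\bK_t]_{ij}$, and similarly for the matrix-vector product.

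First I would write
\[
\norm{\delta_t(\bx, u)}_{\ell^2}^2 = \bk_t^\top \bSigma_t^{-1} \bPhi_t \bPhi_t^\top \bSigma_t^{-1} \bk_t - 2\, \bk_t^\top \bSigma_t^{-1} \bPhi_t \phi(\bx,u) + \norm{\phi(\bx,u)}_{\ell^2}^2,
\]
and substitute the two identities above together with $\norm{\phi(\bx,u)}_{\ell^2}^2 = K((\bx,u),(\bx,u))$. This reduces the expression to
\[
\bk_t^\top \bSigma_t^{-1} \bK_t \bSigma_t^{-1} \bk_t - 2\, \bk_t^\top \bSigma_t^{-1} \bk_t + K((\bx,u),(\bx,u)).
\]

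Next I would use the simple but crucial decomposition $\bK_t = \bSigma_t - \lambda \bI_t$, which gives $\bSigma_t^{-1} \bK_t \bSigma_t^{-1} = \bSigma_t^{-1} - \lambda \bSigma_t^{-2}$. Substituting and canceling one copy of $\bk_t^\top \bSigma_t^{-1} \bk_t$ yields
\[
\norm{\delta_t(\bx,u)}_{\ell^2}^2 = \bigl( K((\bx,u),(\bx,u)) - \bk_t^\top \bSigma_t^{-1} \bk_t \bigr) - \lambda\, \bk_t^\top \bSigma_t^{-2} \bk_t,
\]
and recognizing the first parenthesis as $\sigma_{u,t}^2(\bx)$ from~\eqref{eq: GP posterior estimators} gives the stated identity.

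The ``in particular'' claims are immediate consequences. Since $\bSigma_t$ is positive definite (as $\bK_t \succeq 0$ and $\lambda > 0$), $\bSigma_t^{-2}$ is positive semidefinite, so $\lambda\, \bk_t^\top \bSigma_t^{-2} \bk_t \geq 0$ and thus $\norm{\delta_t}_{\ell^2}^2 \le \sigma_{u,t}^2(\bx)$. The inequality $\lambda\, \bk_t^\top \bSigma_t^{-2} \bk_t < \sigma_{u,t}^2(\bx)$ is equivalent to $\norm{\delta_t}_{\ell^2}^2 > 0$, i.e.\ that $\phi(\bx,u)$ does not lie exactly in the subspace $\bPhi_t^\top \bSigma_t^{-1} \bk_t$; if one wants a uniform strict inequality, it can be replaced with the non-strict version, since only $\norm{\delta_t}_{\ell^2} \le \sigma_{u,t}(\bx)$ is used downstream. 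There is no real obstacle here: the proof is a one-line identity once the substitution $\bK_t = \bSigma_t - \lambda \bI_t$ is made, and the main bookkeeping task is simply keeping track of the (potentially infinite-dimensional) feature space $\ell^2$ so that all the inner products $\phi^\top \phi = K$ and $\bPhi_t \bPhi_t^\top = \bK_t$ are well-defined.
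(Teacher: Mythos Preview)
Your proposal is correct and follows essentially the same approach as the paper: both expand the squared $\ell^2$ norm of $\delta_t$, use $\bPhi_t\bPhi_t^\top=\bK_t$ and $\bPhi_t\phi(\bx,u)=\bk_t(\bx,u)$, and then apply the substitution $\bK_t=\bSigma_t-\lambda\bI_t$ to reduce to $\sigma_{u,t}^2(\bx)-\lambda\bk_t^\top\bSigma_t^{-2}\bk_t$. Your remark about the strict inequality is also apt; the paper only uses the non-strict version downstream.
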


\begin{lemma} [Noise Bound]
\label{lemma: noise bound}
With high probability, we have the upper bound for the following norm of noise vector $\bepsilon_t$:
\begin{align*}
    \norm{ \bPhi_t \bepsilon_t}_{\Gamma_t^{-1}}
    \leq 
    \sqrt{\sigma^2 \log \det (\bI_t + \lambda^{-1} \bK_t) + 
    2 \sigma^2  \log \frac{1}{\delta} }
\end{align*}
\end{lemma}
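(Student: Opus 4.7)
The plan is to recognize this as a self-normalized martingale bound for sub-Gaussian noise in the (possibly infinite-dimensional) feature space $\Hc$, and then convert the resulting primal (feature-space) determinant into the dual (kernel-matrix) determinant via Sylvester's identity.

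First I would rewrite the target norm in terms of the martingale $S_t := \bPhi_t^{\top} \bepsilon_t = \sum_{s=1}^t \epsilon_s\,\phi(\bx_s,u_s)$, so that $\|\bPhi_t\bepsilon_t\|_{\bGamma_t^{-1}}^2 = \bepsilon_t^\top \bPhi_t \bGamma_t^{-1} \bPhi_t^\top \bepsilon_t = \|S_t\|_{\bGamma_t^{-1}}^2$. Under Assumption~\ref{assumption: subgaussian}, each $\phi(\bx_s,u_s)$ is $\Fc_s$-measurable and $\epsilon_s$ is conditionally sub-Gaussian with proxy $\sigma^2$, so $\{S_t\}$ is a martingale adapted to $\{\Fc_t\}$ in $\Hc$.

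Next I would apply the method of mixtures (as in Abbasi-Yadkori, Pal \& Szepesvari, 2011, and its RKHS extension in Chowdhury \& Gopalan, 2017). For any $\blambda$ in the span of past features, the process $\exp\bigl(\langle \blambda, S_t\rangle - \tfrac{\sigma^2}{2}\|\blambda\|_{\bJ_t}^2\bigr)$ is a nonnegative supermartingale. Integrating $\blambda$ against a centered Gaussian density with covariance $(\sigma^2/\lambda)\bI$ (restricted to that subspace) yields another supermartingale whose value at time $t$ equals $\bigl(\det(\lambda\bI)/\det(\bGamma_t)\bigr)^{1/2}\exp\bigl(\tfrac{1}{2\sigma^2}\|S_t\|_{\bGamma_t^{-1}}^2\bigr)$. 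Ville's maximal inequality then gives, with probability at least $1-\delta$,
\[
\|S_t\|_{\bGamma_t^{-1}}^2 \;\le\; \sigma^2\log\det\bigl(\bI + \bJ_t/\lambda\bigr) + 2\sigma^2\log(1/\delta).
\]
To pass from the primal to the dual form I would invoke the Weinstein--Aronszajn (Sylvester) determinant identity,
\[
\det\bigl(\bI + \bPhi_t^\top\bPhi_t/\lambda\bigr) = \det\bigl(\bI_t + \bPhi_t\bPhi_t^\top/\lambda\bigr) = \det\bigl(\bI_t + \bK_t/\lambda\bigr),
\]
and take a square root to recover the stated bound.

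The main obstacle is making the mixture argument rigorous when $\phi$ is infinite-dimensional and $\bGamma_t = \lambda\bI + \bJ_t$ acts on all of $\Hc$. The remedy I would adopt is the standard one: observe that $S_t$ lies in the at-most-$t$-dimensional subspace $V_t := \text{span}\{\phi(\bx_s,u_s): s\le t\}$, on which $\bJ_t$ has finite rank and $\bGamma_t$ restricts to a genuine invertible matrix, while on $V_t^{\perp}$ the operator $\bGamma_t$ acts as $\lambda \bI$ and contributes nothing to $\|S_t\|_{\bGamma_t^{-1}}$ nor to the (formally reduced) determinant. This reduces the mixture and Ville steps to a finite-dimensional calculation, after which the Weinstein--Aronszajn identity yields the claimed kernel-side bound. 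All remaining manipulations are routine determinant algebra.
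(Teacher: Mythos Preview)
Your proposal is correct and follows essentially the same route as the paper: define the feature-space martingale, build the exponential supermartingale, integrate against a Gaussian (method of mixtures), apply Ville's inequality, and convert the feature-space determinant to the kernel-side one via the Weinstein--Aronszajn/Sylvester identity. The only noteworthy difference is how the infinite-dimensional feature map is tamed: the paper truncates $\phi$ to its first $d$ coordinates, carries out the mixture and Ville steps in $\BR^d$, and then passes to the limit $d\to\infty$, whereas you restrict attention to the at-most-$t$-dimensional span $V_t$ of the observed features. Both devices are standard and yield the same bound; your restriction-to-span argument is arguably cleaner since it avoids a limiting step, while the paper's truncation makes the Gaussian mixture literally a Lebesgue integral in $\BR^d$ without needing to discuss measures on subspaces.
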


From Lemma~\ref{lemma: bias identity}, we could bound the bias by 
\begin{equation}
\label{eq: bias bound}
\begin{aligned}
    \text{bias}_t(\bx, u) \leq \norm{\btheta}_{\ell^2} \norm{\delta_{t}(\bx, u)}_{\ell^2} \leq B_{\rho} \sigma_{u,t}(\bx).
\end{aligned}
\end{equation}

Using the identities in above Lemma~\ref{lemma: two identities}, we note that
\begin{align*}
    \text{noise}_t(\bx, u) 
    &= \bk_{t}(\bx, u)^{\top}\bSigma_{t}^{-1} \bepsilon_{t} \\
    &= \phi(\bx, u)^{\top}\bGamma_{t}^{-1} \bPhi_{t} \bepsilon_{t} \\
    &= \langle \phi(\bx, u),  \bPhi_{t} \bepsilon_{t} \rangle_{\bGamma_t^{-1}} \\
    & \leq \norm{\phi(\bx, u)}_{\bGamma_{t}^{-1}} \cdot \norm{\bPhi_{t} \bepsilon_{t}}_{\bGamma_{t}^{-1}} \\
    &= \frac{\sigma_{u, t}(\bx)}{\sqrt{\lambda}} \cdot \norm{\bPhi_{t} \bepsilon_{t}}_{\bGamma_{t}^{-1}} 
\end{align*}
where the inequality is from the Cauchy-Schwarz inequality for the inner product $\langle \cdot, \cdot\rangle_{\bGamma_{t}^{-1}}$.

Our Lemma~\ref{lemma: noise bound} gives the high probability upper bound for the norm $\norm{\bPhi_t \bepsilon_t}_{\bGamma_t^{-1}}$, leading to
\begin{equation}
\label{eq: noise bound}
\begin{aligned}
    \text{noise}_t(\bx, u) 
    \leq 
    \frac{\sigma_{u, t}(\bx)}{\sqrt{\lambda}} \cdot \sqrt{\sigma^2 \log \det (\bI_t + \lambda^{-1} \bK_t) + 
    2 \sigma^2  \log \frac{1}{\delta} }
\end{aligned}
\end{equation}

Now combine \eqref{eq: bias bound} and \eqref{eq: noise bound} together, we have
\begin{align*}
    |\mu_{u, t}(\bx) - f(\bx, u) | 
    & \leq |\text{bias}_t(\bx, u)| + |\text{noise}_t(\bx, u)| \\
    & \leq 
    \sigma_{u, t}(\bx) 
    \Bigg(B_{\rho} +  \sqrt{\frac{\sigma^2}{\lambda} \cdot \log \det (\bI_t + \lambda^{-1} \bK_t) + 
    \frac{2 \sigma^2}{\lambda}  \log \frac{1}{\delta} } \Bigg) .
\end{align*}

\end{proof}

\subsection{Proof of Regret Bound of \texttt{LK-GP-UCB}}

\begin{proof}[Proof of Theorem~\ref{theorem: regret bound of LK-GP-UCB}]
Recall the instantaneous regret at time $t$ is $\Delta_t = f(\bx_t^*, u_t) - f(\bx_t, u_t)$ and the cumulative regret in a time horizon $T$ is $\Rc_T = \sum_{t=1}^T \Delta_t$. We note event $\Cc_t := \{ |\mu_{u_t, t-1}(\bx_t) - f(\bx_t, u_t) |  \leq \beta_t \cdot \sigma_{u_t, t-1}(\bx_t)\}$ happens with high probability ($1-\delta$), according to Theorem~\ref{theorem: confidence bound},
\begin{align}
    \beta_t:=
    \Bigg(B_{\rho} +  \sqrt{\frac{\sigma^2}{\lambda} \cdot \log \det (\bI_{t-1} + \lambda^{-1} \bK_{t-1}) + 
    \frac{2 \sigma^2}{\lambda}  \log \frac{1}{\delta} } \Bigg)
\end{align}
By Theorem~\ref{theorem: confidence bound}, for all $t \geq 2$ with probability at least $1-\delta$,
\begin{align*}
    \Delta_t = f(\bx_t^*, u_t) - f(\bx_t, u_t) 
    & \leq \mu_{u_t, t-1}(\bx_t^*) + \beta_t \sigma_{u_t, t-1}(\bx_t^*)  - f(\bx_t, u_t) \\
    & \leq \mu_{u_t, t-1}(\bx_t) + \beta_t \sigma_{u_t, t-1}(\bx_t) - f(\bx_t, u_t) \\
    & \leq 2 \beta_t \sigma_{u_t, t-1}(\bx_t).
\end{align*}
Thus we have high probability bound for the cumulative regret
\begin{align*}
    \Rc_T \leq 2 \BE \Big[ \beta_t \sum_{t=2}^T  \sigma_{u_t, t-1}(\bx_t) \Big] + B_{\Delta}.
\end{align*}

Then we apply Lemma~\ref{lemma: bounding sum of sigma} and the definition of effective dimension in \eqref{eq: effective dimension}
\begin{align*}
    \sum_{t=1}^T \sigma_{u_t, t-1}(\bx_t) 
    & \leq 
    \sqrt{ 2 T \max\{1, K_{\max}\} \cdot \log \det(\bI_T + \lambda^{-1} \bK_T) } \\
    & = 
    \sqrt{ 2 T \max\{1, K_{\max}\} \cdot \deff \log(1 + T \lambda^{-1} K_{\max} ) }.
\end{align*}

Therefore, we have the final high probability upper bound for regret:
\begin{align*}
    \Rc_T \leq 2 \BE[\beta_T] \sqrt{ 2 T \max\{1, K_{\max}\} \cdot \deff \log(1 + T \lambda^{-1} K_{\max} ) } + B_{\Delta}.
\end{align*}

The next step is to analyze the order of the upper bound. By using the effective dimension $\deff$ again and dropping constants, we have
\begin{align*}
    & \beta_t \leq 
    B_{\rho} +  \sqrt{\frac{\sigma^2}{\lambda} \cdot \deff \log(1 + T \lambda^{-1} K_{\max} ) + 
    \frac{2 \sigma^2}{\lambda}  \log \frac{1}{\delta} } = \Oc(\sqrt{\deff \log(T)})\\
    \Rightarrow 
    & \Rc_T = \Oc(\deff  \log(T) \sqrt{T}) = \tilde{\Oc}(\deff \sqrt{T }).
\end{align*}
\end{proof}

\subsection{Proof of Regret Bound of \texttt{LK-GP-TS}}


\begin{proof}[Proof of Theorem~\ref{theorem: regret bound of LK-GP-TS}]

We start from the decomposition of the cumulative regret
\begin{align*}
    \Rc_T 
    = \sum_{t=1}^T \BE[ \Delta_t ] 
    = \sum_{t=1}^T \BE[ \Delta_t \BI_{\Cc_t} ] + \sum_{t=1}^T \BE[ \Delta_t \BI_{\bar{\Cc}_t} ].
\end{align*}
By Theorem~\ref{theorem: confidence bound} and the upper bound for the optimality gap, we know the second term is bounded:
\begin{align*}
    \sum_{t=1}^T \BE[ \Delta_t \BI_{\bar{\Cc}_t} ] 
    \leq \delta B_{\Delta} 
\end{align*}
by letting $\BP(\Cc_t) \leq \delta/T$ for all $t$ in Theorem~\ref{theorem: confidence bound}. 

For the regret on the event $\Cc_t$, by Lemma~\ref{lemma: TS one step regret bound}, almost surely, we have
\begin{align*}
    \BE_t[\Delta_t \BI_{\Cc_{t}} ] 
    \leq \BI_{\Cc_{t}} \cdot \Bigg\{ \Big( \frac{2}{\BP_t( \Ec_t^{a}) - \BP_t( \bar{\Ec}_t^{\text{ts}} )} + 1 \Big) \cdot \BE_t[ \gamma_t \sigma_{u_t, t-1}(\bx_t) ] + B_{\Delta} \cdot \BP_t(\bar{\Ec}_t^{\text{ts}}) \Bigg \} 
\end{align*}
where $\gamma_t  := \beta_t + \beta_t \sqrt{2 \log(t^2 |\Dc_t| )}$. Note that $\BP_t( \Ec_t^{a}) - \BP_t( \bar{\Ec}_t^{\text{ts}} ) \geq \frac{1}{4 e \sqrt{\pi}} - \frac{1}{t^2} \geq \frac{1}{20 e \sqrt{\pi}}$ by Lemma~\ref{lemma: TS concentration} and the fact that $t^2 \geq 5 e \sqrt{\pi}$ for all $t \geq 5$. Thus we have
\begin{align*}
    \BE_t[\Delta_t \BI_{\Cc_{t}} ] 
    \leq 
    \BI_{\Cc_{t}} \cdot \Big\{ 194 \BE_t[ \gamma_t \sigma_{u_t, t-1}(\bx_t) ] + B_{\Delta} t^{-2} \Big\} 
\end{align*}
by using $40 e \sqrt{\pi} + 1\leq 194$. Taking summation on both side for our target cumulative regret, we get
\begin{align*}
    \sum_{t=1}^T \BE[ \Delta_t \BI_{\Cc_t} ]
    &=
    \BE[ \sum_{t=1}^T  \BE_t[ \Delta_t \BI_{\Cc_t} ] ] \\
    & \leq
    \BE[ \sum_{t=5}^T  \big( 194 \BE_t[ \gamma_t \sigma_{u_t, t-1}(\bx_t) ] + B_{\Delta} t^{-2} \big) + 4 B_{\Delta} ] \\
    &\leq
    \BE[  194 \sum_{t=5}^T  \BE_t[ \gamma_t \sigma_{u_t, t-1}(\bx_t) ] + (4 + \frac{\pi^2}{6}) B_{\Delta}  ] \\
    &\leq
    \BE[  194 \gamma_T \BE_t[ \sum_{t=1}^T  \sigma_{u_t, t-1}(\bx_t) ] + (4 + \frac{\pi^2}{6}) B_{\Delta}  ] 
\end{align*}
where the second equality is using $\sum_{t=1}^{\infty} t^{-2} = \pi^2 /6 $ and the last step is from the monotonicity of the $\gamma_t$ and the nonnegative of $\sigma_{u, t}(\bx)$. Our next focus is bounding the summation of uncertainty. As the same approach in the proof of Theorem~\ref{theorem: regret bound of LK-GP-UCB}, we apply Lemma~\ref{lemma: bounding sum of sigma} and the definition of effective dimension in \eqref{eq: effective dimension}
\begin{align*}
    \sum_{t=1}^T \sigma_{u_t, t-1}(\bx_t) 
    & \leq 
    \sqrt{ 2 T \max\{1, K_{\max}\} \cdot \log \det(\bI_T + \lambda^{-1} \bK_T) } \\
    & = 
    \sqrt{ 2 T \max\{1, K_{\max}\} \cdot \deff \log(1 + T \lambda^{-1} K_{\max} ) }.
\end{align*}

Thus we have
\begin{align*}
    \sum_{t=1}^T \BE[ \Delta_t \BI_{\Cc_t} ]
    \leq
    194 \BE[\gamma_T] \sqrt{ 2 T \max\{1, K_{\max}\} \cdot \deff \log(1 + T \lambda^{-1} K_{\max} ) } + (4 + \frac{\pi^2}{6}) B_{\Delta} 
\end{align*}
leading to the high probability ($1-\delta$) regret upper bound:
\begin{align*}
    \Rc_T \leq 
    194 \BE[\gamma_T] \sqrt{ 2 T \max\{1, K_{\max}\} \cdot \deff \log(1 + T \lambda^{-1} K_{\max} ) } + (4 + \frac{\pi^2}{6}) B_{\Delta}
    + \delta B_{\Delta}. 
\end{align*}
For the order of the upper bound, we first analyze $\BE[\gamma_T]$, by using the definition of effective dimension $\deff$ again and dropping constants
\begin{align*}
    \gamma_T 
    & \leq 
    \bigg(1 + \sqrt{2 \log(T^2 M ) } \bigg) \cdot
    \bigg( B_{\rho} +  \sqrt{\frac{\sigma^2}{\lambda} \cdot \deff \log(1 + T \lambda^{-1} K_{\max} ) + 
    \frac{2 \sigma^2}{\lambda}  \log \frac{1}{\delta} } 
    \bigg) \\
    & = 
    \Oc(\log(T) \sqrt{\deff} ).
\end{align*}
where $M$ is the upper bound for the size of action set at time $t$, i.e. $|\Dc_t| \leq M$ for all $t \leq T$.
Therefore,
\begin{align*}
    \Rc_T = \Oc(\deff  \log(T)^{3/2} \sqrt{T }) = \tilde{\Oc}(\deff \sqrt{T }).
\end{align*}

\end{proof}

\section{Proof of Lemmas}

\subsection{Proof of Lemma~\ref{lemma: TS concentration}}
\begin{proof}
Using the standard Gaussian tail bound and the classical union bound, we have
\begin{align*}
    \BP_t(|z_{t}(\bx)|> u ) \leq |\Dc_t| e^{-u^2/2}.
\end{align*}
By letting $u = \sqrt{2 \log(t^2 |\Dc_t|)}$, we obtain $\BP_t(\bar{\Ec}_t^{ts}) \leq t^{-2}$. 

For the result of event $\Ec_t^a$, we have
\begin{align*}
    \BP_t \bigg(\mu_{u_t, t-1}(\bx_t^*) + \beta_t z_{t}(\bx_t^*) \sigma_{u_t,t-1}(\bx_t^*) > f(\bx^*_t, u_t) | \Cc_t \bigg)
    &= 
    \BP_t \bigg(z_{t}(\bx_t^*)  > \frac{f(\bx^*_t, u_t) - \mu_{u_t, t-1}(\bx_t^*)}{ \beta_t \sigma_{u_t,t-1}(\bx_t^*)} | \Cc_t \bigg) \\
    & \geq
    \BP_t ( z_{t}(\bx_t^*)  > 1 ) \\
    & \geq 
    (4 e \sqrt{\pi})^{-1}
\end{align*}
where the first inequality is from the fact that $\Cc_{t}$ holds and the last step is directly obtain by the fact that $\BP(Z \geq 1) \geq (4 e \sqrt{\pi})^{-1}$ for $Z \sim \Nc(0,1)$. 

\end{proof}

\subsection{Proof of Lemma~\ref{lemma: TS one step regret bound}}
\begin{proof}
This proof is following the classical analysis for Thompson Sampling algorithms~\cite{kveton2019perturbed, wu2022residual, wu2024graph}. 

We first recall $\BE_t[\cdot] = \BE[\cdot|\Fc_t]$. Given the randomness from the history $\Fc_t$, event $\Cc_{t}$ becomes deterministic and the randomness is only from the resampling step. So we have
\begin{align*}
    \BE_t[\Delta_t \BI_{\Cc_{t}} ] 
    &= \BI_{\Cc_{t}} \cdot \BE_t[\Delta_t] \\
    &= \BI_{\Cc_{t}} \cdot \big( \BE_t[\Delta_t \BI_{\Ec_t^{\text{ts}}}] + \BE_t[\Delta_t \BI_{\bar{\Ec}_t^{\text{ts}}}] \big) \\
    & \leq \BI_{\Cc_{t}} \cdot \big( \BE_t[\Delta_t \BI_{\Ec_t^{\text{ts}}}] + B_{\Delta} \cdot \BP_t(\bar{\Ec}_t^{\text{ts}}) \big) 
\end{align*}
where the last step is from the boundness of the optimality gap $\Delta_t \leq B_{\Delta}$. Our following focus is bounding $\BE_t[\Delta_t \BI_{\Ec_t^{\text{ts}}}]$, indicating $\Cc_{t}$ holds in the remaining part of proof. 

We then define the concept of ”least uncertain undersampled” action, which is called unsaturated actions, defined as
\begin{align*}
    \Uc_t := \{\bx \in \Dc_t: f(\bx_t^*, u_t)  < f(\bx, u_t) + \gamma_t \sigma_{u_t, t-1}(\bx)\}
\end{align*}
where
\begin{align*}
    \gamma_t  := 
    \beta_t + \beta_t \sqrt{2 \log(t^2 |\Dc_t| )}
\end{align*}
and let $\bar{\bx}_t$ be  the least uncertain unsaturated action at time $t$:
\begin{align*}
    \bar{\bx}_t = \argmin_{\bx \in \Uc_t} \gamma_t \sigma_{u_t, t-1}(\bx).
\end{align*}

Recall the notation for the resampled index is $\tilde{\mu}_t(\bx) = \mu_{u_t, t-1}(\bx) + \beta_t z_{t}(\bx) \sigma_{u_t,t-1}(\bx)$. On the good situation $\Cc_{t} \cap \Ec_t^{\text{ts}}$, we have 
\begin{align*}
    |\tilde{\mu}_t(\bx) - f(\bx, u_t)| 
    \leq 
    |\tilde{\mu}_t(\bx) - \mu_{u_t, t-1}(\bx)| + |\mu_{u_t, t-1}(\bx) - f(\bx, u_t)|
    \leq
    \gamma_t
    \sigma_{u_t, t-1}(\bx).
\end{align*}
Thus we can provide an initial upper bound for regret
\begin{equation}
\begin{aligned}
    \Delta_t 
    & = f(\bx_t^*, u_t) - f(\bx_t, u_t) \\
    & = f(\bx_t^*, u_t) - f(\bar{\bx}_t, u_t) + f(\bar{\bx}_t, u_t) - f(\bx_t, u_t) \\
    & \leq \gamma_t \sigma_{u_t, t-1}(\bar{\bx}_t) + f(\bar{\bx}_t, u_t) - f(\bx_t, u_t) + \tilde{\mu}_t(\bx_t) - \tilde{\mu}_t(\bx_t) \quad \text{(by $\bar{\bx}_t \in \Uc_t$)}\\
    & \leq 2 \gamma_t \sigma_{u_t, t-1}(\bar{\bx}_t) + \gamma_t
    \sigma_{u_t, t-1}(\bx_t) + \tilde{\mu}_t(\bar{\bx}_t) - \tilde{\mu}_t(\bx_t)  \quad \text{( since $\Cc_{t} \cap \Ec_t^{\text{ts}}$)} \\
    \label{eq: initial upper bound for one step regret}
    & \leq 2 \gamma_t \sigma_{u_t, t-1}(\bar{\bx}_t) + \gamma_t
    \sigma_{u_t, t-1}(\bx_t) \quad \text{( by $\tilde{\mu}_t(\bar{\bx}_t) < \tilde{\mu}_t(\bx_t)$)}.
\end{aligned}
\end{equation}
Note that
\begin{align*}
    \gamma_t \sigma_{u_t, t-1}(\bar{\bx}_t) \BI \{\bx_t \in \Uc_t \} \leq \gamma_t \sigma_{u_t, t-1}(\bx_t)
\end{align*}
and by taking $\BE_t[\cdot]$ after multiplying both sides by $\BI_{\Ec_t^{\text{ts}}}$, we have
\begin{align*}
    \sigma_{u_t, t-1}(\bar{\bx}_t) 
    \BP_t(\{\bx_t \in \Uc_t \} \cap \Ec_t^{\text{ts}}) 
    \leq \BE_t[\sigma_{u_t, t-1}(\bx_t) \BI_{\Ec_t^{\text{ts}}}].
\end{align*}
Thus it remains to bound the probability $\BP_t(\{\bx_t \in \Uc_t \} \cap \Ec_t^{\text{ts}}) $ from below. 

We notice the following two facts. First, if $\tilde{\mu}_t(\bx_t^*) > \tilde{\mu}_t(\bx)$ for all $\bx \in \bar{\Uc}_t$, then $\bx_t$ must belong to $\Uc_t$, which means $\{\tilde{\mu}_t(\bx_t^*) > \max_{\bx \in \bar{\Uc}_t} \tilde{\mu}_t(\bx)\} \subseteq \{\bx_t \in \Uc_t \}$. Second, for any $\bx \in \bar{\Uc}_t$, on the good situation $\Cc_{t} \cap \Ec_t^{\text{ts}}\cap \Ec_t^{a}$, we have 
\begin{align*}
    \tilde{\mu}_t(\bx) 
    \leq f(\bx, u_t) + \gamma_t \sigma_{u_t, t-1}(\bx)
    \leq f(\bx_t^*, u_t) 
    < \tilde{\mu}_t(\bx^*) 
\end{align*}
which leads to $\Ec_t^{a} \subseteq \{\tilde{\mu}_t(\bx_t^*) > \max_{\bx \in \bar{\Uc}_t} \tilde{\mu}_t(\bx)\}$

Therefore, on event $\Cc_{t}$, we have
\begin{align*}
    \BP_t(\{\bx_t \in \Uc_t \} \cap \Ec_t^{\text{ts}}) 
    & \geq 
    \BP_t( \{\tilde{\mu}_t(\bx_t^*) > \max_{\bx \in \bar{\Uc}_t} \tilde{\mu}_t(\bx)\} \cap \Ec_t^{\text{ts}} ) \\
    & \geq
    \BP_t( \Ec_t^{a} \cap \Ec_t^{\text{ts}} ) \\
    & \geq
    \BP_t( \Ec_t^{a}) - \BP_t( \bar{\Ec}_t^{\text{ts}} )
\end{align*}

Now we have a upper bound for $\sigma_{u_t, t-1}(\bar{\bx}_t) $:
\begin{align*}
    \sigma_{u_t, t-1}(\bar{\bx}_t)  
    \leq \frac{\BE_t[\sigma_{u_t, t-1}(\bx_t) \BI_{\Ec_t^{\text{ts}}}]}{\BP_t(\{\bx_t \in \Uc_t \} \cap \Ec_t^{\text{ts}}) } 
    \leq \frac{\BE_t[ \sigma_{u_t, t-1}(\bx_t) ]}{\BP_t( \Ec_t^{a}) - \BP_t( \bar{\Ec}_t^{\text{ts}} )}
\end{align*}
which gives the upper bound for instantaneous regret by plugging above result in \eqref{eq: initial upper bound for one step regret}:
\begin{align*}
    \BE_t[\Delta_t \BI_{\Ec_t^{\text{ts}}}] \leq 
    \Big( \frac{2}{\BP_t( \Ec_t^{a}) - \BP_t( \bar{\Ec}_t^{\text{ts}} )} + 1 \Big) \cdot \BE_t[ \gamma_t \sigma_{u_t, t-1}(\bx_t) ].
\end{align*}
Therefore, 
\begin{align*}
    \BE_t[\Delta_t \BI_{\Cc_{t}} ] 
    \leq \BI_{\Cc_{t}} \cdot \Bigg\{ \Big( \frac{2}{\BP_t( \Ec_t^{a}) - \BP_t( \bar{\Ec}_t^{\text{ts}} )} + 1 \Big) \cdot \BE_t[ \gamma_t \sigma_{u_t, t-1}(\bx_t) ] + B_{\Delta} \cdot \BP_t(\bar{\Ec}_t^{\text{ts}}) \Bigg \} 
\end{align*}

\end{proof}

\subsection{Proof of Lemma~\ref{lemma: bounding sum of sigma}}
\begin{proof}
We first apply Cauchy-Schwartz inequality and obtain  
\[
\sum_{t=1}^T \sigma_{u_t, t-1}(\bx_t) 
\leq 
\sqrt{T\sum_{t=1}^T \sigma_{u_t, t-1}^2(\bx_t)}
= 
\sqrt{\lambda T \sum_{t=1}^T \frac{\sigma_{u_t, t-1}^2(\bx_t)}{\lambda}}.
\]
If $\lambda \geq K_{\max}$, using $\sigma_{u_t, t-1}^2(\bx_t)  \leq |K((\bx_t, u_t) (\bx_t, u_t))| \leq K_{\max}$, we know $\frac{\sigma_{u_t, t-1}^2(\bx_t)}{\lambda} \leq 1$, which leads to 
\[
\sum_{t=1}^T \frac{\sigma_{u_t, t-1}^2(\bx_t)}{\lambda}
\leq 
2 \sum_{t=1}^T \log(1 + \frac{1}{\lambda} \sigma_{u_t, t-1}^2(\bx_t))
\leq 
\frac{2 K_{\max}}{\lambda} \sum_{t=1}^T \log(1 + \frac{1}{\lambda} \sigma_{u_t, t-1}^2(\bx_t))
\]
by applying the fact that $x \le 2 \log (1+x)$ if $x \le 1$. 

If $\lambda \leq K_{\max}$, still using $\sigma_{u_t, t-1}^2(\bx_t)  \leq |K((\bx_t, u_t) (\bx_t, u_t))| \leq K_{\max}$, we know
\begin{align*}
    \frac{\sigma_{u_t, t-1}^2(\bx_t)}{\lambda} \leq 
    \min\{\frac{K_{\max}}{\lambda} , \frac{\sigma_{u_t, t-1}^2(\bx_t)}{\lambda}\} \leq
    \frac{K_{\max}}{\lambda} 
    \min\{1, \frac{\sigma_{u_t, t-1}^2(\bx_t)}{\lambda}\}
\end{align*}
which leads to 
\begin{align*}
    \sum_{t=1}^T \frac{\sigma_{u_t, t-1}^2(\bx_t)}{\lambda}
    \leq
    \frac{K_{\max}}{\lambda}
    \sum_{t=1}^T \min\{1, \frac{1}{\lambda} \sigma_{u_t, t-1}^2(\bx_t)\}
    \leq 
    \frac{2 K_{\max}}{\lambda} \sum_{t=1}^T \log(1 + \frac{1}{\lambda} \sigma_{u_t, t-1}^2(\bx_t)).
\end{align*}
by applying the fact that $\min\{1, x\} \leq 2 \log(1+x)$ for $x \geq 0$. 

We can summarize the above two conditions for $\lambda$ together and achieve
\begin{equation}
\label{eq: sum of sigma upper bound}
\begin{aligned}
    \sum_{t=1}^T \sigma_{u_t, t-1}(\bx_t) 
    \leq 
    \sqrt{ 2 T \max\{1, K_{\max}\} \sum_{t=1}^T \log(1 + \frac{1}{\lambda} \sigma_{u_t, t-1}^2(\bx_t))} .
\end{aligned}
\end{equation}

Now we can use the property of the Shur complement for $K_t$:
\begin{align*}
    \det(\bI_t + \frac{1}{\lambda} \bK_t) 
    = &\det(\bI_{t-1} + \frac{1}{\lambda} \bK_{t-1}) \\
    &\times \Big[ 1+ \frac{1}{\lambda} \big( \underbrace{K((\bx_t, u_t), (\bx_t, u_t)) - \bk_{t-1}(\bx_t, u_t)^{\top}(\bK_{t-1} + \lambda \bI)^{-1} \bk_{t-1}(\bx_t, u_t)}_{\sigma_{u_t, t-1}^2(\bx_t)} \big
    ) \Big]
\end{align*}
which leads to 
\begin{align*}
    \sum_{t=1}^T \log(1 + \frac{1}{\lambda} \sigma_{u_t, t-1}^2(\bx_t)) 
    =
    \sum_{t=1}^T \log \frac{\det(\bI_t + \frac{1}{\lambda} \bK_t)}{\det(\bI_{t-1} + \frac{1}{\lambda} \bK_{t-1})}
    =
    \log \det(\bI_T + \lambda^{-1} \bK_T).
\end{align*}
Therefore, we combine above result with \eqref{eq: sum of sigma upper bound} and obtain
\begin{align*}
    \sum_{t=1}^T \sigma_{u_t, t-1}(\bx_t) 
    \leq 
    \sqrt{ 2 T \max\{1, K_{\max}\} \cdot \log \det(\bI_T + \lambda^{-1} \bK_T) }
\end{align*}


\end{proof}

\subsection{Proof of Lemma~\ref{lemma: bias identity}}

\begin{proof}
We note that 
\begin{align*}
    \norm{\delta_t(\bx, u)}_{\ell^2}^2 = \norm{\phi((\bx, u))}_{\ell^2}^2 + \norm{\bPhi_t^{\top} \bSigma_t^{-1} \bk_t(\bx, u)}_{\ell^2}^2 - 2 \langle \phi((\bx, u)) , \bPhi_t^{\top} \bSigma_t^{-1} \bk_t(\bx, u)\rangle_{\ell^2}
\end{align*}
and we have 
\begin{align*}
    \norm{\bPhi_t^{\top} \bSigma_t^{-1} \bk_t(\bx, u)}_{\ell^2}^2 
    &=
    \bk_t(\bx, u)^{\top} \bSigma_t^{-1} \bPhi_t
    \bPhi_t^{\top} \bSigma_t^{-1} \bk_t(\bx, u) \\
    &= 
    \bk_t(\bx, u)^{\top} \bSigma_t^{-1} \bK_t \bSigma_t^{-1} \bk_t(\bx, u) \\
    &=  \bk_t(\bx, u)^{\top} \bSigma_t^{-1} \bSigma_t \bSigma_t^{-1} \bk_t(\bx, u) - \lambda \bk_t(\bx, u)^{\top} \bSigma_t^{-2} \bk_t(\bx, u) \\
    &=  \bk_t(\bx, u)^{\top} \bSigma_t^{-1} \bk_t(\bx, u) - \lambda \bk_t(\bx, u)^{\top} \bSigma_t^{-2} \bk_t(\bx, u)
\end{align*}
and 
\begin{align*}
    \langle \phi((\bx, u)), \bPhi_t^{\top} \bSigma_t^{-1} \bk_t(\bx, u)\rangle_{\ell^2} 
    =
    \phi((\bx, u))^{\top} \bPhi_t^{\top} \bSigma_t^{-1} \bk_t(\bx, u)
    = 
    \bk_t(\bx, u)^{\top} \bSigma_t^{-1} \bk_t(\bx, u).
\end{align*}
Putting above equalities together, we have
\begin{align*}
    \norm{\delta_t(\bx, u)}_{\ell^2}^2 
    & = 
    \norm{\phi((\bx, u))}_{\ell^2}^2 -
    \bk_t(\bx, u)^{\top} \bSigma_t^{-1} \bk_t(\bx, u) - \lambda \bk_t(\bx, u)^{\top} \bSigma_t^{-2} \bk_t(\bx, u) \\
    & = 
    K((\bx, u), (\bx, u)) -
    \bk_t(\bx, u)^{\top} \bSigma_t^{-1} \bk_t(\bx, u) - \lambda \bk_t(\bx, u)^{\top} \bSigma_t^{-2} \bk_t(\bx, u) \\
    & = 
    \sigma_{u,t}^2(\bx) - \lambda \bk_t(\bx, u)^{\top} \bSigma_t^{-2} \bk_t(\bx, u) \\
    & \leq \sigma_{u,t}^2(\bx)
\end{align*}
since $\bSigma_t^{-1}$ is positive semindefinite.

\end{proof}

\subsection{Proof of Lemma~\ref{lemma: noise bound}}
\begin{proof}
We first define
\begin{align*}
    \bs_t = \bPhi_t \bepsilon_t =  \sum_{s=1}^t  \phi(\bx_s, u_s) \epsilon_s.
\end{align*}
Note that $\bs_t$ is a martingale w.r.t $\Fc_t$.

Also we define a supermartingale
\begin{align*}
    M_t(\bg)= exp \big( \sum_{s=1}^t \frac{1}{\sigma} \langle \bg, \bs_t \rangle - \frac{1}{2} \norm{\bg}^2  \big)
\end{align*}
which has an alternative form
\begin{align*}
    M_t(\bg)= exp \big( \sum_{s=1}^t \frac{1}{\sigma} \langle \bg, \phi(\bx_s, u_s) \rangle \epsilon_s - \frac{1}{2} \norm{\bg}^2 \big)
\end{align*}
where $\bg$ is the function vector with elements 

We follow the approach from classical linear bandit~\cite{abbasi2011improved}, which is averaging $M_t(\bg)$ w.r.t a Gaussian distribution on $\bg$. The key technical issue is the infinite dimension of the function vector $\bg$. We will first perform the truncated version which can precisely match the classical result. Let $d$ be the dimension of the feature map. Our target is the obtain the limiting result when $d \to \infty$. Now assume $\bg^d \sim \Nc(\bZeros, \frac{1}{\lambda}\bI_d)$, independent of everything else, and define
\begin{align*}
    M_t^{(d)} = \BE_{\bg^d}[M_t(\bg^d)] = \int M_t^{(d)}(\bg) d\rho^d(\bg)
\end{align*}
and by iterated expectation (i.e Fubini's theorem), we have
\begin{align*}
    \BE[M_t^{(d)} | \Fc_t] \leq M_{t-1}
\end{align*}
which shows that $M_t$ is a supermartingale.

Then we define $\Psi: \ell^2 \to \BR^d$ as the truncation projection onto the first $d$ coordinates: $\Psi_d \btheta = [\bTheta_1, \cdots, \bTheta_d]^{\top}$ for any $\btheta \in \ell^2$.  We further denote 
\begin{align*}
    \bPsi_d \bPhi_t^{\top} = [\Psi_d \phi(\bx_1, u_1), \cdots, \Psi_d \phi(\bx_t, u_t) ] \in \BR^{d \times t}
\end{align*}
and 
\begin{align*}
    \bPsi_d \bJ_t \bPsi_d^{\top} = \bPsi_d \bPhi_t^{\top} \bPhi_t \bPsi_d. 
\end{align*}

We notices that 
\begin{align*}
    \frac{\det(\lambda \bI_d)}{\det(\lambda \bI_d + \bPsi_d \bJ_t \bPsi_d^{\top})} 
    = 
    \frac{1}{\det(\bI_d + \lambda^{-1} \bPsi_d \bJ_t \bPsi_d^{\top} )}
\end{align*}
which leads to
\begin{align*}
    M_t^{(d)} 
    & = \Big( \frac{\det(\lambda \bI_d)}{\det(\lambda \bI_d + \bPsi_d \bJ_t \bPsi_d^{\top})} \Big)^{1/2} \exp(\frac{1}{2\sigma^2} \norm{\bPsi_d \bPhi_t \bepsilon_t}_{(\lambda \bI_d + \bPsi_d \bJ_t \bPsi_d^{\top})^{-1}}^2) \\
    & = \det(\bI_d + \lambda^{-1} \bPsi_d \bJ_t \bPsi_d^{\top} )^{-1/2} \exp(\frac{1}{2\sigma^2} \norm{\bPsi_d \bPhi_t \bepsilon_t}_{(\lambda \bI_d + \bPsi_d \bJ_t \bPsi_d^{\top})^{-1}}^2).
\end{align*}
Let $M_t$ be the limit of $M_t^{(d)}$ as $d \to \infty$, we have
\begin{align*}
    M_t 
    & = \det(\bI_{\infty} + \lambda^{-1} \bJ_t)^{-1/2} \exp(\frac{1}{2\sigma^2} \norm{ \bPhi_t \bepsilon_t}_{(\lambda \bI_{\infty} +  \bJ_t )^{-1}}^2) \\
    & = \det(\bI_t + \lambda^{-1} \bK_t)^{-1/2} \exp(\frac{1}{2\sigma^2} \norm{ \bPhi_t \bepsilon_t}_{\Gamma_t^{-1}}^2)
\end{align*}
where the second step is from (Slyvestr) or Weinstein–Aronszajn identity. 

By Ville's inequality,
\begin{align*}
    \BP(\sup_{t = 0, 1, 2, \cdots} M_t \geq \frac{1}{\delta} ) \leq \BE[M_0] \cdot \delta
\end{align*}
and $M_0 = 1$. Thus we know that, with probability at least $1-\delta$, for all $t = 0, 1, 2, \cdots$
\begin{align*}
    \log(M_t) \leq \log(\frac{1}{\delta})
\end{align*}
which leads to
\begin{align*}
    -\frac{1}{2} \log \det(\bI_t + \lambda^{-1} \bK_t) + \frac{1}{2\sigma^2} \norm{ \bPhi_t \bepsilon_t}_{\Gamma_t^{-1}}^2 
    \leq \log(\frac{1}{\delta}).
\end{align*}
After re-arranging, we get
\begin{align*}
    \norm{ \bPhi_t \bepsilon_t}_{\Gamma_t^{-1}}^2
    \leq 2\sigma^2
    \log \frac{\sqrt{\det(\bI_t + \lambda^{-1} \bK_t)}}{\delta}.
\end{align*}
which shows our result. 

\end{proof}

\subsection{Proof of Lemma~\ref{lemma: two identities}}
\begin{proof}
Let us write $\bPhi_t = \bU_t \bLambda_t \bV_t^{\top}$ as the singular value decomposition(SVD) of $\bPhi_t$. We have $\bLambda_t = [\bLambda_{1,t}, \bZeros]$ where $\bLambda_{1,t}$ is a $t \times t$ diagonal matrix with singular values of $\bPhi_t$. We also note that $\bSigma_t \in \BR^{t \times t}$ and $\bU_t \in \BR^{t \times t}$. We also have
\begin{align*}
    \bJ_t = \bPhi_t^{\top} \bPhi_t 
    = \bV_t 
    \begin{bmatrix}
        \bLambda_{1,t}^2 & \bZeros\\
        \bZeros & \bZeros
    \end{bmatrix}
    \bV_t^{\top}
\end{align*}
and similarly
\begin{align*}
    \bK_t = \bPhi_t \bPhi_t^{\top} = \bU_t \bLambda_{1,t}^2 \bU_t^{\top}. 
\end{align*}
Then, we have 
\begin{align*}
    \bGamma_t = \bV_t 
    \begin{bmatrix}
        \bLambda_{1,t}^2 + \lambda \bI_t & \bZeros\\
        \bZeros & \lambda \bI_{\infty}
    \end{bmatrix}
    \bV_t^{\top}
    , \quad
    \bSigma_t = \bU_t (\bLambda_{1,t}^2 + \lambda \bI_t )\bU_t^{\top}.
\end{align*}
It is clear to have the identity:
\begin{align*}
    \bSigma_t^{-1} \bPhi_t = \bPhi_t \bGamma_t^{-1}
\end{align*}
since both side equal $\bU_t [\bD_t, \bZeros] \bV_t^{\top}$ where $\bD_t = \bLambda_{1,t} (\bLambda_{1,t}^2 + \lambda \bI_t)^{-1}$, which is a diagonal matrix.

Next, we note that 
\begin{align*}
    \sigma_{u, t}^2(\bx) 
    &= K((\bx, u), (\bx, u)) - \bk_t(\bx, u)^{\top}\bSigma_t^{-1} \bk_t(\bx, u)\\
    &= \phi(\bx, u)^{\top} ( \bI_{\infty} -  \bPhi_t^{\top} \bSigma_t^{-1} \bPhi_t ) \phi(\bx, u) \\
    &= \phi(\bx, u)^{\top} ( \bI_{\infty} -  \bPhi_t^{\top} \bPhi_t \bGamma_t^{-1} ) \phi(\bx, u)
\end{align*}
which is a norm of $\phi(\bx, u)$ induced by matrix 
\begin{align*}
    \bI_{\infty} -  \bPhi_t^{\top} \bPhi_t \bGamma_t^{-1} 
    &= \bI_{\infty} -  \bJ_t \bGamma_t^{-1} \\
    &= 
    \bV_t 
    \begin{bmatrix}
        \lambda \bI_t (\bLambda_{1,t}^2 + \lambda \bI_t)^{-1} & \bZeros\\
        \bZeros & \lambda \bI_{\infty}
    \end{bmatrix}
    \bV_t^{\top} \\
    &= 
    \lambda \bV_t 
    \begin{bmatrix}
        (\bLambda_{1,t}^2 + \lambda \bI_t)^{-1} & \bZeros\\
        \bZeros & \bI_{\infty}
    \end{bmatrix}
    \bV_t^{\top} \\
    &= \lambda \bGamma_t^{-1}.
\end{align*}
Therefore, we have the other identity
\begin{align*}
    \sigma_{u, t}^2(\bx) = \lambda \norm{\phi(\bx, u)}_{\bGamma_t^{-1}}^2.
\end{align*}
\end{proof}

\section{Miscellaneous}

\subsection{Algorithms}
\label{appendix: algorithms}

\begin{algorithm}[ht]
\caption{\texttt{LK-GP-UCB}}\label{alg: LK-GP-UCB}
\begin{algorithmic}[1]
   \State {\bfseries Input:} $T$, $\lambda$, $\{\beta_t\}_{t=1}^T$
   \State {\bfseries Initialization:} $\mu_{u, 0}(\bx)$, $\sigma_{u, 0}(\bx)$
   \For{$t = 1,...,T$}
        \State Observe user $u_t$ and arm set $\Dc_t$.
        \State Select arm 
        $\bx_t = \argmax_{\bx \in \Dc_t} \mu_{u_t, t-1}(\bx) + \beta_{t} \sigma_{u_t,t-1}(\bx)$.
        \State Receive feedback $y_t = f(\bx_t, u_t) + \eps_t$.
        \State Update $\mu_{u_t,t}(\bx)$ and $\sigma_{u_t,t}^2(\bx)$. 
        
   \EndFor
\end{algorithmic}
\end{algorithm}

\begin{algorithm}[ht]
\caption{\texttt{LK-GP-TS}}\label{alg: LK-GP-TS}
\begin{algorithmic}[1]
   \State {\bfseries Input:} $T$, $\lambda$, $\{\nu_t\}_{t=1}^T$
   \State {\bfseries Initialization:} $\mu_{u, 0}(\bx)$, $\sigma_{u, 0}(\bx)$
   \For{$t = 1,...,T$}
        \State Observe user $u_t$ and arm set $\Dc_t$.
        \State Sample $\mutil_{t}(\bx)$ from $\Nc(\mu_{u_t,t-1}(\bx), \nu_t^2 \sigma_{u_t,t-1}^2(\bx))$ for all $\bx \in \Dc_t$
        \State Select arm 
        $\bx_t = \argmax_{\bx \in \Dc_t} \mutil_{t}(\bx)$.
        \State Receive feedback $y_t = f(\bx_t, u_t) + \eps_t$.
        \State Update $\mu_{u_t,t}(\bx)$ and $\sigma_{u_t,t}^2(\bx)$. 
   \EndFor
\end{algorithmic}
\end{algorithm}

\subsection{Recursive Update of Posterior Mean and Variance}
\label{appendix: recursive update of posterior mean and variance}

This sections refers to the derivation of incremental update of the posterior mean and posterior variance~\cite{chowdhury2017kernelized}, via the properties of Schur complement. Recall that we need to handle the inversion of $\bSigma_t = \bI + \lambda \bK_t \in \BR^{t \times t}$ which grows with the number of rounds. To compute the inversion of $\bSigma_t$ efficiently, we use the recursive formula from $\bSigma_{t-1}$ by block matrix inverse formula
\begin{align}
\label{eq: update for Sigma inversion}
    \bSigma_t^{-1} = 
    \begin{bmatrix}
        \bM_{11,t} &  \bM_{12, t} \\
        \bM_{12, t}^{\top} & d_{t}^{-1}
    \end{bmatrix}
\end{align}
where 
\begin{equation}
\begin{aligned}
    \bM_{11,t} 
    &= \bSigma_{t-1}^{-1} + d_{t}^{-1} \bG_t \\
    \bM_{12, t} 
    &= -d_{t}^{-1} \bSigma_{t-1}^{-1} \bk_{t-1}(\bx_t, u_t)
\end{aligned}
\end{equation}
and 
\begin{align*}
    d_{t} 
    &= 
    K((\bx_t, u_t), (\bx_t, u_t)) - \bk_{t-1}(\bx_t, u_t)^{\top} \bSigma_{t-1}^{-1} \bk_{t-1}(\bx_t, u_t) + \lambda = \sigma_{u_t, t-1}^2(\bx_t) + \lambda \\
    \bG_t
    &= 
    \bSigma_{t-1}^{-1} \bk_{t-1}(\bx_t, u_t) \bk_{t-1}(\bx_t, u_t)^{\top} \bSigma_{t-1}^{-1}
\end{align*}
Here $d_{t}$ is the Schur complement.

Thus we have the posterior mean using \eqref{eq: update for Sigma inversion}
\begin{align*}
    \mu_{u,t}(\bx)
    = &  \bk_t(\bx, u)^{\top} \bSigma_t^{-1} \by_t \\
    = &
    \begin{bmatrix}
        \bk_{t-1}(\bx, u)^{\top} & K((\bx, u), (\bx_t, u_t))
    \end{bmatrix} 
    \begin{bmatrix}
        \bM_{11,t} &  \bM_{12, t} \\
        \bM_{12, t}^{\top} & d_{t}^{-1}
    \end{bmatrix}
    \begin{bmatrix}
        \by_{t-1} \\
        y_t
    \end{bmatrix} \\
    = & 
    \bk_{t-1}(\bx, u)^{\top} \bM_{11,t} \by_{t-1} + 
    K((\bx, u), (\bx_t, u_t)) \bM_{12, t}^{\top} \by_{t-1} \\
    \qquad \qquad & 
    + \bk_{t-1}(\bx, u)^{\top} \bM_{12, t} y_t + 
    K((\bx, u), (\bx_t, u_t)) d_{t}^{-1} y_t \\
    = & 
    \underbrace{\bk_{t-1}(\bx, u)^{\top} \bSigma_{t-1}^{-1} \by_{t-1}}_{\mu_{u,t-1}(\bx)} + d_{t}^{-1} ( \beta_1 \by_{t-1}  - \beta_2 \by_{t-1}  - \beta_3 y_t + \beta_4 y_t )
\end{align*}
where 
\begin{align*}
    \beta_1 &= \bk_{t-1}(\bx, u)^{\top} \bG_t  
    \Rightarrow
    \beta_1 \by_{t-1} = \Big( \bk_{t-1}(\bx, u)^{\top} \bSigma_{t-1}^{-1} \bk_{t-1}(\bx_t, u_t) \Big) \mu_{u_t,t-1}(\bx_t) \\
    \beta_2 &= K((\bx, u), (\bx_t, u_t))  \bk_{t-1}(\bx_t, u_t)^{\top} \bSigma_{t-1}^{-1}  
    \Rightarrow
    \beta_2 \by_{t-1} = K((\bx, u), (\bx_t, u_t)) \mu_{u_t,t-1}(\bx_t) \\
    \beta_3 &= \bk_{t-1}(\bx, u)^{\top}  \bSigma_{t-1}^{-1} \bk_{t-1} \\
    \beta_4 &=  K((\bx, u), (\bx_t, u_t)) .
\end{align*}
Thus we have the recursive update of posterior mean
\begin{align*}
    \mu_{u,t}(\bx) 
    &= \mu_{u,t-1}(\bx) + d_{t}^{-1}  \\
    \qquad \qquad & \times
    \Big( \bk_{t-1}(\bx, u)^{\top} \bSigma_{t-1}^{-1} \bk_{t-1}(\bx_t, u_t) (\mu_{u_t,t-1}(\bx_t) - y_t) +  K((\bx, u), (\bx_t, u_t)) (y_t - \mu_{u_t,t-1}(\bx_t))
    \Big) \\
    &= \mu_{u,t-1}(\bx) + d_{t}^{-1} q_{t-1}((\bx, u), (\bx_t, u_t)) (y_t - \mu_{u_t,t-1}(\bx_t))
\end{align*}
where $q_{t-1}((\bx, u), (\bx_t, u_t))$ is defined from
\begin{align*}
    q_{t}((\bx, u), (\bx^{\prime}, u^{\prime})) 
    = K((\bx, u), (\bx^{\prime}, u^{\prime})) - \bk_{t}(\bx, u)^{\top} \bSigma_{t}^{-1} \bk_{t}(\bx^{\prime}, u^{\prime})
\end{align*}
which can be transferred into a recursive form using \eqref{eq: update for Sigma inversion}
\begin{align*}
    & q_{t}((\bx, u), (\bx^{\prime}, u^{\prime}))  \\
    = 
    & K((\bx, u), (\bx^{\prime}, u^{\prime})) - 
    \Big(
    \bk_{t-1}(\bx, u)^{\top} \bSigma_{t-1}^{-1} \bk_{t-1}(\bx^{\prime}, u^{\prime}) \\
    & \qquad 
    + d_{t}^{-1} ( \beta_1 \bk_{t-1}(\bx^{\prime}, u^{\prime})  - \beta_2 \bk_{t-1}(\bx^{\prime}, u^{\prime})  - \beta_3 K((\bx_t, u_T), (\bx^{\prime}, u^{\prime})) + \beta_4 K((\bx_t, u_T), (\bx^{\prime}, u^{\prime})) )
    \Big) \\
    = & q_{t-1}((\bx, u), (\bx^{\prime}, u^{\prime}))
    - d_{t}^{-1} q_{t-1}((\bx, u), (\bx_t, u_t)) q_{t-1}((\bx_t, u_t), (\bx^{\prime}, u^{\prime})).
\end{align*}
Now using the incremental update of the posterior covariance, we can easily obtain the recursive update for the posterior variance
\begin{align*}
    \sigma_{u,t}^2(\bx) = \sigma_{u,t-1}^2(\bx) - d_{t}^{-1} q_{t-1}^2((\bx, u), (\bx_t, u_t)).
\end{align*}
Now replace $d_{t}$ by $\sigma_{u_t, t-1}^2(\bx_t) + \lambda$ and we achieve the recursive updates in \eqref{eq: recursive update of GP posterior}. 


\section{Supplement to Experiments}
\label{appendix: supplement to experiments}

This appendix provides full details of our synthetic environments, algorithm configurations, hyperparameter selection, implementation choices, ablations, and reporting protocol.

\subsection{Synthetic Environments}
\label{appendix: synthetic environments}

Let $\Uc=\{1,\dots,n\}$ denote users, $\Dc \subset\BR^d$ the arm (context) space, and $M_t:=|\Dc_t|$ the number of candidates shown at round $t$. We draw a global normalized context pool $\Dc=\{\bx^{(1)},\ldots,\bx^{(m)}\}$ with $\bx^{(i)} \sim \Nc(\bZeros,\bI_d)$ and $\bx^{(i)}\leftarrow\bx^{(i)}/\|\bx^{(i)}\|$. At round $t$ we present $\Dc_t$ by sampling $M_t$ distinct items from $\Dc$ without replacement. One user $u_t$ is served per round, drawn uniformly from $\Uc$ unless stated otherwise. Rewards are observed with additive noise $y_t=f(\bx_t,u_t)+\eps_t$.
We generate graphs, contexts, and ground-truth rewards under one linear regime (\emph{Linear–GOB}) and two kernelized regimes (\emph{Laplacian–Kernel} using GP draw and representer draw).

\textbf{User graph.}
We consider two graph random generators on $\Uc$. First random graph family is Erd\H{o}s--R\'enyi (ER) random graphs: each (undirected) edge is present with probability $p$ and weights $w_{ij}=1$. We set $p=0.2$ in our experiment. 
Second one is Radial basis function(RBF) random graphs: sample latent $\bz_i \sim \Nc(\bZeros,\bI_q)$, set $w_{ij}=\exp(-\rho_L\|\bz_i-\bz_j\|_2^2)$, and sparsify by keeping edges with $w_{ij} \ge s$. We choose $s = 0.1$, $\rho_L=0.1$ and $q=4$ in our simulation. 

\textbf{Task Design.} We design different level of the task. The simplest case is $(M,M_t,n,d,T)=(10,5,20,5,1000)$. This is a $10$-arm bandit problem with $50\%$ viewability at each round for all users. The medium level is $(M,M_t,n,d,T)=(20,5,20,10,3000)$ which leads to a $20$-arm bandit problem with $25\%$ viewability at each round for all users. We also have the toughest case using $(M,M_t,n,d,T)=(50,5,20,20,3000)$ which leads to a $50$-arm bandit problem with $10\%$ viewability at each round for all users.
$\sigma$ is set as $0.1$ unless additional specification.

\textbf{Practical scenarios.} Although our empirical study uses synthetic environments, the multi-user, graph-based bandit setting we consider is motivated by several practical applications. Examples include recommendation systems, where users are connected via social or similarity graphs and repeatedly interact with a common catalog of items; regional personalization problems, where stores or geographic areas form a graph and the arms correspond to assortments or pricing actions; and applications in healthcare or education, where patients or students are linked through similarity networks while treatments or exercises constitute the arm set. In such domains, the proposed Laplacian kernelized bandits can leverage the user graph to share statistical strength while capturing non-linear context effects.

\subsubsection{Regime~1: \emph{Linear–GOB} (graph-smooth linear rewards)}
\label{appendix: regime_linear}

Sample initial user parameters $\bTheta_0 \in \BR^{n \times d}$ with rows $\btheta_{0,i} \sim \Nc(\bZeros,\bI_d)$. Enforce the graph homophily via Tikhonov smoothing\cite{yankelevsky2016dual}:
\[
\bTheta \;=\; \arg\min_{\tilde\bTheta}\ \|\tilde\bTheta-\bTheta_0\|_F^2+\eta\,\mathrm{tr}(\tilde\bTheta^\top \bL \tilde\bTheta)
\;=\;(\bI_n+\eta\bL)^{-1}\bTheta_0. 
\]
Thus $f(\bx,u)=\bx^\top\btheta_u$, where $\btheta_u$ is row $u$ of $\bTheta$. The strength of 
the graph homophily $\eta$ is set as $1.0$ as default.

\subsubsection{Regime~2: \emph{Laplacian-Kernel}}
\label{appendix:  regime laplacian kernel}

Our choice of the base kernel $K_x$ over arms is \emph{Squared Exponential} which are defined as
\[
K_{\text{SE}}(\bx, \bx^{\prime}) = \exp(-\norm{\bx-\bx^{\prime}}^2/2 \ell^2)
\]
where length-scale $\ell >0 $ and is set to be $1.0$ in our experiment. 
Then we construct the \emph{multi-user kernel} by the definition:
\[
K((\bx,u),(\bx',u')) \;=\; [\bL_\rho^{-1/2}]_{u,u'}\,K_x(\bx,\bx')
\]
where we set $\rho=0.01$ in our experiment. 

\textbf{Option A: \emph{Laplacian-Kernel} with GP draw}

We draw the joint values $\{f(\bx,u)\}_{u\in\Uc, \bx \in \Dc}$ from the zero-mean GP with covariance induced by $K$ and fix $f$ by interpolation on $\Dc\times\Uc$. Noise is $\eps_t \sim \Nc(0,\sigma^2)$ with $\sigma = 0.01 \cdot \mathrm{range}(f)$.

\textbf{Option B:  \emph{Laplacian-Kernel} with representer draw}
We consider the representer theorem for RKHS and sample the i.i.d.\ coefficients via $\alpha_{\bx,u} \sim \Nc(0,1)$ on $\Dc \times \Uc$ and set
\[
f(\bx,u) \;=\; \sum_{u^{\prime} \in\Uc, \bx^{\prime} \in \Dc}  \alpha_{\bx,u} \,K\big((\bx,u),(\bx',u')\big).
\]

\vspace{-2mm}
\subsection{Baselines}
\label{appendix: baselines}

All methods face the \emph{same} sequence $\{u_t,\Dc_t,\eps_t\}_{t=1}^T$ in each trial of each synthetic environment to ensure a fair comparison. Our experiment include the following baselines.

\textbf{Per-User LinUCB(no graph).}: We implement \texttt{Per-User LinUCB}, which ignores the whole graph and perform the linear bandit algorithm independently on each user.

\textbf{Pooled LinUCB(no graph).}: We implement \texttt{Pooled LinUCB}, which ignores graph and personalization by treating the multi-user problem as a single agent bandit problem. Simply speaking, there is global linear UCB algorithm to solve the problem.

\textbf{GP-UCB(no graph).} We implement \texttt{GP-UCB}\cite{chowdhury2017kernelized}, which is the \texttt{IGP-UCB} from the previous study on GP and UCB \cite{chowdhury2017kernelized}. 
This is a kernelized baseline using $K_x$ on arms only, ignoring the similarities across users (the Laplacian).

\textbf{GoB.Lin.} We implement \texttt{GoB.Lin}, which is the classical methods in gang-og-bandits problem~\cite{cesa2013gang}. This is a Laplacian-regularized linear UCB algorithm on graph-whitened features (equivalent to \texttt{GraphUCB} with $\rho=1$ i.e $\bA=\bI+\bL$). The confidence scale in the algorithm is tuned from the table.

\textbf{GraphUCB.} We implement \texttt{GraphUCB}\cite{yang2020laplacian}, the Laplacian-regularized LinUCB. Also, the confidence scale in the algorithm is tuned from the table.

\textbf{COOP-KernelUCB.} We implement \texttt{COOP-KernelUCB}\cite{dubey2020kernel}, which utilizes the product kernel over agents $\times$ arms. Here we borrow the notations from their work. We consider five choices of $K_z$ (presented below); the full kernel is $K=K_z \otimes K_x$ and we apply the same UCB rule in \texttt{LK-GP-UCB}.

The five PSD options for the agent kernel $K_z$:
\begin{enumerate}
  \item \textbf{laplacian\_inv}: $\bK_z \;=\; (\bL+\rho \bI)^{-1},\qquad \rho>0$.
  \item \textbf{heat}: $\bK_z=\exp(-\tau \bL)$ via the spectral decomposition of $L$.
  \item \textbf{spectral\_rbf}: embed nodes using the $k$ lowest nontrivial Laplacian eigenvectors $Z\in\mathbb{R}^{n\times k}$ and set
  \[
  K_z[u,u'] \;=\; \exp\!\Big(\!-\tfrac{\|Z_u-Z_{u'}\|^2}{2\sigma_z^2}\Big).
  \]
  \item \textbf{all ones}: full cooperation, $\bK_z=\mathbf{1}\mathbf{1}^\top$.
  \item \textbf{learned\_mmd} (\emph{network contexts}, faithful to \cite{dubey2020kernel}): define per-user kernel mean embeddings $\Psi_u$ of the observed arm-context distribution in $\Hc_{x}$, and let
  \[
  K_{z,t}(u,u') \;=\; \exp\!\Big(\!-\tfrac{\|\hat\Psi_u(t)-\hat\Psi_{u'}(t)\|^2}{2\sigma_z^2}\Big),
  \]
  where $\hat\Psi_u(t)$ is the empirical mean embedding of contexts observed for user $u$ up to time $t$.
  In our implementation we use an efficient random Fourier feature approximation for $K_x$ and update $K_{z,t}$ on a fixed schedule; users with fewer than a small threshold of observations cooperate only with themselves (diagonal entries).
\end{enumerate}
For time-varying $K_z$ (learned\_mmd), the GP state is rebuilt at $K_z$ refresh points using the algorithm’s own history, ensuring consistency of the Gram matrix with the current kernel. FIgure shows the comparison of the choice of $K_z$ for \texttt{COOP-KernelUCB}. 

\begin{figure}[ht]
    \centering
    \includegraphics[height = 4.6cm]
    {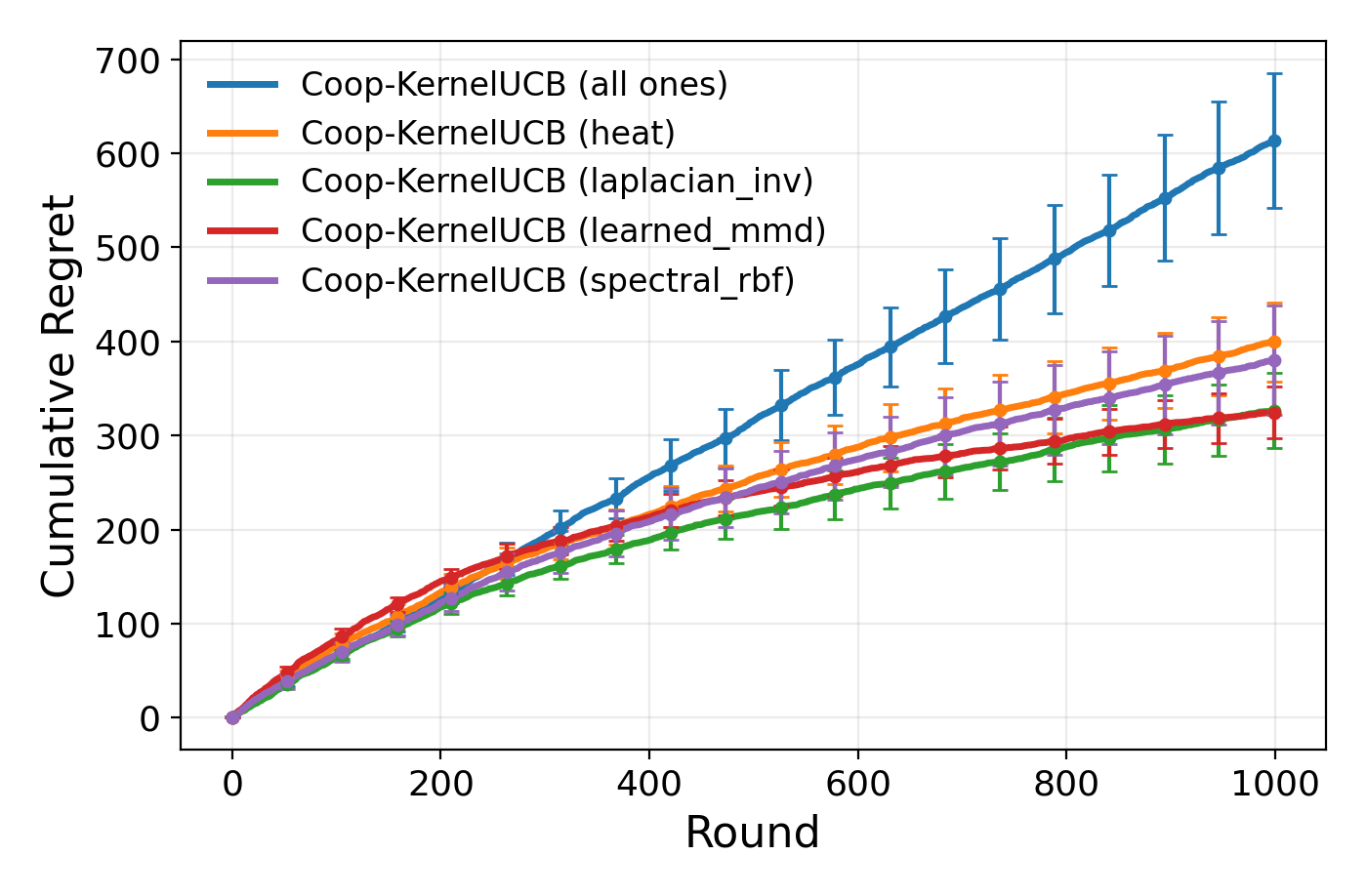}
    \caption{Comparison of the choice of user-similarity kernel for \texttt{COOP-KernelUCB}.} 
    \label{figure: comparison for CoopKernel}
\end{figure}

\subsection{Centralized Protocol}
\label{appendix: centralized protocol}
\vspace{-2mm}

At each $t$: sample $u_t\sim\mathrm{Unif}(\Uc)$, present $\Dc_t$ (size $M_t$), select $\bx_t\in\Dc_t$ per the algorithm, observe $y_t$, update our decision policy(model), and record $\Delta_t=\max_{\bx\in\Dc_t} f(\bx,u_t)-f(\bx_t,u_t)$. Each configuration is repeated for $R$ trials (final results use $R=20$; preliminary/pilot tuning uses $R\in[5,10]$).

\subsection{Posterior Updates and Numerical Details}
\label{appendix: posterior updates and numerical details}
\vspace{-2mm}

\textbf{Motivation.}
For the original update~\eqref{eq: GP posterior estimators} at round $t$, the inversion takes $\Oc(t^3 |\Dc_t|)$ time. The practical updates is efficient for each pair $(\bx, u)$ while it requires the updates for all pairs, leading to $\Oc(|\Dc_t| n)$ time. Therefore, high-level idea is to perform original updates~\eqref{eq: GP posterior estimators} when $t \leq n^{1/3} $ and perform practical updates~\eqref{eq: recursive update of GP posterior} when when $t \leq n^{1/3}$.
Therefore, for our GP-based methods we use a hybrid implementation, which is described as below. 

\textbf{Exact (Cholesky) phase:} maintain $\bSigma_t=K_t+\lambda\bI$ and update via rank-one Cholesky for $t < t_*$ (cost $O(t^2)$ per step; initial inversion $O(t^3)$).

\textbf{Recursive phase:} switch to the rank-one recursions in \eqref{eq: recursive update of GP posterior}, with $q_0=K$ restricted to $\Dc \times \Uc$. This costs $O(n |\Dc_t|)$ per update when applied to the whole grid $\Dc \times \Uc$.

By default we take $t_\star=\min\{1500,\ \lfloor n^{1/3}\rfloor |\Dc|\}$ as the phase switch. We use Cholesky jitter $10^{-8}$, clip negative variances to zero, and cache $K_x(\Dc,\Dc)$. For large $n$ we optionally apply graph spectral truncation $\bL_\rho\approx\bU_r\Lambda_r\bU_r^\top$ (top-$r$ eigenpairs), yielding $K\approx(\bU_r\Lambda_r^{-1}\bU_r^\top)\otimes K_x$.

\subsection{Hyperparameters and Tuning}
\label{appendix: hyperparameter tuning}

\textbf{What is fixed across algorithms.}
For fairness, \emph{base-kernel} hyperparameters are fixed inside each environment: the length-scale $\ell$ uses the median heuristic on $\Dc$, and the Laplacian ridge $\rho=0.1$ is fixed.
For $K_z$, \emph{laplacian\_inv} uses $\rho=0.1$; \emph{heat} uses $\tau=1.0$; \emph{spectral\_rbf} uses $k=8$ and median bandwidth; \emph{learned\_mmd} uses random-feature dimension $256$, a median bandwidth heuristic, update interval around $200$ rounds, and a minimum count of $5$ observations before a user participates in cooperation.

\textbf{What is design.}
To avoid using unknown noise scale as a prior, all GP-style methods use a graph- and time-aware ridge schedule
\[
\lambda_t \;=\; \lambda_{\text{base}}\cdot S_{\text{spec}}\cdot \frac{T}{T+t},\qquad
S_{\text{spec}} \;=\; \frac{\lambda_2(\bL)}{\lambda_{\max}(\bL)}\in[0,1],
\]
where $\lambda_{Fiedler}(\bL)$ is the Fiedler value (smallest non-zero eigenvalue). We clip $\lambda_t$ to $[\lambda_{\min},\lambda_{\max}]$ with $\lambda_{\min}=10^{-6}$ and $\lambda_{\max}=10^{-1}$. To limit refactorizations, we update $\lambda$ on a doubling epoch schedule (approximately at $t\!\approx\!200,400,800,\ldots$) and only rebuild if the change exceeds $20\%$.

\textbf{What is tuned.}
Only the exploration scales are tuned by grid search on a pilot horizon ($T_{\text{pilot}}=1500$ for medium/hard; $T_{\text{pilot}}=1000$ for simple) using $R_{\text{pilot}}\in\{5,10\}$:
\begin{center}
\begin{tabular}{l l}
\toprule
Algorithm & Grid (pilot) \\
\midrule
\texttt{LK-GP-UCB}, \texttt{GP-UCB}, \texttt{Coop-KernelUCB} & $\beta\in\{0.5,1,2,4\}$ \\
\texttt{LK-GP-TS} & $\nu\in\{0.5,1,2,4\}$ \\
\texttt{GOB.Lin}, \texttt{GraphUCB}, \texttt{LinUCB} variants & $\alpha\in\{0.5,1,2,4\}$ \\
\bottomrule
\end{tabular}
\end{center}
The best pilot setting (by mean pilot cumulative regret) is then \emph{frozen} for the full-horizon evaluation. 
Noise/ridge $\lambda_{\text{base}}$ in GP updates uses $\lambda_{\text{base}}\in\{0.001, 0.005, 0.01, 0.05, 0.1\}$ on the pilot.

\subsection{Ablations and Stress Tests}
\label{app:ablations}

We report two ablation studies. One is an ablation under the \emph{medium, Laplacian-Kernel with GP Draw} environment (ER graph, fixed $\ell$ and $\rho$) on \textbf{Scalability in users ($n$):} $n\in\{20,\,50,\,100,\,200\}$ with fixed $(M,M_t,d,T)$ and graph generator. We provide the final cumulative regret vs.\ $n$ an report the last step cumulative regret in Table~\ref{tab:abl_n}. Another study is on the effect of random graph models. Our standard experiment uses two graph random generators: Erd\H{o}s--R\'enyi (ER) random graphs and the Radial basis function(RBF) random graphs, mentioned in \ref{appendix: synthetic environments}. We add the stochastic block models(SBM) in this ablation study. We still keep the \emph{medium, Laplacian-Kernel with GP Draw} environment.  The result is shown in Figure~\ref{figure: graph model comparison}. 

\begin{table}[ht]
\centering
\caption{Ablation over number of users $n$ (final cumulative regret; mean$\pm$SE).}
\begin{tabular}{l c c c c}
\toprule
Algorithm & $n=20$ & $n=50$ & $n=100$ & $n=200$ \\
\midrule 
LK-GP-UCB & $627.22 \pm 32.98$ & $892.43 \pm 21.73$ & $1062.69 \pm 18.29$ & $1157.74 \pm 23.02$ \\ 
LK-GP-TS & $634.46 \pm 22.78$ & $943.41 \pm 19.56$ & $1176.23 \pm 15.77$ & $1260.35 \pm 16.23$ \\ 
Coop-KernelUCB & $730.06 \pm 31.02$ & $1015.35 \pm 22.18$ & $1273.28 \pm 17.36$ & $1358.48 \pm 14.22$ \\
GOB.Lin & $1092.86 \pm 71.70$ & $1203.32 \pm 18.57$ & $1370.51 \pm 16.78$ & $1432.48 \pm 18.72$ \\
GraphUCB & $1105.20 \pm 68.54$ & $1192.30 \pm 22.12$ & $1360.02 \pm 15.32$ & $1453.21 \pm 17.81$ \\ 
GP-UCB & $2222.20 \pm 90.26$ & $1964.65 \pm 61.40$ & $1641.43 \pm 37.43$ & $1444.83 \pm 36.33$ \\
Pooled-LinUCB & $2360.95 \pm 70.55$ & $1909.81 \pm 49.49$ & $1723.27 \pm 40.23$ & $1438.74 \pm 26.44$ \\ 
PerUser-LinUCB & $1117.87 \pm 72.04$ & $1221.99 \pm 22.03$ & $1432.89 \pm 18.81$ & $1527.04 \pm 17.61$ \\
\bottomrule
\end{tabular}
\label{tab:abl_n}
\end{table}

\begin{figure}[ht]
    \centering
    \includegraphics[height = 3.2cm]
    {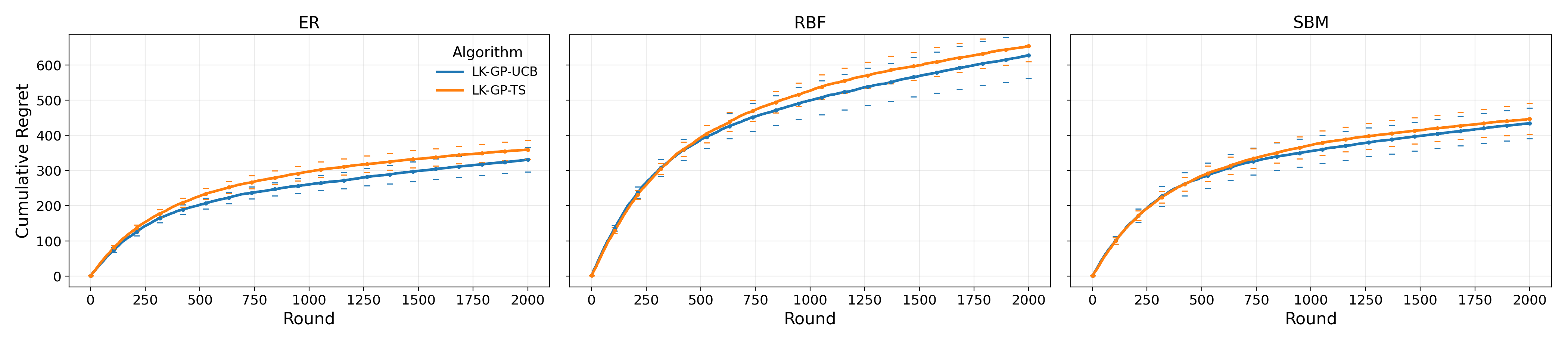}
    \caption{Comparison of the choice of random graph models.} 
    \label{figure: graph model comparison}
\end{figure}

\end{document}